\theoremstyle{plain}
\newtheorem{theorem}{Theorem}[section]
\newtheorem{lemma}[theorem]{Lemma}
\theoremstyle{definition}
\newtheorem{definition}[theorem]{Definition}
\theoremstyle{remark}
\def\R{\mathbb{R}}
\def\ind{\mathbbm{1}}
\def\audit{simple\_audit}
\title{Auditing Local Explanations is Hard}
\author{%
  Robi Bhattacharjee \\
  University of Tübingen and Tübingen AI Center \\
  \texttt{robi.bhattacharjee@wsii.uni-tuebingen.de} 
   \And
   Ulrike von Luxburg \\
   University of Tübingen and Tübingen AI Center \\
   \texttt{ulrike.luxburg@uni-tuebingen.de} 
}
\begin{document}

\maketitle

\begin{abstract}
In sensitive contexts, providers of machine learning algorithms are increasingly required to give explanations for their algorithms' decisions. However, explanation receivers might not trust the provider, who potentially could output misleading or manipulated explanations. In this work, we investigate an auditing framework in which a third-party auditor or a collective of users attempts to sanity-check explanations: they can query model decisions and the corresponding local explanations, pool all the information received, and then check for basic consistency properties. We prove upper and lower bounds on the amount of queries that are needed for an auditor to succeed within this framework. Our results show that successful auditing requires a potentially exorbitant number of queries -- particularly in high dimensional cases. Our analysis also reveals that a key property is the ``locality'' of the provided explanations --- a quantity that so far has not been paid much attention to in the explainability literature. Looking forward, our results suggest that for complex high-dimensional settings, merely providing a pointwise prediction and explanation could be insufficient, as there is no way for the users to verify that the provided explanations are not completely made-up.

\end{abstract}

\section{Introduction}\label{sec:intro}

\begin{figure}

\centering

  \begin{subfigure}[t]{.45\linewidth}
    \includegraphics[width=\linewidth]{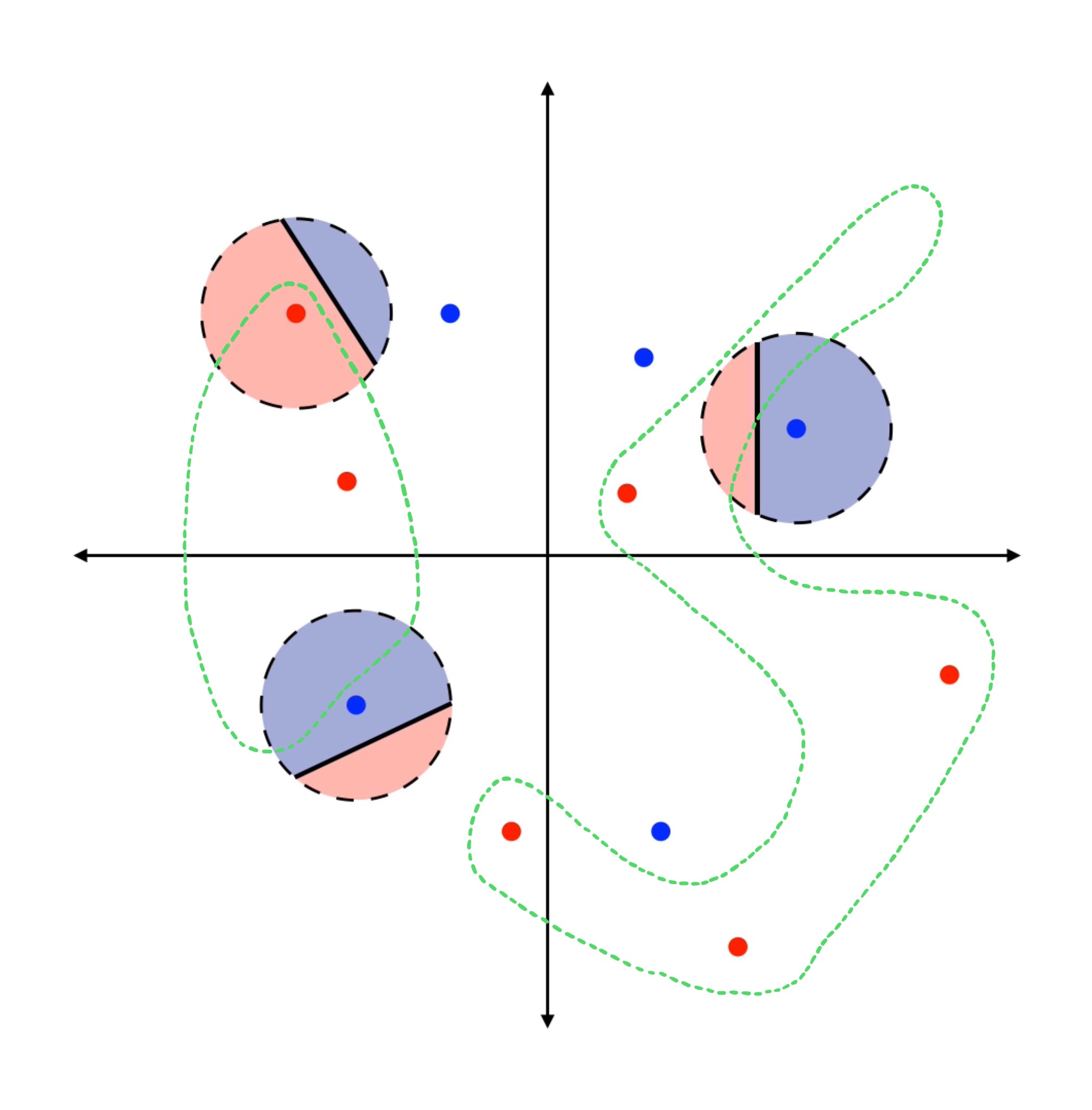}
    \caption{Insufficient data for auditing }
  \end{subfigure}
    \hfill
   \begin{subfigure}[t]{.45\linewidth}
    \includegraphics[width=\linewidth]{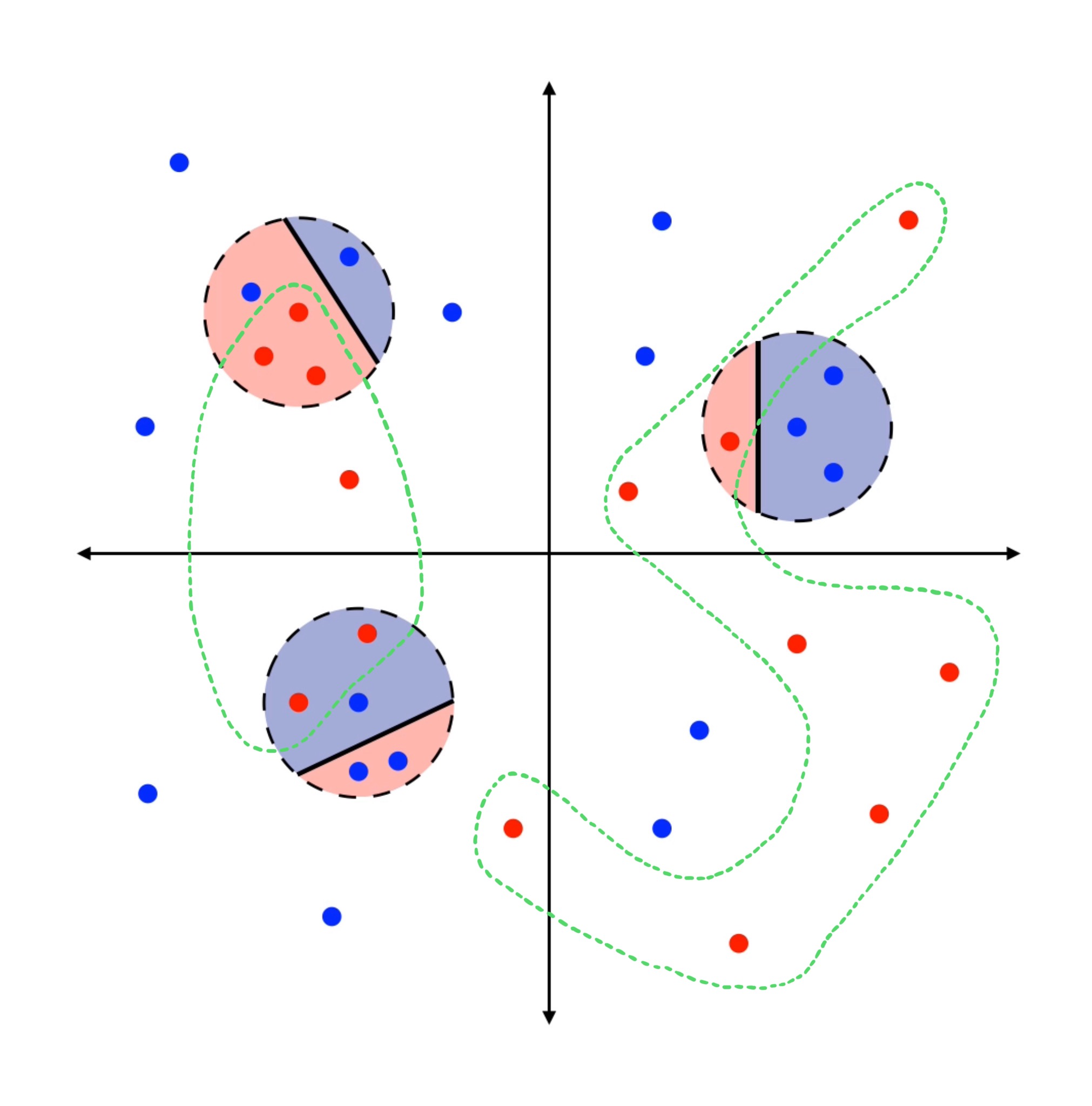}
    \caption{Sufficient data for auditing}
  \end{subfigure} 
  \caption{{\bf Local explanations} (see Section \ref{subsec-explanations} for notation): a set of data points $x$, their classifications $f(x)$ (red/blue, decision boundary in green), and locally linear explanations $g_x$ at three data points. The explanations $g_x$ are supposed to approximate the behavior of the classifier $f$ within their respective regions $R_x$ (the balls).   
  {\it Panel (a)} depicts a regime where there is \textit{insufficient} data for verifying how accurate the local explanations approximate the classifier $f$. {\it Panel (b)} shows explanations that are based on more data (more points in the regions $R_x$), which allows to the audit the explainer.} \label{fig:page_2_fig}
\end{figure}


Machine learning models are increasingly used to support decision making in  sensitive contexts such as credit lending, hiring decisions, admittance to social benefits, crime prevention, and so on. In all these cases, it would be highly desirable for the customers/applicants/suspects to be able to judge whether the model's predictions or decisions are ``trustworthy''. New AI regulation such as the European Union's AI Act can even legally require this. One approach that is often held up as a potential way to achieve transparency and trust is to provide \textit{local explanations}, where every prediction/decision comes with a human-understandable explanation for this particular outcome (e.g., LIME~\citep{Ribeiro16}, SHAP~\citep{Lundberg17}, or Anchors~\citep{Ribeiro18}). 
%

However, in many real-world scenarios, the explanation receivers may not necessarily trust the explanation providers~\citep{BorFinRaiLux22}. Imagine a company that uses machine learning tools to assist in screening job applications. Because the company is well-advised to demonstrate fair and equitable hiring, it is plausible that it might bias its explanations towards depicting these properties. And this is easy to achieve: the company is under full control of the machine learning model and the setup of the explanation algorithm, and prior literature~\citep{Ghorbani19, Dombrowski19} has shown that current explainability tools can be manipulated to output desirable explanations.  

This motivates the question: what restrictions or procedures could be applied to prevent such explanation cheating, and more specifically, what are ways to verify that the provided explanations are actually trustworthy? One approach is to require that the explanation providers  completely publicize their models, thus allowing users or third-party regulators to verify that the provided explanations are faithful to the actual model being used. However, such a requirement would likely face stiff resistance in settings where machine learning models are valuable intellectual property. 

In this work, we investigate an alternative approach, where a third-party regulator or a collective of users attempt to verify the trustworthiness of local explanations, simply based on the predictions and explanations over a set of examples. The main idea is that by comparing the local explanations with the actual predictions across enough data one could, in principle, give an assessment on whether the provided explanations actually adhere to the explained model. The goal of our work is to precisely understand when this is possible. 

\subsection{Our contributions: data requirements for auditing.}

We begin by providing a general definition for local explainability that encompasses many popular explainability methods such as Anchors~\citep{Ribeiro18}, Smooth-grad~\citep{Smilkov17}, and LIME~\citep{Ribeiro16}. We define a \textit{local explanation} for a classifier $f$ at a point $x$ as a pair $(R_x, g_x)$, where $R_x$ is a local region surrounding $x$, and $g_x$ is a simple local classifier designed to approximate $f$ over $R_x$. For example, on continuous data, Anchors always output $(R_x, g_x)$ where $R_x$ is a hyper-rectangle around $x$ and $g_x$ is a constant classifier;  gradient-based explanations such as Smooth-grad or LIME implicitly approximate the decision function $f$ by a linear function in a local region around $x$.  
Obviously, any human-accessible explanation that is being derived from such a local approximation can only be trustworthy if the local function $g_x$ indeed approximates the underlying function $f$ on the local region $R_x$. 
Hence, a \textit{necessary condition} for a local explanation to be trustworthy is that the function $g_x$ is close to $f$ on the region $R_x$, and this should be the case for most data points $x$ sampled from the underlying distribution.  

To measure how closely a set of local explanations adheres to the original classifier $f$, we propose an explainability loss function $L_\gamma(E, f)$, which quantifies the frequency with which $f$ differs by more than $\gamma$ from the local classifier $g_x$ over the local region $R_x$ (see Sec. \ref{subsec-explanations} for precise definitions). 
We then introduce a formalism for \textit{auditing local explanations} where an auditor attempts to estimate the explainability loss $L_\gamma(E, f)$. In our formalism, the auditor does so with access to the following objects: 
\begin{enumerate}
	\item A set of data points $X = \{x_1, \dots, x_n\}$ drawn i.i.d from the underlying data distribution.
	\item The outputs of a classifier on these points, $f(X) = \{f(x_1), \dots, f(x_n)\}$.
	\item The provided local explanations for these points $E(f, X) = \{E(f, x_1), \dots, E(f, x_n)\}$
\end{enumerate}

Observe that in our formalism, the auditor has only restricted access to the machine learning model and the explanations: they can only interact with them through their evaluations at specific data-points. We have chosen this scenario because we believe it to be the most realistic one in many practical situations, where explanation providers try to disclose as little information on their underlying machine learning framework as possible.

In our main result, Theorem \ref{thm:lower_bound}, we provide a lower bound for the amount of data needed for an auditor to accurately estimate $L_\gamma(E, f)$. A key quantity in our analysis is the \textit{locality} of the provided explanations. We show that the smaller the provided local regions $R_x$ are, the more difficult it becomes to audit the explainer. Intuitively, this holds because estimating the explainability loss relies on observing multiple points within these regions, as illustrated in Panel (b) of Figure \ref{fig:page_2_fig}. By contrast, if this fails to hold (Panel (a)), then there is no way to validate how accurate the local explanations are. We also complement our lower bound with an upper bound (Theorem \ref{thm:upper}) that demonstrates that reasonably large local regions enable auditing within our framework. 

Our results imply that the main obstacle to auditing local explanations in this framework is the locality of the provided explanations. As it turns out, this quantity is often  \textit{prohibitively small in practice}, making auditing \textit{practically impossible}. In particular, for high-dimensional applications, the local regions $R_x$ given by the explainer are often exponentially small in the data-dimension. Thus the explanations cannot be verified in cases where there does not exist any prior trust between the explanation provider and the explanation receivers. 

We stress that estimating the local loss $L_\gamma(E, f)$ serves as a first \textit{baseline} on the path towards establishing trustworthy explanations. It is very well possible that an explanation provider achieves a small local loss (meaning that the local classifiers closely match the global classifier $f$) but nevertheless provides explanations that are misleading in some other targeted manner. Thus, we view successful auditing in this setting as a \textit{necessary but not sufficient} condition for trusting an explanation provider. 

Our results might have far-reaching practical consequences. In cases where explanations are considered important or might even be required by law, for example by the AI Act, it is a necessary requirement that explanations can be verified or audited (otherwise, they would be completely useless). 
Our results suggest that in the typical high dimensional setting of modern machine learning, auditing pointwise explanations is impossible if the auditor only has access to pointwise decisions and corresponding explanations. In particular, collectives of users, for example coordinated by non-governmental organizations (NGOs), are never in the position to audit explanations. The only way forward in auditing explanations would be to appoint 
a third-party auditor who has more power and \textit{more  access to the machine learning model}, be it access to the full specification of the model function and its parameters, or even to the training data. Such access could potentially break the fundamental issues posed by small local explainability regions in our restricted framework, and could potentially enable the third party auditor to act as a moderator to establish trust between explanation receivers and explanation providers.

\subsection{Related Work}

Prior work~\citep{Yadav22, Bhatt20, Oala20, Poland22} on auditing machine learning models is often focused on applying explainability methods to audit the models, rather than the explanations themselves. However, there has also been recent work~\citep{Leavitt20, Zhou21} arguing for more rigorous ways to evaluate the performance of various explanation methods. There are numerous approaches for doing so: including performance based on human-evaluation \citep{Jesus21, Poursabzi21}, and robustness \citep{Melis18}.


There has also been a body of work that evaluates explanations based on the general notion of faithfulness between explanations and the explained predictor. Many approaches~\citep{Wolf19,Poppi21,Tomsett20} examine neural-network specific measures, and typically rely on access to the neural network that would not be present in our setting. Others are often specialized to a specific explainability tool -- with LIME~\citep{Visani22, Botari20} and Shap~\citep{Huang23} being especially popular choices. 

By contrast, our work considers a general form of local explanation, and studies the problem of auditing such explanations in a restricted access setting, where the auditor only interacts with explanations through queries. To our knowledge, the only previous work in a similar setting is \citep{Dasgupta22}, in which local explanations are similarly audited based on collecting them on a set of data sampled from a data distribution. However, their work is restricted to a \textit{discrete} setting where local fidelity is evaluated based on instances that receive \textit{identical} explanations. In particular, they attempt to verify that points receiving identical explanations also receive identical predictions. By contrast, our work lies within a \textit{continuous} setting, where a local explanation is said to be faithful if it matches the underlying model over a \textit{local region}.

A central quantity to our analysis is the \textit{locality} of an explanation, which is a measure of how large the local regions are. Prior work has rarely measured or considered this quantity, with a notable exception being Anchors method~\citep{Ribeiro18} which utilizes it to assist in optimizing their constructed explanations. However, that work did not explore this quantity beyond treating it as a fixed parameter.
%

\section{Local Explanations}

\subsection{Preliminaries}\label{sec:prelims}

In this work, we restrict our focus to \textit{binary classification} -- we let $\mu$ denote a data distribution over $\R^d$, and $f:\R^d \to \{\pm 1\}$ be a so-called black-box binary classifier that needs to be explained. We note that lower bounds shown for binary classification directly imply lower bounds in more complex settings such as multi-class classification or regression. 

For any measurable set, $M \subseteq \R^d$, we let $\mu(M)$ denote the probability mass $\mu$ assigns $M$. We will also let $supp(\mu)$ denote the \textit{support} of $\mu$, which is the set of all points $x$ such that $\mu\left(\{x': ||x - x'|| \leq r\}\right) > 0$ for all $r > 0$. 

We define a \textbf{hyper-rectangle} in $\R^d$ as a product of intervals, $(a_1, b_1] \times \dots \times (a_d, b_d]$, and let $\mathcal{H}_d$ denote the set of all hyper-rectangles in $\R^d$. We let $\mathcal{B}_d$ denote the set of all $L_2$-balls in $\R^d$, with the ball of radius $r$ centered at point $x$ being defined as $B(x, r) = \{x': ||x - x'|| \leq r\}$. 

We will utilize the following two simple hypothesis classes: $\mathcal{C}_d$, which is the set of the two constant classifiers over $\R^d$, and $\mathcal{L}_d$, which is the set of all linear classifiers over $\R^d$. These classes serve as important examples of \textit{simple and interpretable classifiers} for constructing local explanations.

\subsection{Defining local explanations and explainers}
\label{subsec-explanations}

One of the most basic and fundamental concepts in Explainable Machine Learning is the notion of a \textit{local explanation}, which, broadly speaking, is an attempt to explain a complex function's behavior at a specific point. In this section, we describe a general form that such explanations can take, and subsequently demonstrate that two widely used explainability methods, LIME and Anchors, adhere to it.

We begin by defining a \textit{local explanation} for a classifier at a given point.

\begin{definition}\label{defn:local_explanation}
For $x \in \R^d$, and $f: \R^d \to \{\pm 1\}$, a \textbf{local explanation} for $f$ at $x$ is a pair $(R_x, g_x)$ where $R_x \subseteq \R^d$ is a region containing $x$, and $g_x: R_x \to \{\pm 1\}$ is a classifier.
\end{definition}

Here, $g_x$ is typically a simple function intended to approximate the behavior of a complex function, $f$, over the region $R_x$. The idea is that the local nature of $R_x$ simplifies the behavior of $f$ enough to provide intuitive explanations of the classifier's local behavior.

Next, we define a \textit{local explainer} as a map that outputs local explanations. 

\begin{definition}
$E$ is a \textbf{local explainer} if for any $f: \R^d \to \{\pm 1\}$ and any $x \in \R^d$, $E(f, x)$ is a local explanation for $f$ at $x$. We denote this as $E(f, x) = (R_x, g_x)$. 
\end{definition}

We categorize local explainers based on the types of explanations they output -- if $\mathcal{R}$ denotes a set of regions in $\R^d$, and $\mathcal{G}$ denotes a class of classifiers, $\R^d \to \{\pm 1\}$, then we say $E \in \mathcal{E}(\mathcal{R}, \mathcal{G})$ if for all $f, x$, $E(f, x)$ outputs $(R_x, g_x)$ with $R_x \in \mathcal{R}$ and $g_x \in \mathcal{G}$. 

%
%

Local explainers are typically constructed for a given classifier $f$ over a given data distribution $\mu$. In practice, different algorithms employ varying amounts of access to both $f$ and $\mu$ -- for example, SHAP crucially relies on data sampled from $\mu$ whereas gradient based methods often rely on knowing the actual parameters of the model, $f$. To address all of these situations, our work takes a black-box approach in which we make no assumptions about how a local explainer is constructed from $f$ and $\mu$. Instead we focus on understanding how to evaluate how effective a given explainer is with respect to a classifier $f$ and a data distribution $\mu$. 

\subsection{Examples of Explainers}\label{sec:examples}

We now briefly discuss how various explainability tools in practice fit into our framework of local explanations. 

\paragraph{Anchors:} The main idea of Anchors~\citep{Ribeiro18} is to construct a region the input point in which the desired classifier to explain remains (mostly) constant. Over continuous data, it outputs a local explainer, $E$, such that $E(x) = (R_x, g_x)$, where $g_x$ is a constant classifier with $g_x(x') = f(x)$ for all $x' \in \R^d$, and $R_x$ is a hyper-rectangle containing $x$. It follows say that the Anchors method outputs an explainer in the class, $\mathcal{E}(\mathcal{H}_d, \mathcal{C}_d)$.

\paragraph{Gradient-Based Explanations:} Many popular explainability tools~\citep{Smilkov17, Agarwal21, Ancona18} explain a model's local behavior by using its gradient. By definition, gradients have a natural interpretation as a locally linear model. Because of this, we argue that gradient-based explanations are implicitly giving local explanations of the form $(R_x, g_x)$, where $R_x = B(x, r)$ is a small $L_2$ ball centered at $x$, and $g_x$ is a linear classifier with coefficients based on the gradient. Therefore, while the radius $r$ and the gradient $g_x$ being used will vary across explanation methods, the output can be nevertheless interpreted as an explainer in $\mathcal{E}\left(\mathcal{B}_d, \mathcal{L}_d\right)$, where $\mathcal{B}_d$ denotes the set of all $L_2$-balls in $\R^d$, and $\mathcal{L}_d$ denotes the set of all linear classifiers over $\R^d$. 

\paragraph{LIME:} At a high level, LIME~\citep{Ribeiro16} also attempts to give local linear approximations to a complex model. However, unlike gradient-based methods, LIME includes an additional feature-wise discretization step where points nearby the input point, $x$, are mapped into a binary representation in $\{0, 1\}^d$ based on how similar a point is to $x$. As a consequence, LIME can be construed as outputting local explanations of a similar form to those outputted by gradient-based methods.

Finally, as an important limitation of our work, although many well-known local explanations fall within our definitions, this does not hold in all cases. Notably, Shapley-value~\citep{Lundberg17} based techniques do not conform to the format given in Definition \ref{defn:local_explanation}, as it is neither clear how to construct local regions that they correspond to, nor the precise local classifier being used. 

\subsection{A measure of how accurate an explainer is}

We now formalize what it means for a local classifier, $g_x$, to ``approximate" the behavior of $f$ in $R_x$. 

\begin{definition}\label{defn:loc_loss}
For explainer $E$ and point $x$, we let the \textbf{local loss}, $L(E, f, x)$ be defined as the fraction of examples drawn from the region $R_x$ such that $g_x$ and $f$ have different outputs. More precisely, we set $$L(E,f,x) = \Pr_{x' \sim \mu}[g_x(x') \neq f(x) | x' \in R_x].$$
\end{definition}

$\mu$ is implicitly used to evaluate $E$, and is omitted from the notation for brevity. We emphasize that this definition is specific to \textit{classification}, which is the setting of this work. A similar kind of loss can be constructed for regression tasks based on the mean-squared difference between $g_x$ and $f$. 


We contend that maintaining a low local loss across most data points is \textit{essential} for any reasonable local explainer. Otherwise, the explanations provided by the tool can be made to support any sort of explanation as they no longer have any adherence to the original function $f$. 

To measure the overall performance of an explainer over an entire data distribution, it becomes necessary to aggregate $L(E, f, x)$ over all $x \sim \mu$. One plausible way to accomplish this would be to average $L(E, f, x)$ over the entire distribution. However, this would leave us unable to distinguish between cases where $E$ gives extremely poor explanations at a small fraction of points as opposed to giving mediocre explanations over a much larger fraction. To remedy this, we opt for a more precise approach in which a user first chooses a \textbf{local error threshold}, $0 < \gamma < 1$, such that local explanations that incur an explainabiliy loss under $\gamma$ are considered acceptable. They then measure the global loss for $E$ by determining the fraction of examples, $x$, drawn from $\mu$ that incur explainability loss above $\gamma$. 

\begin{definition}\label{defn:glob_loss}
Let $\gamma > 0$ be a user-specified local error threshold. For local explainer $E$, we define the \textbf{explainability loss} $L_\gamma(E, f)$ as the fraction of examples drawn from $\mu$ that incur a local loss larger than $\gamma$. That is, $$L_\gamma(E, f) = \Pr_{x \sim \mu}[L(E, f, x) \geq \gamma].$$
\end{definition}

We posit that the quantity $L_\gamma(E, f)$ serves as an overall measure of how faithfully explainer $E$ adheres to classifier $f$, with lower values of $L_\gamma(E,f)$ corresponding to greater degrees of faithfulness. 


\subsection{A measure of how large local regions are}

The outputted local region $R_x$ plays a crucial role in defining the local loss. On one extreme, setting $R_x$ to consist of a single point, $\{x\}$, can lead to a perfect loss of $0$, as the explainer only needs to output a constant classifier that matches $f$ at $x$. But these explanations would be obviously worthless as they provide no insight into $f$ beyond its output $f(x)$. On the other extreme, setting $R_x = \R^d$ would require the explainer to essentially replace $f$ in its entirety with $g_x$, which would defeat the purpose of explaining $f$ (as we could simply use $g_x$ instead). Motivated by this observation, we define the \textit{local mass} of an explainer at a point $x$ as follows:

\begin{definition}\label{defn:locality}
The \textbf{local mass} of explainer $E$ with respect to point $x$ and function $f$, denoted $\Lambda(E, f, x)$, is the probability mass of the local region outputted at $x$. That is, if $E(f, x) = (R_x, g_x)$, then  $$\Lambda(E, f, x) = \Pr_{x' \sim \mu}[x' \in R_x].$$ 
\end{definition}

Based on our discussion above, it is unclear what an ideal local mass is. Thus, we treat this quantity as a property of local explanations rather than a metric for evaluating their validity. As we will later see, this property is quite useful for characterizing how difficult it is to estimate the explainability loss of an explainer. We also give a global characterization of the local mass called \textit{locality}. 

\begin{definition}\label{defn:global_locality}
The \textbf{locality} of explainer $E$ with respect to function $f$, denoted $\Lambda(E, f)$, is the minimum local mass it incurs. That is, $\Lambda(E, f) = \inf_{x \in supp(\mu)} \Lambda(E, f, x).$
\end{definition}

\section{The Auditing Framework}

Recall that our goal is to determine how explanation receivers can verify provided explanations in situations where there \textit{isn't} mutual trust. To this end, we provide a framework for \textit{auditing local explanations}, where an auditor attempts to perform this verification with as little access to the underlying model and explanations as possible. Our framework proceeds in with the following steps.

\begin{enumerate}
	\item The auditor fixes a local error threshold $\gamma$.
	\item A set of points $X = \{x_1, \dots, x_n\}$ are sampled i.i.d from data distribution $\mu$.
	\item A black-box classifier $f$ is applied to these points. We denote these values with $f(X) = \{f(x_1), \dots, f(x_n)\}$.
	\item A local explainer $E$ outputs explanations for $f$ at each point. We denote these explanations with $E(f, X) = \{E(f, x_1), \dots, E(f, x_n)\}$. 
	\item The Auditor outputs an estimate $A\left(X, f(X), E(f, X)\right)$ for the explainability loss. 
\end{enumerate}

Observe that the auditor can only have access to the the model $f$ and its corresponding explanations \textit{through} the set of sampled points. Its only inputs are $X$, $f(X)$, and $E(f, X)$. In the context of the job application example discussed in Section \ref{sec:intro}, this would amount to auditing a company based on the decisions and explanations they provided over a set of applicants.

In this framework, we can define the sample complexity of an auditor as the amount of data it needs to guarantee an accurate estimate for $L_\gamma(E, f)$. More precisely, fix a data distribution, $\mu$, a classifier, $f$, and an explainer $E$. Then we have the following:

\begin{definition}\label{defn:sample_complexity}
For tolerance parameters, $\epsilon_1, \epsilon_2, \delta > 0$, and local error threshold, $\gamma > 0$, we say that an auditor, $A$, has \textbf{sample complexity} $N(\epsilon_1, \epsilon_2, \delta, \gamma)$ with respect to $\mu, E, f$, if for any $n \geq N(\epsilon_1, \epsilon_2, \delta, \gamma)$, with probability at least $1-\delta$ over $X = \{x_1, \dots, x_n\} \sim \mu^n$, $A$ outputs an accurate estimate of the explainability loss, $L_\gamma(E, f)$. That is, 
$$L_{\gamma(1 + \epsilon_1)}(E, f) - \epsilon_2 \leq A\left(X, f(X), E(f, X)\right) \leq L_{\gamma(1-\epsilon_1)}(E, f) + \epsilon_2.$$ 
\end{definition}


Next, observe that our sample complexity is specific to the distribution, $\mu$, the classifier, $f$, and the explainer, $E$. We made this choice to understand the challenges that different choices of $\mu$, $f$, and $E$ pose to an auditor. As we will later see, we will bound the auditing sample complexity using the locality (Definition \ref{defn:locality}), which is a quantity that depends on $\mu$, $f$, and $E$.

\section{How much data is needed to audit an explainer?}\label{sec:results}

\subsection{A lower bound on the sample complexity of auditing}

We now give a lower bound on the amount of data needed to successfully audit an explainer. That is, we show that for any auditor $A$ and any data distribution $\mu$ we can find some explainer $E$ and some classifier $f$ such that $A$ is highly likely to give an inaccurate estimate of the explainability loss. To state our theorem we use the following notation and assumptions. Recall that $\mathcal{H}_d$ denotes the set of hyper-rectangles in $\R^d$, and that $\mathcal{C}_d$ denotes the set of the two constant binary classifiers over $\R^d$. Additionally, we will include a couple of mild technical assumptions about the data distribution $\mu$. We defer a detailed discussion of them to Appendices \ref{app:regular} and \ref{app:assumption}. We now state our lower bound.

\begin{theorem}[\textbf{lower bound on the sample complexity of auditing}]\label{thm:lower_bound}
Let $\epsilon_1, \epsilon_2 < \frac{1}{48}$ be tolerance parameters, and let $\gamma < \frac{1}{3}$ be any local error threshold. Let $\mu$ be any non-degenerate distribution, and $\lambda > 0$ be any desired level of locality. Then for any auditor $A$ there exists a classifier $f: \R^d \to \{\pm 1\}$ and an explainer $E \in \mathcal{E}(\mathcal{H}_d, \mathcal{C}_d)$ such that the following conditions hold.
\begin{enumerate}
	\item $E$ has locality $\Lambda(E, f) = \lambda$.
	\item There exist absolute constants $c_0, c_1 >0$ such that if the auditor receives $$n \leq \frac{c_0}{\max(\epsilon_1, \epsilon_2)\lambda^{1 - c_1\max(\epsilon_1, \epsilon_2)}}$$ many points, then with probability at least $\frac{1}{3}$ over $X = \{x_1, \dots, x_n\} \sim \mu^n$, $A$ gives an inaccurate estimate of $L_\gamma(E, f)$. That is, $$A\left(X, f(X), E(f, X)\right) \notin [L_{\gamma(1+\epsilon_1)}(E, f) - \epsilon_2, L_{\gamma(1-\epsilon_1)}(E, f) + \epsilon_2].$$ 
\end{enumerate}
\end{theorem}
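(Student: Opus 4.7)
The plan is to invoke Le Cam's two-point method. I would construct a pair of randomized scenarios $(f, E) \sim P_0$ and $(f, E) \sim P_1$, both valued in $\mathcal{E}(\mathcal{H}_d, \mathcal{C}_d)$ and both having locality exactly $\lambda$, with the following two properties. First, with high probability $L_{\gamma(1-\epsilon_1)}(E, f)$ is small under $P_0$ while $L_{\gamma(1+\epsilon_1)}(E, f)$ is large under $P_1$, separated by more than $2\epsilon_2$, so that the admissible intervals for an accurate auditor are disjoint. Second, the distributions of $(X, f(X), E(f, X))$ under $P_0$ and $P_1$ have total variation at most $1/6$ under the stated sample-size bound. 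A standard Le Cam argument followed by averaging over $(f, E)$ then produces a deterministic $(f, E)$-pair on which every fixed auditor errs with probability at least $1/3$.

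The construction is cell-based. Using the non-degeneracy of $\mu$, I would partition $\mathrm{supp}(\mu)$ into $K = \lceil 1/\lambda \rceil$ disjoint hyper-rectangles $C_1, \ldots, C_K$ of $\mu$-mass $\lambda$ each. Within each cell $k$, independently draw a uniform sign $y_k \in \{\pm 1\}$ and a ``badness'' parameter $\alpha_k \sim \pi_b$, and then choose a uniformly random measurable subset $M_k \subseteq C_k$ of $\mu$-mass $\alpha_k \lambda$. Set $f \equiv -y_k$ on $M_k$, $f \equiv y_k$ on $C_k \setminus M_k$, and $E(f, x) = (C_k, y_k)$ for every $x \in C_k$. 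One checks $R_x \in \mathcal{H}_d$, $g_x \in \mathcal{C}_d$, $\Lambda(E, f) = \lambda$ exactly, and $L(E, f, x) = \alpha_k$ throughout $C_k$. Hence $L_\gamma(E, f)$ is just the fraction of cells with $\alpha_k \geq \gamma$, which by Hoeffding concentrates around $\pi_b([\gamma, 1])$.

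The two priors $\pi_0, \pi_1$ on $[0, 1]$ are the heart of the argument. I would choose them so that (i) their first $k$ moments coincide, for some $k$ of order $1/\epsilon$ with $\epsilon = \max(\epsilon_1, \epsilon_2)$, and (ii) $\pi_1([\gamma(1+\epsilon_1), 1]) - \pi_0([\gamma(1-\epsilon_1), 1]) \geq 4\epsilon_2$. Such pairs exist by classical extremal results for the truncated Hausdorff moment problem — schematically, placing atoms at zeros of a shifted Chebyshev polynomial of appropriate degree realizes the optimal CDF gap compatible with $k$ matched moments, and this gap scales like $1/k$, which is exactly what makes the choice $k \asymp 1/\epsilon$ feasible. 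The critical statistical consequence is the following: conditional on a cell receiving $m \leq k$ samples, the joint law of the within-cell observations is a mixture $\mathbb{E}_{\alpha \sim \pi_b}[\mathrm{Bern}(\alpha)^{\otimes m}]$ whose probabilities are polynomials of degree $m$ in $\alpha$, and therefore depend only on the first $m \leq k$ moments of $\pi_b$. Uniform randomization of $M_k$ within $C_k$ ensures the positions $x_i$ leak no additional signal beyond the Bernoulli match-indicators, so $P_0$ and $P_1$ induce identical observation distributions in such cells.

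What remains is the total-variation bound on the full observation vector. By the previous step, $P_0$ and $P_1$ can only differ through cells receiving more than $k$ samples, so
\[
\mathrm{TV}(P_0, P_1) \;\leq\; \Pr[\exists k : N_k \geq k+1] \;\leq\; K \binom{n}{k+1}\lambda^{k+1} \;\leq\; \frac{(n\lambda)^{k+1}}{(k+1)! \, \lambda}.
\]
Taking $k = \lceil c/\epsilon \rceil$ and applying Stirling turns this into at most $1/6$ whenever $n \leq c_0/(\epsilon\,\lambda^{1 - c_1 \epsilon})$ for suitable absolute constants $c, c_0, c_1 > 0$; this is exactly where the exponent $1 - c_1 \epsilon$ comes from. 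Combined with Hoeffding concentration of $L_\gamma$ around $\pi_b([\gamma, 1])$ and the disjointness of the two admissible intervals, Le Cam's inequality then delivers the $\geq 1/3$ failure probability on some realized $(f, E)$. I expect the heaviest lifting in this proof to be the explicit moment-matched construction of $(\pi_0, \pi_1)$: the $4\epsilon_2$ CDF gap must be placed across the transition interval $[\gamma(1-\epsilon_1), \gamma(1+\epsilon_1)]$ while matching $k \asymp 1/\epsilon$ moments on $[0, 1]$, and it is precisely this quantitative trade-off that ties the accuracy parameter $\epsilon$ to the exponent $1 - c_1 \epsilon$ in the sample-complexity bound.
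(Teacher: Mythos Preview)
Your plan is essentially the paper's own proof: partition $\mathrm{supp}(\mu)$ into cells of mass $\approx\lambda$, put a moment-matched pair of priors on a per-cell error parameter (the paper realizes these as the uniform distributions on the roots of two degree-$2m$ polynomials that differ only in their constant term, which is exactly the Chebyshev-flavored extremal construction you allude to), use that matching $\Theta(1/\epsilon)$ moments makes any cell receiving fewer samples statistically indistinguishable under the two priors, and finish with a union bound on the overflow event together with Hoeffding concentration of $L_\gamma$.

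The one place that needs tightening is your ``uniformly random measurable subset $M_k\subseteq C_k$ of $\mu$-mass $\alpha_k\lambda$'': there is no canonical such distribution, and what you actually need is that, conditional on $\alpha_k$ and on the sample positions, the within-cell labels are i.i.d.\ $\mathrm{Bern}(\alpha_k)$. The paper handles this by further sub-partitioning each $C_k$ into $K$ sub-rectangles and flipping each one independently with probability $\alpha_k$; then samples landing in distinct sub-rectangles get independent labels, collisions can be discarded without losing information, and the realized local loss equals $\alpha_k$ up to a law-of-large-numbers error that is absorbed by the $\epsilon_1$-slack around the threshold $\gamma$. With that substitution your argument goes through verbatim.
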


In summary, Theorem \ref{thm:lower_bound} says that auditing an explainer requires an amount of data that is \textit{inversely proportional} to its locality. Notably, this result does not require the data-distribution to be adversarially chosen, and furthermore applies when the explainer $E$ can be guaranteed to have a remarkably simple form being in $\mathcal{E}(\mathcal{H}_d, \mathcal{C}_d)$.

\paragraph{Proof intuition of Theorem \ref{thm:lower_bound}:} The main intuition behind Theorem \ref{thm:lower_bound} is that estimating the local explainability loss, $L(E, f, x)$, requires us to observe samples from the regions $R_x$. This would allow us to obtain an empirical estimate of $L(E, f, x)$ by simply evaluating the fraction of points from $R_x$ that the local classifier, $g_x$, misclassifies. This implies that the locality $\lambda$ is a limiting factor as it controls how likely we are to observe data within a region $R_x$.

However, this idea enough isn't sufficient to obtain our lower bound. Although the quantity $\Omega\left(\frac{1}{\lambda^{1 - O(\epsilon)}}\right)$ does indeed serve as a lower bound on the amount of data needed to guarantee seeing a large number of points within a region, $R_x$, it is unclear what a sufficient number of observations within $R_x$ is. Even if we don't have enough points in any single region, $R_x$, to accurately estimate $L(E, f, x)$, it is entirely plausible that aggregating loose estimates of $L(E, f, x)$ over a sufficient number of points $x$ might allow us to perform some type of estimation of $L_\gamma(E, f)$. 

To circumvent this issue, the key technical challenge is constructing a distribution of functions $f$ and fixing $m = O\left(\frac{1}{\epsilon}\right)$ such that observing fewer than $m$ points from a given region, $R_x$, actually provides \textit{zero information} about which function was chosen. We include a full proof in Appendix \ref{app:lower_bound_proof}.


\subsection{An upper bound on the sample complexity of auditing.}

We now show that if $\lambda$ is reasonably large, then auditing the explainability loss $L_\gamma(E,f)$ can be accomplished. As mentioned earlier, we stress that succeeding in our setting is \textit{not} a sufficient condition for trusting an explainer -- verifying that the local explanations $g_x$ match the overall function $f$ is just one property that a good explainer would be expected to have. Thus the purpose of our upper bound in this section is to complement our lower bound, and further support that the locality parameter $\lambda$ is the main factor controlling the sample complexity of an auditor. 

Our auditing algorithm proceeds by splitting the data into two parts, $X_1$ and $X_2$. The main idea is to audit the explanations given for points in $X_1$ by utilizing the data from $X_2$. If we have enough data, then it is highly likely for us to see enough points in each local region to do this. We defer full details for this procedure to Appendix \ref{app:algorithm}. We now give the an upper bound on its sample complexity. 

\begin{theorem}[\textbf{Upper Bound on Sample Complexity of Algorithm \ref{alg:main}}]\label{thm:upper}
There exists an auditor, $A$, for which the following holds. Let $\mu$ be a data distribution, $f$ be a classifier, and $E$ be an explainer. Suppose that $E$ has locality $\lambda$ with respect to $\mu$ and $f$. Then $A$ has sample complexity at most $$N(\epsilon_1, \epsilon_2, \delta, \gamma) = \tilde{O}\left(\frac{1}{\epsilon_2^2} + \frac{1}{\lambda\gamma^2\epsilon_1^2} \right).$$
\end{theorem}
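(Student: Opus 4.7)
The plan is to implement the split-sample auditor sketched in the paper and track error propagation through two nested concentration arguments. I would first partition $X$ into disjoint pieces $X_1, X_2$ of sizes $n_1 = \tilde{O}(1/\epsilon_2^2)$ and $n_2 = \tilde{O}(1/(\lambda\gamma^2\epsilon_1^2))$, respectively. For each anchor $x \in X_1$, the auditor reads $(R_x, g_x)$ from $E(f,x)$, forms $S_x = X_2 \cap R_x$, computes the empirical local loss $\hat{L}(E,f,x) = |S_x|^{-1} \sum_{x' \in S_x} \mathbf{1}[g_x(x') \neq f(x)]$, and outputs the thresholded proportion
\[
A \;=\; \frac{1}{n_1}\,\bigl|\{x \in X_1 : \hat{L}(E,f,x) \geq \gamma\}\bigr|.
\]
Since Definition~\ref{defn:loc_loss} compares $g_x(x')$ to the constant $f(x)$, the auditor only needs $g_x$ and $f(X_1)$ for the inner sum; both are already supplied.

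The analysis then rests on three good events. \textbf{(i) Enough samples per region:} since $\Lambda(E,f,x) \geq \lambda$ on $\mathrm{supp}(\mu)$, a multiplicative Chernoff bound together with a union bound over $X_1$ yields $|S_x| \geq n_2\lambda/2$ for every $x \in X_1$ simultaneously, with probability $1-\delta/3$, provided $n_2 = \Omega(\lambda^{-1}\log(n_1/\delta))$. \textbf{(ii) Accurate inner estimates:} conditional on the membership indicators, the points in $S_x$ are i.i.d.\ draws from $\mu$ restricted to $R_x$, so Hoeffding plus a union bound gives $|\hat{L}(E,f,x) - L(E,f,x)| \leq \epsilon_1\gamma$ uniformly over $x \in X_1$ with probability $1-\delta/3$, as soon as $|S_x| = \Omega((\epsilon_1\gamma)^{-2}\log(n_1/\delta))$---which is exactly why $n_2$ is sized as claimed. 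On the intersection of (i) and (ii), the additive slack $\epsilon_1\gamma$ implies $\hat{L}(E,f,x) \geq \gamma \Rightarrow L(E,f,x) \geq \gamma(1-\epsilon_1)$ and $L(E,f,x) \geq \gamma(1+\epsilon_1) \Rightarrow \hat{L}(E,f,x) \geq \gamma$, sandwiching $A$:
\[
\frac{|\{x \in X_1 : L(E,f,x) \geq \gamma(1+\epsilon_1)\}|}{n_1} \;\leq\; A \;\leq\; \frac{|\{x \in X_1 : L(E,f,x) \geq \gamma(1-\epsilon_1)\}|}{n_1}.
\]
\textbf{(iii) Accurate outer proportion:} applying Hoeffding to the Bernoulli indicators $\mathbf{1}[L(E,f,x) \geq \gamma(1\pm\epsilon_1)]$ over the i.i.d.\ sample $X_1$ shows each sandwich term is within $\epsilon_2$ of its expectation $L_{\gamma(1\pm\epsilon_1)}(E,f)$ with probability $1-\delta/3$, provided $n_1 = \Omega(\epsilon_2^{-2}\log(1/\delta))$. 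A union bound over (i)--(iii) and summation of $n_1 + n_2$ produce the claimed $\tilde{O}(1/\epsilon_2^2 + 1/(\lambda\gamma^2\epsilon_1^2))$ rate.

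The main bookkeeping delicacy is simultaneously controlling both $|S_x|$ and the Hoeffding deviations across all $n_1$ anchors, which is what forces the $\log(n_1/\delta)$ factors absorbed into $\tilde{O}$ and also why the disjoint split of $X$ into $X_1$ and $X_2$ (rather than reusing the same points) is essential---it is precisely what provides the conditional independence making the inner Hoeffding step valid. Beyond this, every step is routine concentration, so I do not anticipate any fundamental technical obstacle.
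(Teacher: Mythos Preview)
Your proposal is correct and follows essentially the same split-sample strategy as the paper: use $X_2$ to estimate each $L(E,f,x)$ for anchors $x\in X_1$, threshold at $\gamma$, and apply Hoeffding at both the inner and outer levels with union bounds over the $n_1$ anchors. Your bookkeeping is in fact somewhat cleaner than the paper's---where Algorithm~\ref{alg:main} drops anchors that fail a coverage check, normalizes by the surviving count $r'+b'$, and then carries out a lengthy algebraic comparison of $r'/(r'+b')$ against auxiliary quantities $r^*,g^*,b^*$, your Chernoff-based uniform-coverage event (i) lets you divide by $n_1$ and read off the required two-sided bound directly from the sandwich $\{L\geq\gamma(1+\epsilon_1)\}\subseteq\{\hat L\geq\gamma\}\subseteq\{L\geq\gamma(1-\epsilon_1)\}$.
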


This bound shows that the locality is sufficient for bounding the sample complexity for auditing local explanations. We defer a full proof to Appendix \ref{app:upper_bound_proof}. Observe that the dependency on $\lambda$ is $O(\frac{1}{\lambda})$ which matches the dependency in our lower bound provided that $\epsilon_1, \epsilon_2 \to 0$. 

\section{The locality of practical explainability methods can be extremely small}\label{sec:discussion}

Theorems \ref{thm:lower_bound} and \ref{thm:upper} demonstrate that the locality $\lambda$ characterizes the amount of data needed for an Auditor to guarantee an accurate estimate of the explainability loss $L_\lambda(E, f)$. It follows that if $\lambda$ is extremely small, then auditing could require a prohibitive amount of data. This leads to the following question: how small is $\lambda$ for practical explainability algorithms? To answer this, we will examine examine several commonly used algorithms that adhere to our framework.

We begin with \textbf{gradient-based methods}, which can be construed as providing an explainer in the class $\mathcal{E}(\mathcal{B}_d, \mathcal{L}_d)$, where $\mathcal{B}_d$ denotes the set of $L_2$ balls in $\R^d$, and $\mathcal{L}_d$ denotes the set of linear classifiers. To understand the impact of dimension on  the locality of such explainers, we begin with a simple theoretical example.

Let $\mu$ be the data distribution over $\R^d$ that is a union of three concentric spheres. Specifically, $x \sim \mu$ is equally likely to be chosen at uniform from the sets $S_1 = \{x: ||x|| = 1-\alpha\}$, $S_2 = \{x: ||x|| = 1\}$, and $S_3 = \{x: ||x|| = 1 + \beta\}$, where $\alpha, \beta$ are small $d$-dependent constants (Defined in Appendix \ref{app:linear_explanations_is_hard}). Let $f: \R^d \to \{\pm 1\}$ be any classifier such that $f(x) = 1$ if $x \in S_1 \cup S_3$ and $f(x) = -1$ if $x \in S_2$. Observe that $\mu$ is a particularly simple data distribution over three spherical manifolds, and $f$ is a simple classifier that distinguishes its two parts. We illustrate this distribution in panel (a) of Figure \ref{fig:shark}.

Despite its simplicity, locally explaining $f$ with \textit{linear} classifiers faces fundamental challenges. We illustrate this in Figure \ref{fig:shark}. Choosing a large local neighborhood, as done at point A, leads to issues posed by the curvature of the data distribution, meaning that it is impossible to create an accurate local linear classifier. On the other hand, choosing a neighborhood small enough for local linearity, as done in point B, leads to \textit{local regions that are exponentially small with respect to the data dimension}. We formalize this in the following theorem. 

\begin{theorem}[\textbf{A high dimensional example}]\label{thm:high_dimensional_data_is_hard}
Let $\mu, f$, be as described above, and let $E$ be any explainer in $\mathcal{E}(\mathcal{B}_d, \mathcal{L}_d)$. Let $x^*$ be any point chosen on the outer sphere, $S_3$. Then $E$ outputs an explanation at $x^*$ that either has a large local loss, or that has a small local mass. That is, either $L(E, f, x^*) \geq \frac{1}{6}$, or $\Lambda(E, f, x) \leq 3^{1-d}$. 
\end{theorem}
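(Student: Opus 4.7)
My plan is a case analysis on the radius $r$ of the explanation ball $R_{x^*} = B(x^*, r)$: either $r$ is small enough that the ball's $\mu$-mass is exponentially tiny in $d$, or $r$ is large enough that $B(x^*, r)$ meets all three spheres with comparable masses and alternating labels, forcing any linear classifier to be wrong on a constant fraction. Throughout I would use that, because $x^* \in S_3$ with $\|x^*\| = 1+\beta$, the closest points of $S_2$ and $S_1$ to $x^*$ lie at distances $\beta$ and $\alpha+\beta$ respectively, and that $\alpha, \beta$ are chosen (in Appendix~\ref{app:linear_explanations_is_hard}) as small $d$-dependent constants calibrated for the two regimes.

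For the small-ball case, suppose $r$ is below a threshold $r_0$ chosen so that the angular radius $\theta_3$ of the $S_3$-cap in $B(x^*, r)$ satisfies roughly $\sin\theta_3 \le 1/3$. The standard estimate on the relative surface area of a spherical cap in $\R^d$ gives that this cap covers at most $\sin^{d-1}\theta_3$ of $S_3$ up to a polynomial factor in $d$, hence at most $3^{2-d}$. Since $\mu(B(x^*, r)) = \tfrac{1}{3}\sum_i F_i(r)$, where $F_i(r)$ is the corresponding cap fraction on $S_i$, and the $S_2$- and $S_1$-caps in this regime have even smaller angular radii (because $x^*$ is farther from those spheres), one obtains $\Lambda(E, f, x^*) \le 3^{1-d}$, the second alternative of the theorem.

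For the large-ball case, $r > r_0$ and $B(x^*, r)$ meets $S_1, S_2, S_3$ nontrivially. Because $\alpha, \beta$ are chosen so that $\alpha + \beta \ll r_0$, once $r > r_0$ the angular radii of the three caps agree with $\arcsin(r/(1+\beta))$ up to lower-order corrections, and the cap masses $F_1(r), F_2(r), F_3(r)$ stay within a constant factor of one another. Now consider the ray from the origin through $x^*$: it crosses $B(x^*, r)$ at three collinear points $p_1 \in S_1$, $p_2 \in S_2$, $p_3 \in S_3$, carrying the alternating labels $(+1, -1, +1)$. Because the restriction of any linear classifier to a line is a monotone threshold function, $g_{x^*}$ must mislabel at least one of the $p_i$. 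Using the rotational symmetry of each $B(x^*, r) \cap S_i$ around the axis through the origin and $x^*$, together with the convexity of half-spaces, one can upgrade this pointwise mismatch into a mass lower bound: the offending cap has at least a constant fraction on the wrong side of $g_{x^*}$'s decision hyperplane. Combined with the comparability of $F_1, F_2, F_3$, this yields $L(E, f, x^*) \ge 1/6$.

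The main obstacle I expect is the quantitative step in the large-ball case: turning ``$g_{x^*}$ misclassifies some $p_i$'' into ``$L(E, f, x^*) \ge 1/6$'' uniformly over all orientations of the decision hyperplane. Two subtleties need care. First, when the hyperplane is skew to the radial axis through $x^*$, bounding the misclassified fraction of a rotationally symmetric cap requires a spherical-geometry estimate rather than a pure symmetry argument. Second, the threshold $r_0$ must be chosen so that for every $r > r_0$ all three cap masses are simultaneously large enough that mislabeling any single cap contributes at least a sixth of the total; this rules out pathological radii just above the $S_1$-grazing threshold where $F_1(r)$ could otherwise be mislabeled cheaply. I expect the appendix to handle both by an explicit joint calibration of $\alpha, \beta, r_0$ as functions of $d$.
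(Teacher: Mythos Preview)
Your case split and the small-ball branch are essentially what the paper does (in contrapositive form): if $\mu(B(a,r)) \geq 3^{1-d}$, force a large local loss. Two points of departure are worth flagging.

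First, a minor one: the explanation region is any $L_2$-ball $B(a,r)$ \emph{containing} $x^*$, not necessarily centered there. The paper works with a general center $a$ throughout (Lemmas~\ref{lem:intersections_are_caps}--\ref{lem:angles_differ_by_4d}), and the cap-angle comparison requires casework on $\|a\|$.

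Second, and more substantively, your large-ball mechanism is not the paper's, and as written it has a real gap. You use the single ray through $x^*$ to get three collinear points with alternating labels and then hope that rotational symmetry of the caps plus convexity of the half-space upgrades this to a mass bound. You already note the difficulty: when the hyperplane is skew to the radial axis, symmetry alone says nothing about how much of a cap is misclassified. Even if you extend the monotonicity observation to \emph{every} radial ray (which is immediate), you only conclude that on the intersection of all three caps at least one of the three sphere-points is mislabeled; this yields a loss bound of order $\min_i F_i(r)\,/\sum_i F_i(r)$, and you would need all three cap masses comparable. The paper neither proves nor needs that: Lemma~\ref{lem:angles_differ_by_4d} only shows $|\theta_2 - \max(\theta_1,\theta_3)| \le \tfrac{1}{4d}$, i.e.\ the $S_2$-cap is comparable to the \emph{larger} of the other two, while the third cap may be much smaller.

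The paper's actual device in the large-ball case is different and sharper. It restricts to just \emph{two} adjacent spheres, $S_2$ together with whichever of $S_1,S_3$ has the larger cap (the set $T_1(x,\theta)$ or $T_3(x,\theta)$ of Lemma~\ref{lem:linear_classifier_fail_hard}), and argues over \emph{all} radial pairs simultaneously: for a unit direction $z$ in the $S_2$-cap, a linear classifier $g(\cdot)=\mathrm{sign}(\langle w,\cdot\rangle - b)$ correctly labels both $z$ and $(1+\beta)z$ only if $\langle w,z\rangle$ falls in an interval of width $\beta|b|/(1+\beta)\lesssim \beta$. Concentration of $\langle w,z\rangle$ on the sphere (Lemma~\ref{lem:dot_product_probability_inside_interval}) then says this thin band has probability at most $1/10$ once $\beta = O(1/d^2)$; after excising tiny polar caps one gets that $g$ mislabels at least a $1/3$ fraction of $T_3$ (resp.\ $T_1$). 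Since $T_3$ (resp.\ $T_1$) carries at least half the ball's mass (Lemma~\ref{lem:big_explanation_regions_means_trouble}), the local loss is at least $1/6$. The calibration $\alpha \asymp d^{-4}$, $\beta \asymp d^{-2}$ is driven by this thin-band estimate and by the cap-angle comparison, not by a threshold $r_0$ as in your sketch.
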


Theorem \ref{thm:high_dimensional_data_is_hard} demonstrates that if a locally linear explanation achieves even a remotely reasonable local loss, then it necessarily must have an extremely small local explanation. This suggests that, gradient based explanations will be exponentially local with respect to the data dimension, $d$. 

We believe that this is also exhibited in practice particularly over \textit{image data}, where explanations are often verified based on perceptual validity, rather than relevance to practical training points beyond the point being explained. For example, the explanations given by SmoothGrad~\citep{Smilkov17} are visualized as pixel by pixel saliency maps. These maps often directly correspond to the image being explained, and are clearly \textit{highly specific} to the it (see e.g. Figure 3 of \citep{Smilkov17}). As a result, we would hardly expect the implied linear classifier to have much success over almost any other natural image. This in turn suggest that the locality would be extremely small. We also remark that a similar argument can be made for \textbf{Lime}, which also tends to validate its explanations over images perceptually (for example, see Figure 4 of \cite{Ribeiro16}).

Unlike the previous methods, \textbf{Anchors}~\citep{Ribeiro18} explicitly seeks to maximize the local mass of its explanations. However, it abandons this approach for image classifiers, where it instead maximizes a modified form of locality based on super-imposing pixels from the desired image with other images. While this gives perceptually valid anchors, the types of other images that fall within the local region are completely unrealistic (as illustrated in Figure 3 of \citep{Ribeiro18}), and the true locality parameter is consequently extremely small. Thus, although Anchors can provide useful and \textit{auditable} explanations in low-dimensional, tabular data setting, we believe that they too suffer from issues with locality for high-dimensional data. In particular, we note that it is possible to construct similar examples to Theorem \ref{thm:high_dimensional_data_is_hard} that are designed to force highly local Anchors-based explanations. 

\begin{figure}

	\centering
    \includegraphics[width=0.5\linewidth]{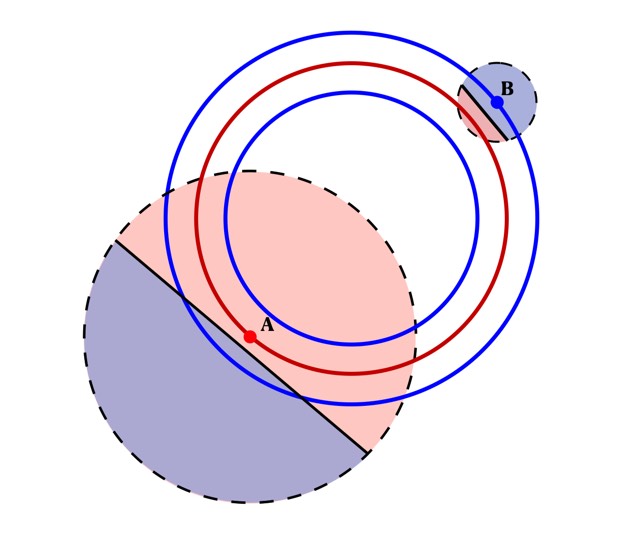}
  \caption{An illustration of Theorem \ref{thm:high_dimensional_data_is_hard}, with the concentric blue and red circles depicting the data distribution $\mu$ classified by $f$, and with local explanations being depicted at points A and B. Explanations are forced to either have large local loss (point A) or a low local mass (point B).} \label{fig:shark}
\end{figure}



\section{Conclusion}

Our results in Section \ref{sec:results} demonstrate that the locality of a local explainer characterizes how much data is needed to audit it; smaller local regions lead to larger amounts of data. Meanwhile, our discussion in Section \ref{sec:discussion} shows that typical local explanations are \textit{extremely local} in high-dimensional space. It follows that in many cases, auditing solely based on point-wise decisions and explanations is impossible. Thus, any entity without model access, such as a collective of users, are never in a position to \textit{guarantee} trust for a machine learning model.

We believe that the only way forward is through a more powerful third-party auditor that crucially as \textit{more access to the machine learning model,} as this could potentially break the fundamental challenges posed by small explainability regions. We believe that investigating the precise types of access this would entail as an important direction for future work that might have broad practical consequences. 

\bibliographystyle{apalike}
\bibliography{main}

\newpage
\appendix

\section{Proof of Theorem \ref{thm:lower_bound}}\label{app:lower_bound_proof}

\subsection{An additional assumption}\label{app:assumption}

We also include the assumption that the locality parameter be small compared to the tolerance parameters. More precisely, we assume that $\lambda < \epsilon_2^2$.

We believe this to be an \textit{extremely} mild assumption considering that we typically operate in the regime where $\lambda$ is exponentially small with the dimension, $d$, whereas the tolerance parameters are typically between $10^{-2}$ and $10^{-3}$. 

\subsection{Main Proof}\label{sec:main_proof_subsection}

\begin{proof}

Fix $\epsilon_1, \epsilon_2, \gamma, \lambda$, and $\mu$, as given in the theorem statement. Our goal is to show the existence of classifier $f$ and explainer $E$ so that the auditor, $A$, is likely to incorrectly estimate the parameter, $L_\gamma(E, f)$. To do so, our strategy will be instead to consider a distribution over choices of $(E, f)$, and show that in expectation over this distribution, $A$ estimates $L_\gamma(E, f)$ poorly. 

To this end, we define the following quantities:
\begin{enumerate}
	\item Let $E$ be the explainer given in Section \ref{app:construction}.
	\item Let $f^*$ be the random classifier defined in Section \ref{app:construction}.
	\item Let $n$ be any integer with $$n \leq \frac{1}{2592\lambda^{(1-8max(\epsilon_1, \epsilon_2)}}.$$
	\item Let $X$ be a random variable for a set of points $\{x_1, \dots, x_n\}$ drawn i.i.d from $\mu$.
	\item Let $Y = f^*(X)$ be a random variable for $\{f^*(x_1), f^*(x_2), \dots, f^*(x_n)\}$. $Y$ has randomness over both $f^*$ and $X^*$.  
	\item Let $\Delta^n = \left(\R^d\right)^n \times \{\pm 1\}^n$, and $\sigma$ denote the measure over $\Delta^n$ induced by $(X, Y)$. 
	\item By definition $E$'s output is \textit{independent} of function, $f^*$. Thus, we will abbreviate $A$'s output by writing $$A\left(X, f^*(X), E(f^*, X)\right) = A(X, Y, E).$$ This emphasizes that both $X$ and the output of $E$ are independent of $f^*$. 
	\item We let $I^*$ denote the interval that the auditor seeks to output an estimate it. That is, $$I^* =\left[L_\gamma(1+\epsilon_1)(E, f^*) - \epsilon_2, L_{\gamma(1-\epsilon_1)}(E, f^*) + \epsilon_2\right].$$
\end{enumerate}

Using this notation, we seek to lower bound that the auditor fails meaning we seek to lower bound, $$\Pr_{f^*, X} \left[A(X, Y, E) \notin I^*\right].$$ To do so, let $T_1$ denote the event $$T_1 = \ind\left(\frac{1}{2} - \epsilon_2 < L_{\gamma(1+\epsilon_1)}(E, f^*) \leq L_{\gamma(1 - \epsilon_1)}(E, f^*) < \frac{1}{2} + \epsilon_2\right),$$ and $T_2$ denote the event $$T_2 = \ind\left(\frac{1}{2} + 3\epsilon_2 < L_{\gamma(1+\epsilon_1)}(E, f^*) \leq L_{\gamma(1 - \epsilon_1)}(E, f^*) < \frac{1}{2} + 5\epsilon_2\right).$$ The key observation is that any estimate, $A(X, Y, E)$, can be inside at most one of the intervals, $[\frac{1}{2} - \epsilon_2, \frac{1}{2} + \epsilon_2]$ and $[\frac{1}{2} + 3\epsilon_2, \frac{1}{2} + 5\epsilon_2].$ Using this, we can re-write our desired probability through the following integration. 

Let $(x, y)$ denote specific choices of $(X, Y)$. Note that in this context, $x$ represents a set of points in $(\R^d)^n$, and $y$ represents a set of labels in $\{\pm 1\}^n$. We then have the following:
\begin{equation*}
\begin{split}
\Pr_{f^*, X}[A(X, Y, E) \notin I^*] &= \int_{\Delta^n}\Pr_{f^*}\left[A(x, y, E) \notin I^* | X=x, Y=y\right]d\sigma(x,y) \\
&\geq \int_{\Delta^n}\Pr_{f^*}[T_1|X=x, Y=y]\ind\left(A(x, y, E) \notin I_1\right) + \\ &\Pr_{f^*}[T_2|X=x, Y=y]\ind\left(A(x, y, E) \notin I_1\right)d\sigma(x,y) \\ &\geq \int_{\Delta^n}\min\left(\Pr_{f^*}[T_1|X=x, Y=y], \Pr_{f^*}[T_2|X=x, Y=y]\right)d\sigma(x,y),
\end{split}
\end{equation*} 
where the last equation holds because at least one of the events, $A(x, y, E) \notin I_1$ and $A(x, y, E) \notin I_2$, must hold. To bound this last quantity, we utilize Lemma \ref{lem:the_one_that_works}.

Let $$S^* = \left\{(x, y): \Pr[T_1 | X=x, Y=y], \Pr[T_0 | X=x, Y=y] \geq \frac{2}{5}\right\}.$$ By Lemma \ref{lem:the_one_that_works}, we have that $\sigma(S^*) \geq \frac{5}{6}$. It follows that 
\begin{equation*}
\begin{split}
\Pr_{f^*, X}[A(X, Y, E) \notin I^*] &\geq \int_{\Delta^n}\min\left(\Pr_{f^*}[T_1|X=x, Y=y], \Pr_{f^*}[T_2|X=x, Y=y]\right)d\sigma(x,y) \\ &\geq \int_{S^*}\min\left(\Pr_{f^*}[T_1|X=x, Y=y], \Pr_{f^*}[T_2|X=x, Y=y]\right)d\sigma(x,y) \\
&\geq \int_{S^*}\frac{2}{5}d\sigma(x,y) \\
&= \frac{2}{5} \sigma(S^*) \geq \frac{1}{3},
\end{split}
\end{equation*}
which completes the proof as this implies with probability at least $\frac{1}{3}$, the Auditors estimate is \textit{not} sufficiently accurate.

\end{proof}

\subsection{Non-degenerate Distributions}\label{app:regular}

Theorem \ref{thm:lower_bound} includes the assumption that $\mu$ is \textit{non-degenerate}, which is defined as follows. 

\begin{definition}\label{defn:point_mass}
We say that data distribution $\mu$ over $\R^d$ is \textbf{non-degenerate} if for all $x \in \R^d$, there exists $1 \leq i \leq d$ such that $$\mu\left(\{x': x_i' = x_i\}\right) = 0.$$ 
\end{definition}

Being non-degenerate essentially means that at any point, $x$, the data distribution $\mu$ has a finite probability density with respect to \textit{some} feature.

This condition any distribution with a well-defined density over $\R^d$ (i.e. such as a Gaussian) and is also met for most practical data-sets in which any of the features is globally continuously distributed (i.e. mass in kg over a distribution of patients). 

We exclude data distributions with point masses because they can pose particularly simple cases in which there is a strict lower bound on how small the local region assigned to a given point can be. For example, in the extreme case where $\mu$ is concentrated on a single point, auditing any model or explanation over $\mu$ is trivial. 

We now show a useful property of non-degenerate distributions. 

\begin{lemma}\label{lem:cut}
Let $\mu$ be a non-degenerate distribution and $R$ be a hyper-rectangle. Then $R$ can be partitioned into two hyper-rectangles, $R_1, R_2$ such that $\mu(R_1), \mu(R_2) \geq \frac{\mu(R)}{4}$. 
\end{lemma}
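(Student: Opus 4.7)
The plan is to partition $R$ by making a single axis-aligned cut. For each coordinate $i$, let $F_i(c) = \mu(R \cap \{x : x_i \leq c\})$, which is monotone non-decreasing and right-continuous in $c$. Cutting $R$ at $c \in (a_i, b_i]$ along coordinate $i$ produces two hyper-rectangles of masses $F_i(c)$ and $\mu(R) - F_i(c)$, so it suffices to exhibit some coordinate $i$ and some $c$ with $F_i(c) \in [\mu(R)/4,\, 3\mu(R)/4]$. The case $\mu(R) = 0$ is trivial, so I assume $\mu(R) > 0$.

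I argue by contradiction: suppose that no coordinate admits such a cut, i.e., $F_i$ takes no value in $[\mu(R)/4,\, 3\mu(R)/4]$ for any $i$. For each $i$, set $c_i = \inf\{c : F_i(c) > 3\mu(R)/4\}$; since $F_i$ runs from $0$ at $a_i$ to $\mu(R)$ at $b_i$, we have $c_i \in (a_i, b_i]$. Right-continuity gives $F_i(c_i) \geq 3\mu(R)/4$, and the contradiction hypothesis upgrades this to $F_i(c_i) > 3\mu(R)/4$. For $c < c_i$ the infimum definition together with the hypothesis force $F_i(c) < \mu(R)/4$, so the left-limit satisfies $F_i(c_i-) \leq \mu(R)/4$. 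The jump of $F_i$ at $c_i$ thus strictly exceeds $\mu(R)/2$, and this jump equals $\mu(R \cap \{x_i = c_i\})$.

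To close the argument, assemble the cut values into a single point $x^* = (c_1, \dots, c_d)$. Since each $c_i \in (a_i, b_i]$, we have $x^* \in R$. Applying non-degeneracy to $x^*$ yields some coordinate $j$ with $\mu(\{x : x_j = c_j\}) = 0$, contradicting the bound $\mu(R \cap \{x_j = c_j\}) > \mu(R)/2 > 0$ established in the previous paragraph. Hence some coordinate admits a valid cut, and the two resulting sub-hyper-rectangles each carry mass at least $\mu(R)/4$.

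The main obstacle is threading the pointwise non-degeneracy condition through what is essentially a global statement about cuts of $R$. The key observation is that the obstructions to cutting along different coordinates, chosen independently, assemble coordinate-wise into a single point $x^* \in R$, so $d$ separate pointwise hypotheses collapse into a single invocation of non-degeneracy. The remainder is routine bookkeeping for right-continuous monotone functions.
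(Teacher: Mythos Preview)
Your proof is correct and follows essentially the same approach as the paper: both arguments show that if no axis-aligned cut of $R$ lands in the target mass range, then every coordinate direction must carry a heavy hyperplane slice, and assembling the slice locations into a single point $x^*=(c_1,\dots,c_d)$ contradicts non-degeneracy. The paper phrases the forward direction slightly differently (first isolating a coordinate along which every slice has mass at most $\mu(R)/4$, then using a supremum to locate the cut), but the contrapositive and the final appeal to non-degeneracy are identical to yours.
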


\begin{proof}
Let $R = (a_1, b_1] \times (a_2, b_2] \times \dots \times (a_d, b_d]$. First, suppose that there exists $1 \leq i \leq d$ such that for all $r \in (a_i, b_i]$, $$\mu\left(\{x: x = r\} \cap R\right) \leq \frac{\mu(R)}{4}.$$ Let $$r^* = \sup\left\{r: \mu\left(R \cap \{x: x_i \leq r\}\right) \leq \frac{\mu(R)}{4}\right\}.$$ It follows that setting $R_1 = R \cap \{x: x_i \leq r^*\}$ and $R_2 = R \setminus R_1$ will suffice as $R_1$ will have probability mass at least $\frac{\mu(R)}{4}$ and probability mass at most $\frac{\mu(R)}{2}$. 

Otherwise, suppose that no such $i$ exists. Then, thus, there exists $r_1, r_2, \dots, r_d\}$ such that $\mu(R \cap \{x: x_i = r_i\}) > 0$. It follows that the point $(r_1, \dots, r_d)$ violates Definition \ref{defn:point_mass}, which is a contradiction. Thus some $i$ exists, which allows us to apply the above argument, finishing the proof. 
\end{proof}

\subsection{Constructing $f^*$ and $E$}\label{app:construction}

We begin by partitioning the support of $\mu$ into hyper-rectangles such that each rectangle has probability mass in the interval $[\frac{\alpha}{4}, \alpha]$. We then further partition these rectangles into a large number of equal parts. Formally, we have the following:
\begin{lemma}\label{lem:partition_exists}
Let $\alpha > 0$ be fixed, and $K > 0$ be any integer. Then for some integer $L > 0$, there exists a set of hyper-rectangles, $\{R_i^j: 1 \leq i \leq L, 1 \leq j \leq K\}$ such that the following hold:
\begin{enumerate}
	\item $R_i^1, \dots R_i^K$ partition rectangle $R_i$.
	\item For all $1 \leq i \leq L$, $\alpha \leq \mu(R_i) \leq 4\alpha$.
	\item For all $1 \leq i \leq L$ and $1 \leq j \leq K$, $\frac{\mu(R_i)}{4K} \leq \mu(R_i^j) \leq \frac{\mu(R_i)}{K}$. 
\end{enumerate}
\end{lemma}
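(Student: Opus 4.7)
The plan is to build the family $\{R_i^j\}$ in two successive stages, both driven by recursive application of Lemma \ref{lem:cut}. Stage 1 constructs the coarse partition $R_1, \ldots, R_L$ with each $\mu(R_i) \in [\alpha, 4\alpha]$, and Stage 2 refines each $R_i$ into the $K$ fine pieces $R_i^1, \ldots, R_i^K$.

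For Stage 1, I would start from a bounding hyper-rectangle $B$ that captures essentially all of $\mu$'s mass (for unbounded support a mild truncation suffices and can be absorbed into the rest of the argument). Then recursively apply Lemma \ref{lem:cut} to any sub-rectangle $R$ with $\mu(R) > 4\alpha$: each application returns two hyper-rectangular children whose masses lie in $[\mu(R)/4,\, 3\mu(R)/4]$. The upper factor $3/4$ forces the recursion to terminate in $O(\log(\mu(B)/\alpha))$ levels, and the lower factor $1/4$ guarantees that whenever we split a piece of mass strictly above $4\alpha$ both children have mass strictly above $\alpha$. Consequently, at termination every leaf lies in $(\alpha, 4\alpha]$, giving a finite partition into the required rectangles $R_1, \ldots, R_L$.

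For Stage 2, fix $R_i$ with $M = \mu(R_i)$ and use a mass-proportional recursive allocation: given a rectangle $R$ with mass $M'$ that needs to be split into $k$ pieces, apply Lemma \ref{lem:cut} to obtain children $R_L, R_R$ with masses $M_L, M_R \in [M'/4, 3M'/4]$, and recurse on each side with $k_L$ and $k - k_L$ pieces, where $k_L$ is chosen close to $k\,M_L/M'$ so that the local "mass per allotted piece" ratio $M'/k$ stays close to the global target $M/K$ throughout. An inductive invariant on this ratio — available because the non-degeneracy of $\mu$ makes Lemma \ref{lem:cut} applicable at every scale — then produces leaves whose mass lands in the target interval $[\mu(R_i)/(4K),\, \mu(R_i)/K]$.

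The main obstacle is Stage 2. The target mass window is very narrow compared with the $1/4$--$3/4$ flexibility of a single application of Lemma \ref{lem:cut}, so the allocation rule for $k_L$ must be chosen to prevent rounding errors from compounding across the $\Theta(\log K)$ recursion levels, and the invariant on the ratio must be sharp enough to survive to the leaves. The freedom to choose both the splitting coordinate and the splitting threshold at each scale, granted by non-degeneracy of $\mu$, is what ultimately makes this precise control feasible; formalizing the invariant that ties the local mass-per-count ratio to the global $M/K$ throughout the recursion is the technical heart of the argument.
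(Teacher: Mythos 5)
Your Stage 1 is exactly the paper's: build $R_1,\dots,R_L$ by recursively applying Lemma~\ref{lem:cut} inside a large bounding rectangle until every piece has mass in $[\alpha,4\alpha]$, with termination from the $3/4$ upper factor and the lower bound $\alpha$ from the $1/4$ lower factor. Stage 2 is where you diverge, and where you have put your finger on a real issue. The paper handles Stage 2 by re-running the Stage-1 procedure inside each $R_i$ with the smaller threshold $\mu(R_i)/(4K)$; that does deliver sub-rectangles with masses in $[\mu(R_i)/(4K),\,\mu(R_i)/K]$, but the \emph{number} of such pieces is only constrained to lie between $K$ and $4K$ and varies with $i$ --- it is not exactly $K$, as the lemma statement and its indexing $j=1,\dots,K$ require. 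The paper does not address this mismatch, so your instinct to track counts is warranted.

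However, the count-allocation recursion you sketch does not close as written. If at a node with data $(M',k)$ you accept whatever split $M_L$ Lemma~\ref{lem:cut} provides and then round $k_L\approx k\,M_L/M'$ to an integer, the mass-per-count ratio $M'/k$ suffers a relative perturbation of order $1/k$; near the leaves, where $k$ has fallen to $2$ or $3$, a single such term is already $\Theta(1)$, and the cumulative multiplicative drift along a root-to-leaf path is a constant that can push the leaf mass outside the factor-$4$ window. Your ``inductive invariant on the ratio'' is thus carrying the whole argument and is precisely the piece that is missing. The clean fix is to open the black box of Lemma~\ref{lem:cut}: its proof produces a coordinate $i$ with $\mu(\{x: x_i=r\}\cap R)=0$ for every $r$, so $r\mapsto\mu(R\cap\{x: x_i\le r\})$ is continuous and you may cut at \emph{any} fraction, not only one in $[1/4,3/4]$. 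Set $k_L=\lfloor k/2\rfloor$, $k_R=k-k_L$, and choose the cut so that $\mu(R_L)/\mu(R)=k_L/k$ exactly (for $k\ge 2$ this fraction lies in $[1/3,1/2]$, so the $1/4$--$3/4$ requirement is still met). Then $\mu(R_L)/k_L=\mu(R_R)/k_R=\mu(R)/k$ holds with no error at every node, the invariant $M/k=\mu(R_i)/K$ is preserved identically, and each leaf has mass exactly $\mu(R_i)/K$ --- strictly stronger than the lemma demands, and with no drift to control.
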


\begin{proof}
First construct $R_1, \dots, R_L$ by using the following procedure:
\begin{enumerate}
	\item Begin with the set $\mathcal{A} = \{R^*\}$ where $R^*$ is a large rectangle containing the support of $\mu$.
	\item If $\mathcal{A}$ contains a rectangle, $R$, such that $\mu(R) > 4\alpha$, then apply Lemma \ref{lem:cut} to split $R$ into two rectangles with mass at least $\frac{\mu(R)}{4}$ and mass at most $\frac{3\mu(R)}{4}$. 
	\item Repeat step 2 until no such rectangles, $R$, exist.
\end{enumerate}
This process clearly must terminate in a set of rectangles each of which has mass in the desired range, and also must terminate as a single rectangle can only be cut at most $\frac{\log \frac{1}{\alpha}}{\log \frac{3}{4}}$ times. 

Next, to construct $R_i^1, R_i^2, R_i^K$, we simply utilize an analogous procedure, this time starting with $\{R_i\}$ and replacing $\alpha$ with $\frac{\mu(R_i)}{4K}$.
\end{proof}

We now construct a fixed explainer, $E$.
\begin{definition}
Let $E$ denote the explainer so that for all $x \in supp(\mu)$, we have $E(x) = (R_x, g^{+1})$ where $R_x$ is the unique hyper-rectangle, $R_i$ that contains $x$, and $g^{+1}$ is the constant classifier that always outputs $+1$. 
\end{definition}

We now construct a distribution over functions $f$, and let $f^*$ be a random function that follows this distribution. We have the following:
\begin{definition}\label{defn:f_star}
Let $f^*$ be a random classifier mapping $\R^d$ to $\{\pm 1\}$ constructed as follows. Let $m$ be an integer and $0 \leq p_1, \dots, p_{2m}, q_1, \dots, q_{2m} \leq 1$ be real numbers that satisfy the conditions set forth in Lemma \ref{lem:get_the_roots}. Then $f^*$ is constructed with the following steps:
\begin{enumerate}
	\item Let $P$ be a binary event that occurs with probability $\frac{1}{2}$.
	\item If $P$ occurs, then set $r_i = p_i$ for $1 \leq i \leq p_{2m}$. Otherwise set $r_i = q_i$. 
	\item If $x \notin \cup_{i=1}^L R_i$, then $f^*(x) = +1$.
	\item For each rectangle $R_i$, randomly select $1 \leq k \leq 2m$ at uniform.
	\item For each sub-rectangle $R_i^j$, with probability $r_k$, set $f(x) = -1$ for all $x \in R_i^j$, and with probability $1-r_k$, set $f^*(x) = +1$ for all $x \in R_i^j$. 
\end{enumerate}

\end{definition}

Note that $m$ is constructed based on $\epsilon_1, \epsilon_2,$ and $\gamma$ which we assume to be provided as in the statement of Theorem \ref{thm:lower_bound}.

\subsection{Properties of $f^*$ and $E$}\label{app:f_E_properties}

We now prove several useful properties of this construction. To do so, we use the following notation:
\begin{enumerate}
	\item We let $f^*$ denote the random variable representing the way $f$ is generated. We use $f^* = f$ to denote the event that $f^*$ equals a specific function $f:\R^d \to \{\pm 1\}$. 
	\item We let $P$ denote the indicator variable for the binary event used in Section \ref{app:construction} to construct $f$.
	\item We let $m$ denote the integer from Lemma \ref{lem:get_the_roots} that is used to construction $f^*$.
	\item We let $X = (x_1, \dots, x_n) \sim \mu^n$ denote a random variable of $n$ i.i.d selected points from $\mu$. We use $x$ to denote a specific instance of $X$. 
	\item We let $Y = (y_1, \dots, y_n)$ be a random variable over labels constructed by setting $y_i = f^*(x_i)$. We similarly use $y$ to denote a specific instance of $Y$. 
	\item We let $\sigma$ denote the measure over $\left(\R^d \times \{\pm 1\}\right)^n$ associated with $(X, Y)$ .

	\item We let $\Delta^n$ denote the domain of the pair of random vectors $(X, Y)$ (as done in Section \ref{sec:main_proof_subsection})
\end{enumerate}

We begin with a bounds on the probability that we see any rectangle that has a large number of points selected from it.

\begin{lemma}\label{lem:many_points_dont_land}
Let $R_1, \dots, R_L$ be as defined in section \ref{app:construction}, and $m$ as given. Let $U$ denote the subset of $\Delta^n$ such that $$U = \left\{(x, y): \exists 1 \leq i \leq z, |X \cap R_i| \geq 2m \right\}.$$ Then $\sigma(U) \leq \frac{1}{180}$. 
\end{lemma}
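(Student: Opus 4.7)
The plan is a one-shot union bound over the $L$ rectangles produced by Lemma \ref{lem:partition_exists}. For a fixed rectangle $R_i$, since $X=\{x_1,\dots,x_n\}$ is drawn i.i.d.\ from $\mu$, the count $|X\cap R_i|$ is distributed as $\mathrm{Binomial}(n,\mu(R_i))$ with $\mu(R_i)\leq 4\alpha$. I would use the elementary tail estimate
$$\Pr[\mathrm{Binom}(n,p)\geq k]\;\leq\;\binom{n}{k}p^{k}\;\leq\;\Bigl(\tfrac{enp}{k}\Bigr)^{k},$$
applied with $k=2m$ and $p\leq 4\alpha$, which yields
$$\Pr\bigl[|X\cap R_i|\geq 2m\bigr]\;\leq\;\Bigl(\tfrac{2en\alpha}{m}\Bigr)^{2m}.$$

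Next I would combine this per-rectangle bound with the two structural facts from Lemma \ref{lem:partition_exists}: the $R_i$ are disjoint and each has mass at least $\alpha$, so $L\leq 1/\alpha$. A union bound then gives
$$\sigma(U)\;\leq\;L\cdot\Bigl(\tfrac{2en\alpha}{m}\Bigr)^{2m}\;\leq\;\tfrac{1}{\alpha}\Bigl(\tfrac{2en\alpha}{m}\Bigr)^{2m}.$$
By the construction of Appendix \ref{app:construction}, $\alpha$ is chosen proportional to the target locality $\lambda$, so the hypothesis $n\leq 1/(2592\,\lambda^{1-8\epsilon})$ (with $\epsilon=\max(\epsilon_1,\epsilon_2)$) forces $n\alpha\leq c\,\lambda^{8\epsilon}$ for an absolute constant $c$. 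Substituting,
$$\sigma(U)\;\leq\;\tfrac{C}{\lambda}\Bigl(\tfrac{c'}{m}\Bigr)^{2m}\lambda^{16m\epsilon}\;=\;C\Bigl(\tfrac{c'}{m}\Bigr)^{2m}\lambda^{16m\epsilon-1}$$
for absolute constants $C,c'$ (collecting $e$ and $2592$).

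To close, I would invoke the value of $m$ supplied by Lemma \ref{lem:get_the_roots}, which fixes $m=\Theta(1/\epsilon)$; in particular $m\geq 1/(8\epsilon)$, so $16m\epsilon-1\geq 1$. Combined with the standing assumption $\lambda<\epsilon_2^2<1$ from Appendix \ref{app:assumption}, this gives $\lambda^{16m\epsilon-1}\leq\lambda$, so $\sigma(U)\leq C(c'/m)^{2m}\lambda$. Since $\epsilon<1/48$ forces $m$ to be at least a moderate integer, the geometric factor $(c'/m)^{2m}$ is far smaller than any fixed constant, and the resulting estimate comfortably lies below $1/180$. The only genuine obstacle is constant-bookkeeping: one must verify that the $m$ produced by Lemma \ref{lem:get_the_roots} really does push $16m\epsilon$ strictly above $1$ with enough slack to absorb $C$, $c'$, and the proportionality factor linking $\alpha$ to $\lambda$. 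Every other step is a textbook binomial-tail plus union-bound computation.
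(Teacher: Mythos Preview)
Your approach is essentially identical to the paper's: bound $\Pr[|X\cap R_i|\ge 2m]$ via the standard $\binom{n}{2m}p^{2m}\le(enp/(2m))^{2m}$ tail, substitute the assumed upper bound on $n$, and union bound over $L\le 1/\lambda$ rectangles (the paper takes $\alpha=\lambda$ exactly, so your proportionality constant is $1$). One small slip: Lemma~\ref{lem:get_the_roots} only guarantees $m\ge \tfrac{1}{8\max(2\epsilon_1,\epsilon_2)}+1\ge \tfrac{1}{16\epsilon}$, not $m\ge 1/(8\epsilon)$, so you only get $16m\epsilon-1\ge 0$ rather than $\ge 1$; but this still gives $\lambda^{16m\epsilon-1}\le 1$, and the remaining geometric factor $(c'/m)^{2m}$ with $c'\approx 2e/2592$ is already far below $1/180$ for the guaranteed $m\ge 4$, so the conclusion stands and your self-identified ``constant-bookkeeping'' caveat is the right place to flag it.
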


\begin{proof}
We bound the probability that a single rectangle, $R_i$, contains at least $2m$ points from $X$, and then apply a union bound over all $L$ rectangles. By construction, $\mu(R_i) \leq 4\lambda$, which implies that for each point $x_j \in X$ the probability that $X_j$ falls within rectangle $R_i$ is at most $4\lambda$. Thus, for any set of $2m$ distinct points from $X$, the probability that they \textit{all} fall within $R_i$ is at most $(4\lambda)^{2m}$. By taking a union bound over all $\binom{n}{2m}$ subsets of $2m$ point from $X$, and substituting our assumed upper bound for $n$ (point 3. of Section \ref{sec:main_proof_subsection}), we have the following
\begin{equation*}
\begin{split}
\Pr[|X \cap R_i| \geq 2m] &\leq \binom{n}{2m}\left(4\lambda\right)^{2m} \\
&\leq \left(\frac{en}{2m}\right)^{2m}\left(4\lambda\right)^{2m} \\
&\leq \left(\frac{e}{2m}\frac{1}{2592\max(\epsilon_1, \epsilon_2)\lambda^{1 - 8\max(\epsilon_1, \epsilon_2)}}\right)^{2m}\left(4\lambda\right)^{2m} \\
&= (4\lambda) \left(\frac{e}{2m}\frac{4^{1 - \frac{1}{2m}}\lambda^{1- \frac{1}{2m}}}{2592\max(\epsilon_1, \epsilon_2)\lambda^{1 - 8\max(\epsilon_1, \epsilon_2)}}\right)^{2m}.
\end{split}
\end{equation*}
By definition (see Lemma \ref{lem:get_the_roots}), $m \geq \frac{1}{16\max(\epsilon_1, \epsilon_2)}$. Substituting this, and noting that $\lambda^{1-\frac{1}{2m}}$ is increasing with respect to $m$ (since $\lambda < 1$), we have 
\begin{equation*}
\begin{split}
\Pr[|X \cap R_i| \geq 2m] &\leq  (4\lambda) \left(\frac{e}{2m}\frac{4^{1 - \frac{1}{2m}}\lambda^{1- \frac{1}{2m}}}{2592\max(\epsilon_1, \epsilon_2)\lambda^{1 - 8\max(\epsilon_1, \epsilon_2)}}\right)^{2m} \\
&\leq (4\lambda) \left(\frac{e8\max(\epsilon_1, \epsilon_2)}{1}\frac{4\lambda^{1- 8\max(\epsilon_1, \epsilon_2)}}{2592\max(\epsilon_1, \epsilon_2)\lambda^{1 - 8\max(\epsilon_1, \epsilon_2)}}\right)^{2m} \\
&\leq (4\lambda) \left(\frac{96}{2592}\right)^{2m} \\
&< \frac{\lambda}{180}.
\end{split}
\end{equation*}
Finally, we apply a union bound over all rectangles. Observe that there are at most $\frac{1}{\lambda}$ such rectangles because by construction each rectangle has mass at most $\lambda$. Thus, our total probability is at most $\frac{1}{\lambda}\frac{\lambda}{180}$ which is at most $\frac{1}{180}$ as desired.
\end{proof}

Next, we leverage the properties from the construction of $f$ to bound the conditional probability of $P=1$ when $(x, y) \notin U$. 

\begin{lemma}\label{lem:no_information}
Let $(x, y)$ be in the support of $\sigma$ so that $(x, y) \notin U$. Then $$\Pr[P = 1|(X, Y) = (x,y)] = \Pr[P = 0|(X, Y) = (x,y)] = \frac{1}{2}.$$
\end{lemma}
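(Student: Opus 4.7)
The plan is to show that the conditional likelihood $\Pr[Y=y \mid X=x, P]$ does not depend on $P$, from which the conclusion follows by Bayes' rule together with the uniform prior $\Pr[P=0] = \Pr[P=1] = \tfrac{1}{2}$. Since $X$ is drawn independently of $f^*$, the density of $X$ at $x$ does not depend on $P$ and can be ignored.

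First, I would factor the likelihood according to which rectangle each point falls into. Points $x_j \notin \bigcup_i R_i$ receive the deterministic label $+1$, contributing a factor that is the same for $P=0$ and $P=1$ (and equal to $\ind(y_j = +1)$). For points in a fixed rectangle $R_i$, let $t_i = |X \cap R_i|$, let $s_i$ be the number of distinct sub-rectangles $R_i^j$ containing at least one of these points, and for a given labeling let $a_i$ and $b_i = s_i - a_i$ be the number of such sub-rectangles labeled $-1$ and $+1$, respectively (if two points in a common sub-rectangle carry different labels, the likelihood is zero under both values of $P$ and the claim is vacuous, so we may assume consistency). Conditional on the random index $k_i$ drawn for $R_i$, the joint sub-rectangle labels are independent with probability $r_{k_i}$ of being $-1$, so the likelihood contribution from $R_i$ given $k_i$ equals $r_{k_i}^{a_i}(1-r_{k_i})^{b_i}$. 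Averaging over $k_i$ uniform on $\{1,\dots,2m\}$ yields a contribution of
\begin{equation*}
\frac{1}{2m}\sum_{k=1}^{2m} r_k^{a_i}(1-r_k)^{b_i},
\end{equation*}
and this is a polynomial in the variables $r_k$ of total degree $a_i + b_i = s_i \leq t_i \leq 2m-1$, where the last inequality uses the hypothesis $(x,y) \notin U$.

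The key step is now to invoke the moment-matching guarantee of Lemma \ref{lem:get_the_roots}, which asserts (as is standard in such constructions) that the multisets $\{p_1,\dots,p_{2m}\}$ and $\{q_1,\dots,q_{2m}\}$ share all power-sum moments up to order $2m-1$. Equivalently, $\sum_{k} p_k^{a}(1-p_k)^{b} = \sum_{k} q_k^{a}(1-q_k)^{b}$ whenever $a+b \leq 2m-1$, because the expression is a polynomial in a single variable of degree at most $2m-1$ and thus a linear combination of $1, r, r^2, \dots, r^{2m-1}$. Hence each rectangle's contribution is identical under $P=0$ and $P=1$, and taking the product over all $i$ together with the deterministic factors from points outside $\bigcup_i R_i$ gives $\Pr[Y=y \mid X=x, P=1] = \Pr[Y=y \mid X=x, P=0]$.

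Finally, Bayes' rule gives
\begin{equation*}
\Pr[P=1 \mid X=x, Y=y] = \frac{\Pr[Y=y \mid X=x, P=1]\Pr[P=1]}{\sum_{p\in\{0,1\}} \Pr[Y=y \mid X=x, P=p]\Pr[P=p]} = \frac{1}{2},
\end{equation*}
and similarly for $P=0$, completing the argument. The main obstacle is the careful bookkeeping in the second step: one must be precise about how points coinciding within a single sub-rectangle collapse into a single Bernoulli outcome, so that the polynomial degree of the per-rectangle likelihood is controlled by the number of distinct sub-rectangles hit (bounded by $t_i < 2m$) rather than by $t_i$ itself in a way that could exceed the moment-matching threshold; fortunately both bounds coincide here since $s_i \leq t_i \leq 2m-1$.
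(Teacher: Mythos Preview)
Your proposal is correct and follows essentially the same approach as the paper: reduce via Bayes' rule and the uniform prior to showing $\Pr[Y=y\mid X=x,P]$ is independent of $P$, factor across rectangles, collapse points sharing a sub-rectangle, and then use that the per-rectangle likelihood is $\tfrac{1}{2m}\sum_k F(r_k)$ for a univariate polynomial $F$ of degree at most $2m-1$, so the power-sum matching from Lemma~\ref{lem:get_the_roots} equates the two sums. Your bookkeeping is in fact slightly tidier than the paper's (you explicitly handle points outside $\bigcup_i R_i$ and the zero-likelihood case of inconsistent labels within a sub-rectangle), but the argument is the same.
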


\begin{proof}
Our main idea will be to use Bayes-rule, and show that $\Pr[(X, Y)= (x,y) | P = 1] = \Pr[(X, Y) = (x,y) | P = 0$. This will suffice due to the fact that the prior distribution for $P$ is uniform over $\{0, 1\}$. To do so, we first note that $X$ is independent from $P$. For this reason, it suffices to show that $$\Pr[Y=y | P = 1, X = x] = \Pr[Y = y | P = 0, X= x].$$ To do so, we will express these probabilities in terms of the real numbers, $p_1, \dots, p_{2m}$ and $q_1, \dots, q_{2m}$ from which they were constructed (see Definition \ref{defn:f_star}).

For each rectangle, $R_i$ (see Lemma \ref{lem:partition_exists}), let $Y \cap R_i$ denote the function values of all points in the set $X \cap R_i$. It follows from step 4 of  Definition \ref{defn:f_star} that the values in $Y \cap R_i$ and $Y \cap R_j$ are \textit{independent} from each other. Thus, we can re-write our desired probability as $$\Pr[Y=y | P = 1, X = x] = \prod_{i=1}^L \Pr\big[(Y \cap R_i) = (y \cap R_i)| P= 1, (X \cap R_i) = (x \cap R_i)\big].$$ We now analyze the latter quantity for a rectangle, $R_i$. For convenience, let us relabel indices so that  $x \cap R_i = \{x_1, x_2, \dots, x_l\}$ and $y \cap R_i = \{y_1, \dots, y_l\}$ for some integer $l \geq 0$. We also let $X_1, \dots, X_l$ and $Y_1, \dots, Y_l$ denote the corresponding values for $X \cap R_i$ and $Y \cap R_i$. 

We now further assume that that for all $x_a, x_b \in \{x_1, \dots, x_l\}$, that $x_a$ and $x_b$ are contained within \textit{different} sub-rectangles, $R_i^{a}, R_i^{b}$ (see Definition \ref{defn:f_star}). If this isn't the case, observe that we can \textit{simply remove} the pair $(x_b, y_b)$, as by the construction of $f^*$,  this will be forced to be identical to $(x_a, y_a)$. 

By applying this assumption, we now have that for a given choice of the parameter $r_k$ (step 4 of Definition \ref{defn:f_star}), the values of $y_1, \dots, y_l$ are \textit{mutually independent}. Utilizing this, we have
\begin{equation*}
\begin{split}
\Pr\big[(Y \cap R_i) = (y \cap R_i)| P= 1, (X \cap R_i) = (x \cap R_i)\big] &= \frac{1}{2m}\sum_{j=1}^{2m}\prod_{k=1}^l \left(\frac{y_k}{2} - y_kp_j + \frac{1}{2}\right)\\
&= \frac{1}{2m} \sum_{j=1}^{2m} F(p_j),
\end{split}
\end{equation*}
Where $F$ is a polynomial of degree $l$. Here, the expression, $\frac{y_k}{2} - y_kp_j + \frac{1}{2}$ simply evaluates to $p_j$ if $y_k = -1$ (as $p_j$ is the probability of observing a $-1$) and $1 - p_j$ otherwise. 

Next, observe that the only difference when performing this computation for $P = 0$ is that we use the real numbers, $q_1, \dots q_{2m}$ instead. Thus, we have,
\begin{equation*}
\begin{split}
\Pr\big[(Y \cap R_i) = (y \cap R_i)| P= 0, (X \cap R_i) = (x \cap R_i)\big] &= \frac{1}{2m}\sum_{j=1}^{2m}\prod_{k=1}^l \left(\frac{y_k}{2} - y_kq_j + \frac{1}{2}\right)\\
&= \frac{1}{2m} \sum_{j=1}^{2m} F(q_j),
\end{split}
\end{equation*}
To show these two expression are equal, by assumption $(x, y) \notin U$ which implies that $l < 2m$. Furthermore, by Lemma \ref{lem:get_the_roots}, $\sum_{k=1}^{2m} p_k^t =\sum_{k=1}^{2m} q_k^t$,for all $0 \leq t \leq l$. It follows that $\sum_{k=1}^{2m} F(p_k) = \sum_{k=1}^{2m} F(q_k)$, which implies our desired result. 

\end{proof}

Next, we bound the probability of events related to the value of $L_\gamma$, the parameter that the Auditor seeks to estimate. 

\begin{lemma}\label{lem:prob_T_1_T_2}
Let $T_1$ denote the event that $$\frac{1}{2} - \epsilon_2 < L_{\gamma(1+\epsilon_1)} \leq  L_{\gamma(1-\epsilon_1)} < \frac{1}{2} + \epsilon_2.$$ Let $T_0$ denote the event that $$\frac{1}{2} + 3\epsilon_2 < L_{\gamma(1+\epsilon_1)} <  L_{\gamma(1-\epsilon_1)} \leq \frac{1}{2} + 5\epsilon_2.$$ Then taken over the randomness of the entire construction, $\Pr[T_1, P =1], \Pr[T_0, P = 0] \geq \frac{89}{180},$  where $P$ is the binary event defined above. 
\end{lemma}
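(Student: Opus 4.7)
The strategy is to write $L_{\gamma'}(E, f^*)$ as a sum of independent rectangle-level contributions and then control it by concentration. Since $E(x) = (R_i, g^{+1})$ whenever $x \in R_i$, the local loss $L(E, f^*, x)$ depends on $x$ only through which $R_i$ contains it; call this common value $\ell_i$. Hence
\begin{equation*}
L_{\gamma'}(E, f^*) = \sum_{i=1}^{L} \mu(R_i)\, \mathbbm{1}[\ell_i \geq \gamma'] \;+\; \mu\bigl(\mathrm{supp}(\mu)\setminus \bigcup_i R_i\bigr)\cdot 0,
\end{equation*}
and $\sum_i \mu(R_i) = 1$ since $R_1,\dots,R_L$ partition $\mathrm{supp}(\mu)$. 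Given the random index $k_i \in \{1,\dots,2m\}$ picked for $R_i$, $\ell_i$ is a convex combination of $K$ independent $\mathrm{Bernoulli}(r_{k_i})$ variables with weights $\mu(R_i^j)/\mu(R_i) \in [\tfrac{1}{4K},\tfrac{1}{K}]$. Taking $K$ polynomially large in $1/(\gamma\epsilon_1)$ and applying Hoeffding's inequality plus a union bound over the at most $L \leq 1/\lambda$ rectangles gives $|\ell_i - r_{k_i}| < \gamma\epsilon_1/2$ simultaneously for all $i$, except on a failure event of measure $\leq \tfrac{1}{360}$. Call this good event $\mathcal{G}_1$.

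On $\mathcal{G}_1$, the indicator $\mathbbm{1}[\ell_i \geq \gamma(1\pm\epsilon_1)]$ is sandwiched between $\mathbbm{1}[r_{k_i} \geq \gamma(1\pm\epsilon_1/2)]$ and $\mathbbm{1}[r_{k_i} \geq \gamma(1\pm 2\epsilon_1)]$, so we may reduce to analyzing
\begin{equation*}
S_{\gamma'} \;:=\; \sum_{i=1}^L \mu(R_i)\, \mathbbm{1}[r_{k_i} \geq \gamma'].
\end{equation*}
Conditional on $P=1$, so $r_k = p_k$, the variables $\mathbbm{1}[p_{k_i}\geq\gamma']$ are i.i.d.\ Bernoulli with mean $\rho(\gamma') := |\{k : p_k \geq \gamma'\}|/(2m)$. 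By the construction invoked in Lemma~\ref{lem:get_the_roots} (which presumably also fixes $\rho(\gamma(1\pm\epsilon_1)) \in (\tfrac{1}{2}-\tfrac{\epsilon_2}{2},\, \tfrac{1}{2}+\tfrac{\epsilon_2}{2})$ when $P=1$ and $\rho(\gamma(1\pm\epsilon_1)) \in (\tfrac{1}{2}+\tfrac{7\epsilon_2}{2},\, \tfrac{1}{2}+\tfrac{9\epsilon_2}{2})$ when $P=0$), we get $\mathbb{E}[S_{\gamma'} \mid P=1] = \rho(\gamma')$ and the analogous shifted value when $P=0$.

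To upgrade expectation to a high-probability bound on $S_{\gamma'}$, use Hoeffding across the $L$ independent choices of $k_i$: each summand lies in $[0, \mu(R_i)]$ with $\mu(R_i) \leq 4\lambda$, so
\begin{equation*}
\Pr\bigl[\,|S_{\gamma'} - \mathbb{E} S_{\gamma'}| \geq \tfrac{\epsilon_2}{2}\bigr]
\;\leq\; 2\exp\!\Bigl(-\frac{\epsilon_2^2/2}{\sum_i \mu(R_i)^2}\Bigr)
\;\leq\; 2\exp\!\Bigl(-\frac{\epsilon_2^2}{8\lambda}\Bigr).
\end{equation*}
The assumption $\lambda < \epsilon_2^2$ (Appendix~\ref{app:assumption}) makes this $\leq \tfrac{1}{360}$. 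A union bound over the two thresholds $\gamma(1+\epsilon_1)$ and $\gamma(1-\epsilon_1)$ and the good event $\mathcal{G}_1$ yields a total failure probability $\leq \tfrac{4}{360} = \tfrac{1}{90}$ conditional on $P$. Since $\Pr[P=1] = \tfrac{1}{2}$, we conclude $\Pr[T_1, P=1] \geq \tfrac{1}{2}(1-\tfrac{1}{90}) = \tfrac{89}{180}$, and symmetrically for $T_0, P=0$.

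\textbf{Main obstacle.} The delicate step is coordinating three layers of randomness — within-sub-rectangle Bernoulli labels, the uniform index $k_i$ per rectangle, and the structural spacing of the $p_k,q_k$ from Lemma~\ref{lem:get_the_roots} — so that all slack budgets add up to strictly less than $\tfrac{1}{180}$. In particular, the tightness in the Hoeffding step across rectangles is exactly where the assumption $\lambda < \epsilon_2^2$ is crucial, and the parameters $K$ and $m$ must be simultaneously large enough to make both the within-rectangle approximation of $\ell_i$ by $r_{k_i}$ and the moment-matching conditions on $p_k, q_k$ compatible.
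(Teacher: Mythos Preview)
Your approach is essentially the paper's: decompose $L_{\gamma'}(E,f^*)$ as $\sum_i \mu(R_i)\,\mathbbm{1}[\ell_i\ge\gamma']$, use $K$ large so that $\ell_i$ concentrates at $r_{k_i}$, then apply Hoeffding across the independent rectangle choices using $\sum_i\mu(R_i)^2\le O(\lambda)$ together with $\lambda<\epsilon_2^2$. The one place you hedge (``presumably'') is exactly where the paper is explicit: by condition~2 of Lemma~\ref{lem:get_the_roots}, both thresholds $\gamma(1\pm\epsilon_1)$ lie strictly between $p_m$ and $p_{m+1}$, so under $P=1$ exactly $m$ of the $2m$ values exceed either threshold and $\rho=\tfrac12$ on the nose (no interval needed); under $P=0$, condition~3 puts the double root $q_m=q_{m+1}=\gamma(1+2\epsilon_1)$ just above both thresholds, giving $\rho=(m{+}1)/(2m)=\tfrac12+\tfrac{1}{2m}$, which condition~4 places near $\tfrac12+4\epsilon_2$.
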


\begin{proof}
By definition, $\Pr[P=1] = \Pr[P = 0] = \frac{1}{2}$. Thus, it suffices to show that $\Pr[T_1|P = 1], \Pr[T_0|P=0] \geq \frac{89}{90}$. 

We begin with the case that $P= 1$ (the case for $P=0$ will be similar). For each rectangle, $R_i$, let $r(R_i)$ denote the choice of $r_k$ made for $R_i$ in step 5 of Definition \ref{defn:f_star}. The crucial observation is that the value of $r(R_i)$ \textit{nearly determines} the local loss that $E$ pays for points in $R_i$ with respect to $f^*$. In particular, if the number of sub-rectangles, $K$, is sufficiently large, then by the law of large numbers, we have that with high probability over the choice of $f^*$, for all rectangles $R_i$ and for all $x \in R_i$, 
\begin{equation}\label{eqn:law_large_numbers}
|L(E, f^*, x) - r(R_i)| < 0.01\gamma(\epsilon).
\end{equation} 

Let us fix $K$ to be any number for which this holds, and assume that this value of $K$ is set throughout our construction. 

Next, recall by Lemma \ref{lem:get_the_roots} that $$p_1 < p_2 < \dots < p_m < \gamma(1 - 2\epsilon_1) < \gamma(1 + 2\epsilon_1) < p_{m+1} < \dots < p_{2m}.$$ Recall that $r(R_i)$ is chosen at uniform among $\{p_1 \dots p_{2m}\}$ (step 5 of Definition \ref{defn:f_star}). It follows from Equation \ref{eqn:law_large_numbers} that for any $x \in R_i$, and for any $\alpha \in \{\gamma(1-\epsilon_1), \gamma(1+\epsilon_1)\}$ that $$\Pr_{f^*}[L(E, f^*, x) \geq \alpha\text{ for all }x \in R_i] = \frac{1}{2}.$$ Furthermore, because we are conditioning on $P = 1$, the value of $f^*$ within each rectangle, $R_i$, is independent. This implies that we can bound the behavior of $L_\alpha(E, f^*)$ by expressing as a sum of independent variables.

Let $\alpha \in \{\gamma(1-\epsilon_1), \gamma(1+\epsilon_1)\}$, we have by Hoeffding's inequality that 
\begin{equation*}
\begin{split}
\Pr\left[L_\alpha(E, f^*) \in \left[\frac{1}{2} - \epsilon_2, \frac{1}{2} + \epsilon_2\right]\right] &= \Pr\left[\left(\sum_{i=1}^L \mu(R_i)\ind\left(L(E, f^*, x) \geq \alpha\text{ for all }x \in R_i\right)\right) \in \left[\frac{1}{2} - \epsilon_2, \frac{1}{2} + \epsilon_2\right]\right]\\
&\geq 1 - 2\exp\left(-\frac{2\epsilon_2^2}{\sum_{i=1}^L \mu(R_i)^2} \right) \\
&\geq 1 - 2\exp\left(-\frac{2\epsilon_2^2}{16\lambda}\right) \\
&\geq 1 - \frac{1}{180} = \frac{179}{180};
\end{split}
\end{equation*}
The penultimate inequality holds since $\mu(R_i) \leq 4\lambda$ for each $R_i$, and because there are at most $\frac{1}{\lambda}$ such rectangles. The last inequality holds because $\lambda < \epsilon_2^2$ by the assumption in Section \ref{app:assumption}. 

Thus by taking a union bound over both values of $\alpha$, we have that $L_\alpha(E, f^*) \in \left[\frac{1}{2} - \epsilon_2, \frac{1}{2} + \epsilon_2\right]$ with probability at least $\frac{89}{90}$. This completes our proof for the case $P =1$.

For $P= 0$, we can follow a nearly identical argument. The only difference is that the values of $q$ (see Lemma \ref{lem:get_the_roots}) are selected so that $$\Pr_{f^*}[L(E, f^*, x) \geq \alpha] \geq \frac{1}{2} + 4\epsilon_2.$$ This results in the expected loss falling within a different interval, and an identical analysis using Hoeffding's inequality gives the desired result.
\end{proof}

The main idea of proving Theorem \ref{thm:lower_bound} is to show that for many values of $X, Y$, the conditional probabilities of $T_1$ and $T_0$ occurring are both fairly large. This, in turn, will cause the Auditor to have difficulty as its estimate will necessarily fail for at least one of these events. 

To further assist with proving this, we have the following additional lemma.

\begin{lemma}\label{lem:the_one_that_works}
Let $S^*$ denote the subset of $\left(\R^d \times \{\pm 1\}\right)^n$ such that $$S^* = \left\{(x, y): \Pr[T_1 | (X, Y) = (x,y)], \Pr[T_0 | (X,Y) = (x,y)] \geq \frac{2}{5}\right\}.$$ Then $\sigma(S^*) \geq \frac{5}{6}$. 
\end{lemma}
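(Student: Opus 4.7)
The plan is to write $S^*$ as the complement of $B_0\cup B_1$, where $B_i=\{(x,y):\Pr[T_i\mid (X,Y)=(x,y)]<2/5\}$, and then bound $\sigma(B_0\cup B_1)$ by splitting on $U$ versus $U^c$. Lemma \ref{lem:many_points_dont_land} already disposes of the "bad" part: $\sigma(U)\leq 1/180$. On $U^c$, the two previous lemmas interlock nicely. Lemma \ref{lem:no_information} says that for $(x,y)\notin U$ the conditional law of $P$ is uniform, and by Bayes' rule this is equivalent to saying that the conditional distribution of $(X,Y)$ given $P=1$ agrees with the unconditional distribution on $U^c$ (and likewise for $P=0$).

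The key observation is the trivial bound $\Pr[T_1\mid(X,Y)=(x,y)]\geq \tfrac12\Pr[T_1\mid(X,Y)=(x,y),P=1]$ valid on $U^c$ (because the posterior weight on $P=1$ is exactly $1/2$ there). Consequently, every $(x,y)\in U^c\cap B_1$ satisfies $\Pr[T_1^c\mid(X,Y)=(x,y),P=1]>1/5$. Integrating this inequality against the measure $\sigma^{P=1}$, and using that $\sigma^{P=1}$ coincides with $\sigma$ on $U^c$, yields
\begin{equation*}
\tfrac{1}{5}\,\sigma(U^c\cap B_1)\;\leq\;\Pr[T_1^c\mid P=1]\;\leq\;\tfrac{1}{90},
\end{equation*}
where the last inequality uses $\Pr[T_1,P=1]\geq 89/180$ from Lemma \ref{lem:prob_T_1_T_2} together with $\Pr[P=1]=1/2$. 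Therefore $\sigma(U^c\cap B_1)\leq 1/18$.

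The symmetric argument applied to $T_0$ and $P=0$ gives $\sigma(U^c\cap B_0)\leq 1/18$. Adding the three pieces,
\begin{equation*}
\sigma(B_0\cup B_1)\;\leq\;\sigma(U)+\sigma(U^c\cap B_0)+\sigma(U^c\cap B_1)\;\leq\;\tfrac{1}{180}+\tfrac{1}{18}+\tfrac{1}{18}\;=\;\tfrac{21}{180}\;<\;\tfrac{1}{6},
\end{equation*}
so $\sigma(S^*)\geq 1-21/180=159/180\geq 5/6$, as required.

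The only subtle step — and the one I would be most careful about — is justifying the Bayes identity $\sigma^{P=1}=\sigma$ on $U^c$ from Lemma \ref{lem:no_information}, since everything downstream is a quick Markov-style estimate once that symmetry is in hand. Apart from that, the proof is a bookkeeping exercise in pushing the two complementary "bad" events ($B_1$ and $B_0$) into the $P$-conditional worlds where Lemma \ref{lem:prob_T_1_T_2} gives a sharp concentration bound.
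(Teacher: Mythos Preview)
Your proof is correct and follows essentially the same strategy as the paper's: both define the ``bad'' sets $B_i$ (the paper calls them $S_i'$), use Lemma~\ref{lem:no_information} to pin down the posterior of $P$ on $U^c$, and then run a Markov-type integration against Lemma~\ref{lem:prob_T_1_T_2} to bound $\sigma(B_i)$, with Lemma~\ref{lem:many_points_dont_land} handling $U$. Your organization---splitting off $U$ first and then passing to the $P$-conditional measure $\sigma^{P=1}$---is a slightly cleaner repackaging of the same computation (and in fact yields the marginally better constant $21/180$ versus the paper's $1/6$), but the underlying argument is the same.
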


\begin{proof}
Let $S_1' = \{(x,y): \Pr[T_1 | (X,Y) = (x,y)] < \frac{2}{5}\}$, and similarly $S_2' = \{(x,y): \Pr[T_2 | (X, Y) = (x,y)] < \frac{2}{5}\}$. Then $S^* = \left(\R^d \times \{\pm 1\}\right)^n \setminus \left(S_1' \cup S_2'\right)$. Thus it suffices to upper bound the mass of $S_1'$ and $S_2'$. To do so, let $U$ be the set defined in Lemma \ref{lem:many_points_dont_land}. Then we have 
\begin{equation*}
\begin{split}
\frac{89}{180} &\leq \Pr[T_1, P = 1] \\
&= \int_{\left(\R^d \times \{\pm 1\}\right)^n} \Pr[T_1, P =1|(X, Y) = (x, y)]d\sigma \\
&\leq \int_{S_1'} \Pr[T_1, P =1|(X, Y) = (x, y)]d\sigma + \int_{U \setminus S_1'} \Pr[T_1, P =1|(X, Y) = (x, y)]d\sigma \\ &\quad \quad \quad + \int_{\Delta^n \setminus (S_1' \cup U)} \Pr[T_1, P =1|(X, Y) = (x, y)]d\sigma \\
&< \frac{2}{5}\sigma(S_1') + \left(\sigma(U) - \sigma(U \cap S_1')\right) + \frac{1}{2}\left(\sigma(\Delta^n \setminus U) - \sigma\left((\Delta^n \setminus U) \cap S_1'\right)\right) \\
&\leq \left(\frac{2}{5} - \frac{1}{2}\right)\sigma(S_1') + \frac{1}{2}\sigma(\Delta^n \setminus U) + \sigma(U)  \\
&\leq \frac{1}{2}\frac{179}{180} + \frac{1}{180} - \frac{\sigma(S_1')}{10}
\end{split}
\end{equation*}
Here we are simply leveraging the fact that $\Pr[P=1|X, Y = x,y]$ is precisely $\frac{1}{2}$ when $x, y$ are not in $U$, and consequently that the probability of $T_1$ and $P = 1$ is at most $\frac{2}{5}, 1,$ and $\frac{1}{2}$ when $(x,y)$ is in the sets $S_1', U \setminus S_1'$ and $(\Delta^n \setminus U) \setminus S_1'$ respectively. Finally, simplifying this inequality gives us $\sigma(S_1') \leq \frac{1}{12}$. 

A completely symmetric argument will similarly give us that $\sigma(S_2') \leq \frac{1}{12}$. Combining these with a union bound, it follows that $\sigma(S^*) \geq \frac{5}{6}$, as desired. 
\end{proof}

\subsection{Technical Lemmas}

\begin{lemma}\label{lem:get_the_roots}
For all $0 < \gamma, \epsilon_1, \epsilon_2 < \frac{1}{48}$, there exists $m > 0$, and real numbers $0 \leq p_1, p_2, \dots, p_{2m}, q_1, \dots, q_{2m} \leq 1$ such that the following four conditions hold:
\begin{enumerate}
	\item For all $0 \leq t \leq 2m-1$, $\sum_{i=1}^{2m} p_i^t = \sum_{i=1}^{2m} q_i^t$.
	\item $p_1 \leq p_2 \leq \dots \leq p_{m} < \gamma(1-2\epsilon_1) < \gamma(1+2\epsilon_1) < p_{m+1} \leq \dots p_{2m}$. 
	\item $q_1 \leq q_2 \leq \dots \leq q_{m-1} < q_m = q_{m+1} = \gamma(1+2\epsilon_1) < q_{m+2} \leq \dots q_{2m}$.
	\item $\frac{1}{4\epsilon_2} \geq m \geq \frac{1}{8\max(2\epsilon_1, \epsilon_2)}+1$. 
\end{enumerate}
\end{lemma}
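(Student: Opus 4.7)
The plan is to reduce the construction to a moment-matching problem solved via Vieta's formulas and Newton's identities. Writing $P(x) := \prod_{i=1}^{2m}(x-p_i)$ and $Q(x) := \prod_{i=1}^{2m}(x-q_i)$, condition 1 (equality of power sums $\sum_i p_i^t = \sum_i q_i^t$ for $t=0,\ldots,2m-1$) is equivalent, via Newton's identities, to the elementary symmetric polynomials $e_1,\ldots,e_{2m-1}$ of the two multisets coinciding. By Vieta, this holds iff $P$ and $Q$ are monic degree-$2m$ polynomials differing only in their constant terms, i.e., $P - Q = c$ for some constant $c$. So it suffices to exhibit a single pair $(P,Q)$ of such polynomials with the prescribed root structures.

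First I would fix $m$ to be the smallest even integer in $\left[\frac{1}{8\max(2\epsilon_1,\epsilon_2)}+1,\ \frac{1}{4\epsilon_2}\right]$; this interval is non-empty and contains an even integer whenever $\epsilon_2 < 1/48$, so condition 4 is satisfied. Setting $\gamma^* := \gamma(1+2\epsilon_1)$, I would construct $Q$ to have $\gamma^*$ as a double root by taking
$$Q(x) = (x-\gamma^*)^2 \prod_{j=1}^{m-1}\bigl((x-\gamma^*)^2 - b_j^2\bigr)$$
for positive reals $0 < b_1 < \cdots < b_{m-1}$ (to be chosen). Then $Q$ is symmetric in $u := x-\gamma^*$ and its roots are precisely $\gamma^*$ (double) and $\gamma^* \pm b_j$, realizing condition 3. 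Setting $P := Q + c$ with $c > 0$ inherits the same symmetry in $u$, so the roots of $P$ come in pairs $\gamma^* \pm u^*$, automatically placing $m$ of them below $\gamma^*$ and $m$ above.

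The hard part, and the main technical obstacle, is choosing $c$ and the $b_j$ so that the $2m$ roots of $P$ are simple and real \emph{and} the $m$ negative-$u$ roots actually fall below $\gamma^{**} := \gamma(1-2\epsilon_1)$, rather than merely below $\gamma^*$. For $m$ even the sign pattern of $Q$ (analyzed in the variable $v := u^2$) gives exactly $m$ disjoint intervals on which $Q < 0$; for $c$ less than the minimum depth of $Q$ across these intervals, the equation $Q = -c$ has two simple real solutions in each, producing $2m$ simple real roots of $P$. The delicate interval is the one nearest to $\gamma^*$, namely $u \in (-b_1, 0)$: here the Taylor expansion $Q(u) \approx -u^2 \prod_j b_j^2$ near $u=0$ shows that the root of $P$ closest to $\gamma^*$ satisfies $|u| \approx \sqrt{c/\prod_j b_j^2}$. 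Forcing this root past $\gamma^{**}$ requires $c \ge 16\gamma^2\epsilon_1^2 \prod_j b_j^2$, whereas real-rootedness imposes an upper bound of order $b_1^2\prod_j b_j^2/4$; both are simultaneously satisfiable provided $b_1 \gtrsim \gamma\epsilon_1$. Concretely I would take $b_j := 9j\gamma\epsilon_1$ and verify, using $\epsilon_1,\epsilon_2 < 1/48$ together with the bound on $m$ from condition 4, that $b_{m-1} < \gamma^*$ (so all $p_i,q_i$ lie in $[0,2\gamma^*] \subset [0,1]$) and that the resulting range of admissible $c$ is non-empty. Any valid $c$ then yields a $P$ whose roots, enumerated in increasing order, furnish $p_1 < \cdots < p_{2m}$ satisfying condition 2.
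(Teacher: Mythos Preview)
Your reduction via Newton's identities to ``two monic degree-$2m$ polynomials differing only in their constant term'' is exactly the paper's mechanism, and your use of an even polynomial in $u = x - \gamma(1+2\epsilon_1)$ to force the symmetric root pattern (and the double root of $Q$) is also the same. The difference is purely in execution, and the paper's is cleaner: it builds the polynomial with \emph{simple} roots first, fixing them explicitly at the odd integers $\pm 1,\pm 3,\ldots,\pm(4l-1)$ (so $P_l(u)=\prod_{j=1}^{2l}\bigl(u^2-(2j-1)^2\bigr)$ with $m=2l$), and then takes $Q_l = P_l - P_l(0)$, so the shift constant is forced rather than a free parameter. Evenness gives the double root of $Q_l$ at $u=0$ automatically, and real-rootedness of $Q_l$ follows from a one-line intermediate-value argument: the explicit arithmetic shows $P_l(4i) > P_l(0)$ for $1\le i\le l$, while $P_l(4i\pm 1)=0$, pinning one root of $Q_l$ in each adjacent subinterval. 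After the affine rescaling $u\mapsto (x-\gamma(1+2\epsilon_1))/(2\gamma\epsilon)$ with $\epsilon=1/(8l)\ge 2\epsilon_1$, the two middle roots of $P_l$ land at $\gamma(1+2\epsilon_1\pm 2\epsilon)$, which immediately gives $p_m<\gamma(1-2\epsilon_1)$ without any tuning.

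Your version---fixing $Q$'s roots at $\gamma^*\pm b_j$ and then searching for an admissible $c$---is correct in principle, but the balancing act you sketch is exactly what the paper sidesteps by starting from the other end. In particular, your step ``verify the resulting range of admissible $c$ is non-empty'' requires lower-bounding the depth of \emph{every} local minimum of $\tilde Q(v)=v\prod_j(v-b_j^2)$ across all $m/2$ negative intervals in $v$, not just the one near $v=0$; with $b_j=9j\gamma\epsilon_1$ this is doable but tedious, whereas the paper's odd-integer roots turn the analogous check into a direct product comparison.
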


\begin{proof}
Let $l$ denote the largest integer that is strictly smaller than $\frac{1}{8\max(2\epsilon_1, \epsilon_2)}$, and let $\epsilon = \frac{1}{8l}$. It follows that $\epsilon \geq \max(2\epsilon_1, \epsilon_2)$. Let $P_l$ and $Q_l$ be as defined in Lemma \ref{lem:poly}. Let $m = 2l$. Then it follows from the definitions of $m, l$ that $$m = 2l \leq \frac{1}{4\max(2\epsilon_1, \max(\epsilon_2)} \leq \frac{1}{4\epsilon_2},$$ which proves the first part of property 4 in Lemma \ref{lem:get_the_roots}. For the second part, by the definition of $l$, we have $l \geq \frac{1}{8\max(2\epsilon_1, \epsilon_2)} - 1$. Since $\epsilon_1, \epsilon_2 \leq \frac{1}{48}$, it follows that $l \geq 2$ which implies $$m = 2l \geq l+2 \geq \frac{1}{8\max(2\epsilon_1, \epsilon_2)} + 1.$$

Next, let $p_1, \dots, p_{4l}$ denote the (sorted in increasing order) roots of the polynomial $$P_l'(x) = P_l\left( \frac{x - \gamma(1+2\epsilon_1)}{2\gamma\epsilon}\right).$$ Since the roots of $P_l$ are explicitly given in Lemma \ref{lem:poly}, it follows that the middle two roots of $P_l'(x)$ (which are the values of $p_{m}$ and $p_{m+1}$) satisfy $$p_m = \gamma(1+2\epsilon_1-2\epsilon), p_{m+1} = \gamma(1+2\epsilon_1 + 2\epsilon).$$ Because $\epsilon' > \epsilon$, these values clearly satisfy the inequalities given by point 2 in the Lemma statement. 

Next, define $q_1, \dots, q_{4l}$ as the (sorted in increasing order) roots of the polynomial, $$Q_l'(x) = Q_l\left(\frac{x - \gamma(1+2\epsilon_1)}{2\gamma\epsilon}\right).$$ Again using Lemma \ref{lem:poly}, we see that $$q_m = q_{m+1} = \gamma(1+2\epsilon_1),$$ which satisfies point 3. 

To see that $p_i$ and $q_i$ are indeed in the desired range, we simply note that by substitution, both $p_1$ and $p_2$ must be larger than $\gamma(1+2\epsilon_1) - 4l(2\gamma\epsilon)$. However, by definition, $4l(2\gamma\epsilon) = \gamma$. Thus, this quantity is larger than $0$ which implies that $p_1$ and $q_1$ are both positive. Because $\gamma < \frac{1}{10}$, a similar argument implies that $p_{2m}$ and $q_{2m}$ are at most $1$.

Finally, point 1 follows from the fact that $p_1, \dots, p_{2m}$ and $q_1, \dots, q_{2m}$ are the complete sets of roots of two polynomials that have matching coefficients for the first $2m$ coefficients. it follows by basic properties of Newton sums that $\sum p_i^t = \sum q_i^t$ for $0 \leq i \leq 2m-1$, and this proves point 1. 
\end{proof}

\begin{lemma}\label{lem:poly}
For any $l > 0$, let $$P_l(x) = \left((x+1)(x+3) \dots (x+4l - 1)\right)\left((x-1)(x-3)\dots(x-4l+1)\right).$$ Let $Q_l(x) = P_l(x) - P_l(0)$. Then $Q_l$ has $2l-1$ distinct real roots over the interval $(-4l, -1)$, $2l-1$ distinct real roots over the interval $(1, 4l)$,  and a double root at $x = 0$. 
\end{lemma}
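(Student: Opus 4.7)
The plan is to exploit the evenness of $P_l$ via the substitution $y = x^2$, reducing the root count of $Q_l$ to a sign-change count for a single polynomial of half the degree, and to close by a log-convexity estimate for double factorials.

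Since $P_l(x) = \prod_{j \in J}(x^2 - j^2)$ for $J = \{1, 3, 5, \ldots, 4l-1\}$, both $P_l$ and $Q_l$ are even. Writing $R_l(y) := \prod_{j \in J}(y - j^2)$, the positive $x$-roots of $Q_l$ correspond bijectively, via $x = \sqrt{y}$, to positive $y$-roots of $\tilde R_l(y) := R_l(y) - R_l(0)$. The root of $Q_l$ at $x = 0$ is at least double because $Q_l$ is even, and is exactly double since $Q_l''(0) = 2R_l'(0) = -2R_l(0)\sum_{j \in J}j^{-2} \neq 0$. Rolle's theorem applied to $R_l$ (which has $2l$ simple positive roots at $1, 9, \ldots, (4l-1)^2$) forces the $2l - 1$ real roots of $R_l'$ to sit one per gap between consecutive roots of $R_l$, so $R_l$ is strictly monotonic on $(0, 1)$ (decreasing from $R_l(0)$ to $0$) and on $((4l-1)^2, \infty)$ (increasing from $0$ to $\infty$). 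Hence $\tilde R_l$ has no root in $(0, 1]$ and exactly one root in $((4l-1)^2, \infty)$. On each intermediate interval $((2m-1)^2, (2m+1)^2)$ with $m$ odd, $R_l < 0 < R_l(0)$, so $\tilde R_l$ has no roots. On each $((4k-1)^2, (4k+1)^2)$ for $k = 1, \ldots, l-1$, $R_l$ rises from $0$ to a unique local maximum $M_k$ and returns to $0$, contributing exactly $2$ simple roots of $\tilde R_l$ iff $M_k > R_l(0)$.

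Everything thus reduces to the single inequality
\[
R_l((4k)^2) > R_l(0) \qquad \text{for } k = 1, 2, \ldots, l,
\]
which simultaneously gives $M_k \geq R_l((4k)^2) > R_l(0)$ for $k \leq l-1$ and confines the root on $((4l-1)^2, \infty)$ to lie in $((4l-1)^2, (4l)^2)$. To prove it, I expand
\[
\frac{R_l((4k)^2)}{R_l(0)} = \prod_{j \in J}\left(1 - \frac{16k^2}{j^2}\right),
\]
observe that the sign is $+1$ since exactly the $2k$ factors with $j < 4k$ are negative, and collect absolute values, using that $\{|j - 4k| : j \in J\}$ splits as $\{1,3,\ldots,4k-1\} \sqcup \{1,3,\ldots,4(l-k)-1\}$, to obtain the identity
\[
\left|\frac{R_l((4k)^2)}{R_l(0)}\right| = \frac{(4l - 4k - 1)!! \cdot (4l + 4k - 1)!!}{((4l-1)!!)^2}
\]
(with the convention $(-1)!! := 1$, needed when $k = l$). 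This ratio exceeds $1$ by log-convexity of the double factorial: rewriting it as $\prod_{i=1}^{2k}(4l + 2i - 1)\bigm/ \prod_{i=1}^{2k}(4l - 2i + 1)$, each of the $2k$ numerator factors is $\geq 4l + 1$ while each of the matching denominator factors is $\leq 4l - 1$.

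The count then assembles: on the positive axis, $\tilde R_l$ has $2(l-1) + 1 = 2l - 1$ simple roots, giving $Q_l$ exactly $2l - 1$ distinct roots in $(1, 4l)$ (two in each $(4k-1, 4k+1)$ for $k = 1, \ldots, l-1$ and one in $(4l-1, 4l)$); evenness yields $2l-1$ more in $(-4l, -1)$; together with the double root at $0$ this accounts for $2 + 2(2l-1) = 4l = \deg Q_l$ roots, so no roots have been missed and none lie on the boundaries. The main obstacle is the combinatorial identity for $|R_l((4k)^2)/R_l(0)|$: the pairing of the $|j - 4k|$ factors into two ``folded'' products and the resulting double-factorial expression is the step where sign or off-by-one errors most easily creep in, but once that identity is in hand, the log-convexity comparison is immediate.
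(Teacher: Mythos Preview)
Your proof is correct and follows essentially the same route as the paper's: both hinge on the inequality $P_l(4k) > P_l(0)$ (your $R_l((4k)^2) > R_l(0)$), use the intermediate value theorem to place roots in $(4k-1,4k)$ and $(4k,4k+1)$, and close by counting against $\deg Q_l = 4l$. Your $y=x^2$ substitution, Rolle-based unimodality, and explicit double-factorial identity are a more detailed packaging of what the paper dispatches with ``by directly comparing terms,'' but the underlying method is the same.
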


\begin{proof}
By symmetry, $P_l'(0) = Q_l'(0) = 0$, and by definition $Q_l(0) = 0$. It follows that $x = 0$ is a double root. Next, fix $1 \leq i \leq l - 1$. By definition, we have that $P_l(4i-1) = P_l(4i + 1) = 0$. We also have that $$P_l(4i) = \prod_{j = 1}^{2(l+i)}(2j-1) \prod_{j=1}^{2(l-i)}(2j - 1).$$ Meanwhile, we also have that $P_l(0) = \left(\prod_{j=1}^2l (2j-1)\right)^2$. By directly comparing terms, it follows that $P_l(i)$ is strictly larger than $P_l(0)$. Thus, by the intermediate value theorem, $Q_l$ must have at least one root in both $(4i-1, 4i)$ and $(4i, 4i+1)$. Using a similar argument, we can also show that $Q_l$ has at least one root in $(4l-1, 4l)$. 

Since $P_l$ is an even function, it follows that $Q_l$ is as well which means it symmetrically has roots in the intervals $(-4i, -4i+1)$ for $1 \leq i \leq l$ and $(-4i - 1, 4i)$ for $1 \leq i \leq l-1$. Taken all together, we have constructed $2(l + l-1) = 4l - 2$ distinct intervals that each contain a root. Since $Q_l$ also has a double root at $x = 0$, it follows that this must account for all of its roots as $deg(Q_l) = deg(P_l) = 4l$. 
\end{proof}

\section{Proof of Theorem \ref{thm:upper}}\label{app:upper_bound_proof}

\subsection{Algorithm description}\label{app:algorithm}

The main idea of our auditor, \audit\emph{ }, is to essentially performs a brute-force auditing where we choose a set of points, $X_1$, and attempt to assess the accuracy of their local explanations by using a a wealth of labeled data, $X_2$, to validate it. Our algorithm uses the following steps (pseudocode given in Algorithm \ref{alg:main}).

\begin{enumerate}
	\item (lines 1 -3) We first partition $X$ based on the tolerance parameters, $\epsilon_1, \epsilon_2, \delta$. $X_1$ will be the set of points that we validate, and $X_2$ will be the set of points we use for validation.
	\item (lines 8), For \textit{each} point $x$ in $X_1$, we check whether there are enough points from $X_2$ that fall within its local region, $R_x$, to accurate estimate its local loss. 
	\item (line 9-13) For each point satisfying the criteria in line 8, we evaluate its empirical local loss and then tally up how many points have a loss that is larger than $\gamma$.
	\item (line 17) We output the proportion of points with loss larger than $\gamma$ among all points whose loss we measured. 
\end{enumerate}

At a high level, we expect this algorithm to succeed as long as we have enough data in \textit{each} of the local regions induced from points in $X_1$.

\subsection{Notation}\label{sec:proof_upper_bound_notation}

\begin{algorithm}[tb]
   \caption{$\audit(X, f(X), E(f, X), \epsilon_1, \epsilon_2, \gamma, \delta)$}
   \label{alg:main}
\begin{algorithmic}[1]
	\State $m \leftarrow \frac{61}{\epsilon_2^2}\log \frac{12}{\delta}$.
	\State $k \leftarrow \frac{1}{2\gamma^2\epsilon_1^2}\log \frac{176}{\epsilon_2\delta}$.
	\State $X_1 \leftarrow \{x_1, \dots, x_m\}$, $X_2 \leftarrow X \setminus X_1$. 
	\State $r', b' \leftarrow 0$.
	\For{$x_i \in X_1$}
	\State $(R_{x_i}, g_{x_i}) = E(x_i, f)$
	\State $X_1^i = R_{x_i} \cap X_2$.
	\If{$|X_1^i| \geq k$}
		\State $\hat{L}(E, f, x_i) \leftarrow \frac{1}{|X_1^i|}\sum_{x_j \in X_1^i}\ind\left(g_{x_i}(x_j) \neq f(x_j)\right)$
		\If{$\hat{L}(E, f, x_i) > \gamma$}
			\State $r' = r' + 1$.
		\Else
		 	\State $b' = b' + 1$.
		\EndIf
	\EndIf
	\EndFor
	\State Return $\frac{r'}{r' + b'}$. 
\end{algorithmic}
\end{algorithm}

We use the following:
\begin{enumerate}
	\item Let $\delta, \epsilon_1, \epsilon_2, \gamma$ be the tolerance parameters defined in Definition \ref{defn:sample_complexity}. 
	\item Let $\lambda = \Lambda(E, f)$ denote the locality of $E,f$ w.r.t. data distribution $\mu$. 
	\item Let $X_1$ be the set of points that are specifically being audited.
	\item Let $X_2$ be the set of points being used to audit.
	\item Let $|X_1| = m$. By definition, $m > \frac{\log\frac{1}{\delta}}{\epsilon_2^2}.$
	\item We set $|X_2| = n' = n - m$. By definition, $n' > stuff$.  
	\item For any $x \in \R^d$, we let $E(x) = (R_{x}, g_{x})$ be the local explanation outputted for $x$ by explainer $E$.
\end{enumerate}

We also define the following quantities related to estimating how frequently the local loss outputted by the explainer $E$ is above the desired threshold, $\gamma$.

\begin{enumerate}
	\item Let $r^* = \Pr_{x \sim \mu}[L(E, f, x) \geq \gamma(1 + \epsilon_1)]$.
	\item Let $g^* = \Pr_{x \sim \mu}[\gamma(1 - \epsilon_1) \leq L(E, f, x) \leq \gamma(1 + \epsilon_1)]$.
	\item Let $b^* = \Pr_{x \sim \mu}[L(E, f, x) \leq \gamma(1 - \epsilon_1)]$.
\end{enumerate}

Here, $r^*$ denotes the probability that a point has a large local error, $b^*$, the probability a point has a low local error, and $g^*$, the probability of an "in-between" case that is nearby the desired threshold, $\gamma$. By the definition of sample complexity (Definition \ref{defn:sample_complexity}), the goal of Algorithm \ref{alg:main} is to output an estimate that is inside the interval, $[r^* - \epsilon_2, r^* + g^* + \epsilon_2]$. 

Next, we define $r, g, b$ as versions of these quantities that are based on the sample, $X_1$. 
\begin{enumerate}
	\item Let $r = \Pr_{x \sim X_1}[L(E, f, x) \geq \gamma (1+ \epsilon_1)]$.
	\item Let $g = \Pr_{x \sim X_1}[\gamma(1 - \epsilon_1) \leq L(E, f, x) \leq \gamma(1 + \epsilon_1)]$.
	\item Let $b = \Pr_{x \sim X_1}[L(E, f, x) \leq \gamma(1 - \epsilon_1)]$.
\end{enumerate}

Observe that while $x$ is drawn at uniform from $X_1$ in these quantities, we still use the true loss with respect toe $\mu$, $L(E, f, x)$, to determine whether it falls into $r, g$ or $b$. Because of this, it becomes necessary to define two more \textit{fully empirical} quantities that serve as estimates of $r$ and $b$ (we ignore $g$ as it will merely contribute to a "margin" in our estimation terms). 

\begin{enumerate}
	\item Let $r' = \Pr_{x \sim X_1} \Biggl [\left(\Pr_{x' \sim X_2}[g_x(x') \neq f(x') | x' \in R_x] > \gamma\right)\text{ and }|X_2 \cap R_x| \geq \frac{\log \frac{176}{\epsilon_2\delta}}{2 (\gamma\epsilon_1)^2} \Biggr ]$.
	\item let $b' =\Pr_{x \sim X_1} \Biggl [\left(\Pr_{x' \sim X_2}[g_x(x') \neq f(x') | x' \in R_x] \leq \gamma\right)\text{ and }|X_2 \cap R_x| \geq \frac{\log \frac{176}{\epsilon_2\delta}}{2 (\gamma\epsilon_1)^2}\Biggr ]$. 
\end{enumerate}

The final estimate outputted by Algorithm \ref{alg:main} is \textit{precisely} $\frac{r'}{r' + b'}$. Thus, our proof strategy will be to show that for sufficiently large samples, $r, g, b$ are relatively accurate estimates of $r^*, g^*, b^*$, and in turn $r'$ and $b'$ are relatively accurate estimates of $r$ and $b$. Together, these will imply that our estimate is within the desired interval, $[r^*, b^*]$. 

\subsection{The main proof}

\begin{proof}
(Theorem \ref{thm:upper}) Let $$n \geq \frac{61}{\epsilon_2^2}\log\frac{12}{\delta} + \frac{\log \frac{176}{\epsilon_2\delta}}{2\lambda\gamma^2\epsilon_1^2}\log \frac{44\log \frac{176}{\epsilon_2\delta}}{\epsilon_2\delta\gamma^2\epsilon_1^2}.$$ By ignoring log factors, we see that $n = \tilde{O}\left(\frac{1}{\epsilon_2^2} + \frac{1}{\lambda\gamma^2\epsilon_1^2}\right)$, thus satisfying the desired requirement in Theorem \ref{thm:upper}.  

Let $X_1$ and $X_2$ be as in Algorithm \ref{alg:main}, and let $m, n'$ denote $|X_1|$ and $|X_2$ respectively. Directly from Algorithm \ref{alg:main}, it follows that , $$m = \frac{61}{\epsilon_2^2}\log \frac{12}{\delta}, n' \geq \frac{\log \frac{176}{\epsilon_2\delta}}{2\lambda\gamma^2\epsilon_1^2}\log \frac{44\log \frac{176}{\epsilon_2\delta}}{\epsilon_2\delta\gamma^2\epsilon_1^2}.$$  By letting $\epsilon = \frac{\epsilon_2}{7}$, and $k = \frac{\log \frac{16}{\epsilon\delta}}{2(\gamma\epsilon_1)^2}$, we have that $m \geq \frac{1}{2\epsilon^2}\log \frac{12}{\delta}$, and that 

\begin{equation*}
\begin{split}
n' &\geq \frac{\log \frac{176}{\epsilon_2\delta}}{2\lambda\gamma^2\epsilon_1^2}\log \frac{44\log \frac{16}{\epsilon\delta}}{\epsilon_2\delta\gamma^2\epsilon_1^2} \\
&= \frac{\log \frac{16}{\epsilon\delta}}{2\lambda\gamma^2\epsilon_1^2}\log \frac{4\log \frac{16}{\epsilon\delta}}{\epsilon\delta\gamma^2\epsilon_1^2} \\
&= \frac{k\log \frac{8k}{\delta\epsilon}}{\lambda}.
\end{split}
\end{equation*}

Our bounds on $m, k$ and $n'$ allow us to apply Lemmas \ref{lem:hoeffding_for_X_1} and \ref{lem:bound_difference_between_normal_and_prime} along with a union bound to get that the following equations hold simultaneously with probability at least $1-\delta$ over $X \sim \mu^n$: 
\begin{equation}\label{eqn:all_star_alpha}
|r - r^*|. |g - g^*|, |b - b^*| \leq \epsilon,
\end{equation} 
\begin{equation}\label{eqn:bounding_r_prime}
r(1-2\epsilon) \leq r' \leq r + g + b\epsilon
\end{equation}
\begin{equation}\label{eqn:bounding_b_prime}
b(1-2\epsilon) \leq b' \leq r\epsilon + g + b.
\end{equation}

Recall that our goal is to show that $\frac{r'}{r' + b'} \in [r^* - \epsilon_2, r^*+g^*+ \epsilon_2]$ holds with probability at least $1-\delta$. Thus, it suffices to show that this a simple algebraic consequence of equations \ref{eqn:all_star_alpha}, \ref{eqn:bounding_r_prime}, and \ref{eqn:bounding_b_prime}. To this end, we have
\begin{equation*}
\begin{split}
\frac{r'}{r' + b'} &\stackrel{(a)}{\geq} \frac{r(1-2\epsilon)}{r(1-2\epsilon)+ r\epsilon + g + b} \\
&\geq \frac{r(1-2\epsilon)}{r+ b+ g} \\
&\geq \frac{r}{r+b+g} - 2\epsilon \\
&\stackrel{(b)}{\geq} \frac{r^* - \epsilon}{r^* + b^* + g^* + 3\epsilon} - 2\epsilon \\
&= \frac{r^*}{1 + 3\epsilon} - \frac{\epsilon}{1+3\epsilon} - 2\epsilon \\
&\geq r^*(1 - 3\epsilon) - \epsilon - 2\epsilon \\
&\geq r^* - 4\epsilon - 2\epsilon \\
&\stackrel{(c)}{\geq} r^* - \epsilon_2.
\end{split}
\end{equation*}
Here step (a) follows from Equations \ref{eqn:bounding_r_prime} and \ref{eqn:bounding_b_prime}, (b) from Equation \ref{eqn:all_star_alpha}, and (c) from the fact that $\epsilon = \frac{\epsilon_2}{11}$.

For the other side of the inequality, we have
\begin{equation*}
\begin{split}
\frac{r'}{r' + b'} &\stackrel{(a)}{\leq} \frac{r + g + b\epsilon}{r + g + b\epsilon + b(1-2\epsilon)} \\
&\leq \frac{r + g + b\epsilon}{(r + g + b)(1-\epsilon)} \\
&\leq \frac{r + g}{(r+g+b)(1-\epsilon)} + \frac{\epsilon}{1-\epsilon} \\
&\leq \frac{r + g}{r+g+b} + 2\epsilon + \epsilon(1+2\epsilon) \\
&\stackrel{(b)}{\leq} \frac{r^* + g^* + 2\epsilon}{r^* + g^* + b^* - 3\epsilon} + 3\epsilon + 2\epsilon^2 \\
&= \frac{r^* + g^* + 2\epsilon}{1 - 3\epsilon} + 3\epsilon + 2\epsilon^2 \\
&\stackrel{(c)}{\leq} (r^* + g^* )(1 + 4\epsilon) + 2\epsilon(1+4\epsilon) + 3\epsilon + 2\epsilon^2 \\
&\leq r^* + g^* + 6\epsilon + 8\epsilon^2 + 3\epsilon + 2\epsilon^2 \\
&\stackrel{(d)}{\leq} r^* + g^* + 7\epsilon + 4\epsilon \\
&\stackrel{(e)}{\leq} r^* + g^* + \epsilon_2
\end{split}
\end{equation*}
Here step (a) follows from Equations \ref{eqn:bounding_r_prime} and \ref{eqn:bounding_b_prime}, (b) from Equation \ref{eqn:all_star_alpha}, (c) from the fact that $\frac{1}{1-3\epsilon} \leq 1 + 4\epsilon$,  (d) from $\epsilon = \frac{\epsilon_2}{11} \leq \frac{1}{8}, \frac{1}{2}$, and (e) from the $\epsilon = \frac{\epsilon_2}{11}$. 
\end{proof}

\subsection{Concentration lemmas}

In this section, we show several lemmas that allow us to bound the behavior of the random variables $r, g, b, r'$ and $b'$ (defined in section \ref{sec:proof_upper_bound_notation}). We also use $m$ and $n'$ as they are defined in Section \ref{sec:proof_upper_bound_notation} to be the sizes of $|X_1|$ and $|X_2|$ respectively. Finally, we also let $\epsilon = \frac{\epsilon_2}{11}$. 

We begin by bounding the differences between $r, g, b$ and $r^*, g^*, b^*$.

\begin{lemma}\label{lem:hoeffding_for_X_1}
Suppose that $m \geq \frac{1}{2\epsilon^2}\log \frac{12}{\delta}$. Then with probability at least $1- \frac{\delta}{2}$ over $X_1 \sim \mu^m$, the $$|r - r^*|, |g - g^*|, |b - b^*| \leq \epsilon.$$
\end{lemma}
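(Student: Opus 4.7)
The plan is to invoke Hoeffding's inequality three times and union-bound. The key observation is that because $X_1 = \{x_1,\dots,x_m\}$ consists of $m$ i.i.d.\ draws from $\mu$, each of the empirical quantities $r, g, b$ is an average of $m$ i.i.d.\ Bernoulli indicators. Specifically, writing $r = \frac{1}{m}\sum_{i=1}^m \ind(L(E,f,x_i) \geq \gamma(1+\epsilon_1))$, each summand is a $\{0,1\}$-valued random variable with expectation exactly $r^*$ by definition, and the summands are independent since the $x_i$ are. The same holds for $g$ (indicators of the event $\gamma(1-\epsilon_1)\leq L(E,f,x_i)\leq \gamma(1+\epsilon_1)$ with mean $g^*$) and for $b$ (indicators of $L(E,f,x_i)\leq \gamma(1-\epsilon_1)$ with mean $b^*$).

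Next I would apply Hoeffding's inequality to each of the three empirical averages. This yields, for each $\xi \in \{r, g, b\}$ with corresponding mean $\xi^*$,
\[
\Pr\bigl[|\xi - \xi^*| > \epsilon\bigr] \;\leq\; 2\exp\bigl(-2m\epsilon^2\bigr).
\]
Plugging in the hypothesis $m \geq \frac{1}{2\epsilon^2}\log\frac{12}{\delta}$ gives $2\exp(-2m\epsilon^2) \leq 2 \cdot \frac{\delta}{12} = \frac{\delta}{6}$ for each of the three events.

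Finally, a union bound over the three deviation events $\{|r-r^*|>\epsilon\}$, $\{|g-g^*|>\epsilon\}$, and $\{|b-b^*|>\epsilon\}$ yields a total failure probability of at most $3 \cdot \frac{\delta}{6} = \frac{\delta}{2}$. Taking the complement gives the claimed statement: with probability at least $1 - \frac{\delta}{2}$ over $X_1 \sim \mu^m$, all three inequalities $|r-r^*|, |g-g^*|, |b-b^*|\leq \epsilon$ hold simultaneously.

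There is really no obstacle here; this is a routine concentration argument. The only minor care needed is noting that the three Bernoulli events are defined with respect to the \emph{true} local loss $L(E,f,x)$ (which is a deterministic function of $x$ once $E$ and $f$ are fixed), so each indicator is genuinely a function of the single random point $x_i$ alone, making independence across $i$ immediate. No concentration over the inner distribution used to define $L(E,f,x)$ is needed at this stage; that is handled separately in Lemma~\ref{lem:bound_difference_between_normal_and_prime}.
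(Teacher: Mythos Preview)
Your proof is correct and follows essentially the same approach as the paper: apply Hoeffding's inequality to each of the three empirical averages $r,g,b$ (each an average of $m$ i.i.d.\ $\{0,1\}$-valued indicators with the corresponding mean $r^*,g^*,b^*$), obtain a $\delta/6$ failure bound for each, and union-bound. Your remark that $L(E,f,x)$ is a deterministic function of $x$ (so independence across $i$ is immediate) is a nice clarifying touch that the paper leaves implicit.
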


\begin{proof}
Observe that $r$ is the average of $m$ i.i.d binary variables each of which have expected value $r^*$. It follows by Hoeffding's inequality that
\begin{equation*}
\begin{split}
\Pr[|r - r^*| > \epsilon] &\leq 2 \exp \left(\frac{-2 (\epsilon m)^2}{m} \right) \\
&\leq 2 \exp \left( -2\epsilon^2 \frac{1}{2\epsilon^2}\log \frac{12}{\delta} \right) \\
&= \frac{\delta}{6}.
\end{split}
\end{equation*}
By an identical argument, we see that the same holds for $\Pr[|g - g^*| > \epsilon]$ and $\Pr[|b- b^*| > \epsilon]$. Applying a union bound over all three gives us the desired result. 
\end{proof}

Next, we show that if $n'$ is sufficiently large, then it is highly likely that for any given point $x$, we observe a large number of points from $X_2$ within the explanation region, $R_x$. 

\begin{lemma}\label{lem:lots_of_points_from_X_2_in_R_x}
Let $x \in supp(\mu)$, and let $k > 0$ be an integer. Suppose that $n' \geq \frac{k \log \frac{8k}{\delta\epsilon}}{\lambda}$. Then with probability at least $1 - \frac{\delta\epsilon}{8}$ over $X_2 \sim \mu^m$, $|R_x \cap X_2| \geq k$. 
\end{lemma}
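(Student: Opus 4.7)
The plan is to reduce the claim to a standard lower-tail Chernoff bound on a binomial random variable. First, since $x \in \operatorname{supp}(\mu)$ and $E$ has locality $\Lambda(E, f) = \lambda$, by Definition \ref{defn:global_locality} we have $\mu(R_x) = \Lambda(E, f, x) \geq \lambda$. Each of the $n'$ points in $X_2$ is drawn i.i.d.\ from $\mu$, so the count $Z := |R_x \cap X_2|$ is distributed as $\mathrm{Bin}(n', p)$ for some $p \geq \lambda$, and in particular $Z$ stochastically dominates a $\mathrm{Bin}(n', \lambda)$ random variable. Thus it suffices to show the tail bound $\Pr[W < k] \leq \delta \epsilon / 8$ for $W \sim \mathrm{Bin}(n', \lambda)$.

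Next, I would use the hypothesis $n' \geq \frac{k \log(8k/(\delta\epsilon))}{\lambda}$ to conclude $\mathbb{E}[W] = n'\lambda \geq k \log(8k/(\delta\epsilon))$, which is at least $k$ times a logarithmic factor and thus substantially larger than $k$. Applying the multiplicative Chernoff lower-tail bound $\Pr[W \leq (1-\alpha)\mathbb{E}[W]] \leq \exp(-\alpha^2 \mathbb{E}[W]/2)$ with $\alpha = 1 - k/\mathbb{E}[W]$ yields
\[
\Pr[W < k] \;\leq\; \exp\!\left(-\frac{(\mathbb{E}[W] - k)^2}{2\,\mathbb{E}[W]}\right).
\]
Since $\log(8k/(\delta\epsilon)) \gg 1$ in the relevant regime (otherwise the conclusion is trivial or can be handled by a direct argument for $k=1$ using $(1-\lambda)^{n'} \leq e^{-n'\lambda}$), we have $\mathbb{E}[W] \geq 2k$, so $\mathbb{E}[W] - k \geq \mathbb{E}[W]/2$, and the exponent simplifies to at least $\mathbb{E}[W]/8 \geq k \log(8k/(\delta\epsilon))/8$. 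For $k$ large enough (specifically $k \geq 8$, with the small-$k$ case handled separately or absorbed into constants), this beats $\log(8/(\delta\epsilon))$, giving $\Pr[W < k] \leq \delta\epsilon/8$.

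The main obstacle is purely bookkeeping: making sure the constants inside the logarithm (the $8k$ factor as opposed to $8$) are enough to absorb the slack in the Chernoff bound uniformly in $k$. The cleanest presentation is probably to split into the cases $k = 1$ (direct calculation with $\Pr[W=0] = (1-\lambda)^{n'}$) and $k \geq 2$ (multiplicative Chernoff as above), so that the $\log(8k/(\delta\epsilon))$ factor comfortably dominates $\log(8/(\delta\epsilon))$ by a factor of $k$. No structural difficulty is expected beyond this constant-tuning exercise.
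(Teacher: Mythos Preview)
Your approach via a binomial Chernoff bound is sound in spirit and can be pushed through, but it is genuinely different from the paper's argument, and the specific bound you invoke is a bit too loose for the stated constants when $k$ is small. The paper avoids Chernoff entirely: it partitions $X_2$ into $k$ blocks of size at least $\frac{1}{\lambda}\log\frac{8k}{\delta\epsilon}$ each, bounds the probability that a \emph{single} block misses $R_x$ by $(1-\lambda)^{|\text{block}|} \le e^{-\log(8k/(\delta\epsilon))} = \frac{\delta\epsilon}{8k}$, and then union-bounds over the $k$ blocks. If every block hits $R_x$ at least once, then $|R_x \cap X_2| \ge k$. This trick is uniform in $k$, uses only the elementary bound $(1-\lambda)^m \le e^{-\lambda m}$, and explains exactly why the factor $8k$ (rather than $8$) appears inside the logarithm. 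By contrast, your route requires either a case split on small $k$ or a sharper lower-tail inequality than $\exp(-\alpha^2\mathbb{E}[W]/2)$ (e.g.\ the form $e^{-\mu}(e\mu/k)^k$) to make the constants close for $k$ in the range $2$--$7$; the quadratic-in-$\alpha$ bound you wrote down does not quite deliver $\delta\epsilon/8$ there. So your outline is salvageable, but the paper's partitioning argument is both shorter and free of the constant-tuning you flagged.
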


\begin{proof}
Partition $X_2$ into $k$ sets, $X_2^1, X_2^2, \dots, X_2^k$ each of which contain at least $\frac{\log \frac{8k}{\delta\epsilon}}{\lambda}$ i.i.d points from $\mu$. Because each point is drawn independently, we have that for any $1 \leq i \leq k$, 
\begin{equation*}
\begin{split}
\Pr[X_2^i \cap R_x = \emptyset] &= \left(1 - \Pr_{x' \sim \mu}[x' \in R_x] \right)^{\frac{\log \frac{8k}{\delta\epsilon}}{\lambda}} \\
&\leq \left(1 - \lambda\right)^{\frac{\log \frac{8k}{\delta\epsilon}}{\lambda}} \\
&\leq \exp \left( -\log \frac{8k}{\delta\epsilon} \right) \\
&= \frac{\delta\epsilon}{8k}.
\end{split}
\end{equation*}
Here we are using the definition of $\lambda$ as a lower bound on the probability mass of $R_x$. 

Next, we show that if $R_x$ has a sufficient number of points, then it is quite likely for the empirical estimate of the local loss at $x$ to be accurate.

\begin{lemma}\label{lem:loss_is_accurate_for_a_single_point}
Let $x \in supp(\mu)$, and let $k \geq \frac{\log \frac{16}{\epsilon\delta}}{2 (\gamma\epsilon_1)^2}$. Then conditioning on there being at least $k$ elements from $X_2$ in $R_x$, the empirical local loss at $x$ differs from the true local loss by at most $\gamma\epsilon_1$ with probability at least $1 - \frac{\delta\epsilon}{8}$. That is, $$\Pr_{X_2 \sim \mu^{n'}}\left[\left|L(E, f, x) - \frac{1}{|X_2 \cap R_x|}\sum_{x' \in X_2 \cap R_x} \ind\left(g_x(x') \neq f(x')\right)\right| > \gamma\epsilon_1 \bigg | |X_2 \cap R_x| \geq k\right ] \leq \frac{\delta\epsilon}{8}.$$
\end{lemma}

\begin{proof}
The key idea of this lemma is that the distribution of $k$ points drawn from $\mu$ conditioned on being in $R_x$ is \textit{precisely} the marginal distribution over which $L(E, f, x)$ is defined. In particular, this means that the points in $X_2 \cap R_x$ can be construed as i.i.d drawn from the marginal distribution of $\mu$ over $R_x$. Given this observation, the rest of the proof is a straightforward application of Hoeffding's inequality. Letting $\hat{L}(E, f, x) = \frac{1}{|X_2 \cap R_x|}\sum_{x' \in X_2 \cap R_x} \ind\left(g_x(x') \neq f(x')\right)$ and $K = |X_2 \cap R_x|$, we have 
\begin{equation*}
\begin{split}
\Pr_{X_2 \sim \mu^{n'}}\left[ \left|L(E, f, x) - \hat{L}(E, f, x)\right| > \gamma\epsilon_1 \big | K \geq k \right] &\leq 2 \exp \left( - \frac{2(K\gamma\epsilon_1)^2}{K}\right) \\
&\leq 2 \exp \left(-\log \frac{16}{\delta\epsilon} \right) \\
&= \frac{\delta\epsilon}{8},
\end{split}
\end{equation*}
as desired. 
\end{proof}

It follows by a union bound that the probability that least one of the sets in $\{X_2^i \cap R_x: 1 \leq i \leq k\}$ is empty is at most $\frac{\delta\epsilon}{8}$. Thus with probability at least $1 - \frac{\delta\epsilon}{8}$, all the sets are non-empty which implies that $|R_x \cap X_2| \geq k$, completing the proof. 
\end{proof}

Finally, we use the previous two lemmas to show that $r'$ and $b'$ closely approximate $r$ and $b$. 
\begin{lemma}\label{lem:bound_difference_between_normal_and_prime}
Let $k \geq \frac{\log \frac{16}{\epsilon\delta}}{2 (\gamma\epsilon_1)^2}$, and suppose that $n' \geq \frac{k \log \frac{8k}{\delta\epsilon}}{\lambda}$. Then with probability at least $1 - \frac{\delta}{2}$ over $X_2 \sim \mu^{n'}$, the following equations holds:$$r(1-2\epsilon) \leq r' \leq r + g + b\epsilon,$$ $$b(1-2\epsilon) \leq b' \leq r\epsilon + g + b.$$
\end{lemma}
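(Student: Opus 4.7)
The plan is to fix $X_1$ and treat $X_2$ as the only source of randomness. For each $x \in X_1$, define the bad event $\mathcal{B}(x, X_2)$ to be the event that either $|X_2 \cap R_x| < k$ or $|X_2 \cap R_x| \geq k$ but the empirical local loss $\hat L(E,f,x) := \Pr_{x' \sim X_2}[g_x(x') \neq f(x') \mid x' \in R_x]$ deviates from $L(E,f,x)$ by more than $\gamma\epsilon_1$. Combining Lemma \ref{lem:lots_of_points_from_X_2_in_R_x} and Lemma \ref{lem:loss_is_accurate_for_a_single_point} via a union bound, and using the assumed lower bounds on $n'$ and $k$, gives $\Pr_{X_2}[\mathcal{B}(x, X_2)] \leq \delta\epsilon/4$ for every fixed $x \in \mathrm{supp}(\mu)$.

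Next, I would partition $X_1$ into three disjoint subsets $S_r, S_g, S_b$ according to whether $L(E,f,x) \geq \gamma(1+\epsilon_1)$, $L(E,f,x) \in [\gamma(1-\epsilon_1), \gamma(1+\epsilon_1))$, or $L(E,f,x) < \gamma(1-\epsilon_1)$. This partition depends only on $X_1$ (not on $X_2$), and by construction $|S_r|/m = r$, $|S_g|/m = g$, $|S_b|/m = b$. Let $N_r^{\mathrm{bad}}, N_g^{\mathrm{bad}}, N_b^{\mathrm{bad}}$ denote the number of bad points in each group. Linearity of expectation yields $\mathbb{E}[N_\alpha^{\mathrm{bad}}/m] \leq \alpha \cdot \delta\epsilon/4$ for $\alpha \in \{r,g,b\}$. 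Applying Markov's inequality group by group (with a small constant factor of slack) and union-bounding over the three groups shows that, with probability at least $1-\delta/2$ over $X_2$, simultaneously $N_\alpha^{\mathrm{bad}}/m \leq \alpha\epsilon$ for all three $\alpha$.

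Conditioned on this good event, I would derive the bounds on $r'$ and $b'$ by accounting contributions group by group. A non-bad point in $S_r$ satisfies $\hat L \geq L - \gamma\epsilon_1 \geq \gamma$, which (absorbing the equality edge case into the slack) gives $\hat L > \gamma$, so it contributes to $r'$; symmetrically, a non-bad point in $S_b$ contributes to $b'$; and every point in $S_g$ contributes to at most one of $r', b'$. These three observations give $r' \geq r - N_r^{\mathrm{bad}}/m \geq r(1-\epsilon)$ and $r' \leq r + g + N_b^{\mathrm{bad}}/m \leq r + g + b\epsilon$, with symmetric inequalities for $b'$; the extra factor of $2$ in the target bound $r(1-2\epsilon)$ absorbs the small slack introduced by the per-group Markov tightening.

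The main obstacle is matching the asymmetric shape of the claimed bound: the appearance of $b\epsilon$ in the upper bound on $r'$ (and of $r\epsilon$ in the upper bound on $b'$) forces a per-group application of Markov's inequality, rather than a single global one, so that the three Markov failures together must fit inside the $\delta/2$ failure budget. This is what pins down the $\delta\epsilon/4$ per-point bad-event probability and, in turn, the precise scaling of $k$ and $n'$ in the lemma's hypotheses.
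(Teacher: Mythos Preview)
Your approach is essentially the paper's: it too partitions $X_1$ by the value of $L(E,f,x)$, bounds the per-point bad-event probability via Lemmas~\ref{lem:lots_of_points_from_X_2_in_R_x} and~\ref{lem:loss_is_accurate_for_a_single_point}, applies Markov group-wise, and then assembles the inequalities algebraically. One bookkeeping slip: the bound on $N_g^{\mathrm{bad}}$ is never used in your final algebra (points in $S_g$ may land in either $r'$ or $b'$ without harm), so the union bound should run over the two groups $r,b$ only---with three groups you get $3\cdot\delta/4>\delta/2$. The paper avoids this by splitting the bad event into its two halves (too few samples in $R_x$; inaccurate $\hat L$) and applying Markov four times at level $\delta/8$, which is also why the lower bound comes out as $r(1-2\epsilon)$ rather than the $r(1-\epsilon)$ your combined bad event would give.
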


\begin{proof}
We begin by defining subsets of $X_1$ that correspond to $r, g, b, r'$ and $b'$. We have 
\begin{enumerate}
	\item Let $R = \left\{x \in X_1: L(E, f, x) \geq \gamma (1+ \epsilon_1)\right\}$.
	\item Let $G = \left\{x \in X_1: \gamma(1 - \epsilon_1) \leq L(E, f, x) \leq \gamma(1 + \epsilon_1)\right\}$.
	\item Let $B = \left\{x \in X_1: L(E, f, x) \leq \gamma(1 - \epsilon_1)]\right\}$.
	\item Let $R' = \left\{x \in X_1: \bigl (\Pr_{x' \sim X_2}[g_x(x') \neq f(x') | x' \in R_x] > \gamma \bigr )\text{ and }|X_2 \cap R_x| \geq \frac{\log \frac{176}{\epsilon_2\delta}}{2 (\gamma\epsilon_1)^2} \right \}$.
	\item Let $B' = \left\{x \in X_1: \bigl (\Pr_{x' \sim X_2}[g_x(x') \neq f(x') | x' \in R_x] \leq \gamma \bigr )\text{ and }|X_2 \cap R_x| \geq \frac{\log \frac{176}{\epsilon_2\delta}}{2 (\gamma\epsilon_1)^2} \right \}$.
\end{enumerate}
Observe that $r, g, b, r',$ and $b'$ are the probabilities that $x \sim X_1$ is in the sets $R, G, B, R',$ and $B'$ respectively. 

Our strategy will be to use the previous lemmas to bound the sizes of the intersections, $R' \cap R, R' \cap B, B' \cap R, B' \cap B'$. To this end, let $x \in R$ be an arbitrary point. By Lemma \ref{lem:lots_of_points_from_X_2_in_R_x}, with probability at least $1 - \frac{\delta\epsilon}{8}$ over $X_2 \sim \mu^{n'}$, $x \in R' \cup B'$. Furthermore, by Lemma \ref{lem:loss_is_accurate_for_a_single_point} (along with the definition of $R$), with probability at most $\frac{\delta\epsilon}{8}$, $x \in B'$. Applying linearity of expectation along with Markov's inequality, we get the following two bounds:
\begin{equation*}
\begin{split}
\Pr_{X_2}\left[\biggl|R \cap (X_1 \setminus (R' \cap B')\biggr| > |R|\epsilon\right] &\leq \frac{\mathbb{E}_{X_2}[ |R \cap (X_1 \setminus (R' \cap B')|]}{|R|\epsilon} \\
&\leq \frac{|R|\frac{\delta\epsilon}{8}}{|R|\epsilon} = \frac{\delta}{8},
\end{split}
\end{equation*}
\begin{equation*}
\begin{split}
\Pr_{X_2}\left[\biggl|R \cap B' \biggr| > |R|\epsilon\right] &\leq \frac{\mathbb{E}_{X_2}[ |R \cap B'|]}{|R|\epsilon} \\
&\leq \frac{|R|\frac{\delta\epsilon}{8}}{|R|\epsilon} = \frac{\delta}{8}.
\end{split}
\end{equation*}
Applying an analogous line of reasoning stating with $x \in B$, we also have
\begin{equation*}
\Pr_{X_2}\left[\biggl|B \cap (X_1 \setminus (R' \cap B')\biggr| > |B|\epsilon\right] \leq \frac{\delta}{8},
\end{equation*}
\begin{equation*}
\Pr_{X_2}\left[\biggl|B \cap R' \biggr| > |B|\epsilon\right] \leq  \frac{\delta}{8}.
\end{equation*}
Applying a union bound, none of these events occur with probability at least $1 - \frac{\delta}{2}$ over $X_2 \sim \mu^{n'}$. Thus, it suffices to show that they algebraically imply the desired inequalities. To this end, suppose none of them hold. Then we have,
\begin{equation*}
\begin{split}
r' &= \frac{|R'|}{|X_1|} \\
&= \frac{|R' \cap B| + |R'\cap G| + |R' \cap R|}{|X_1|} \\
&\leq \frac{|B|\epsilon + |G| + |R|}{|X_1|}  \\
&= b\epsilon + g + r,
\end{split}
\end{equation*}
\begin{equation*}
\begin{split}
r' &= \frac{|R'|}{|X_1|} \\
&= \frac{|R' \cap B| + |R'\cap G| + |R' \cap R|}{|X_1|} \\
&\geq \frac{0 + 0 + |R| - |R \cap B'| - |R \setminus (B' \cup R')|}{|X_1|}  \\
&\geq \frac{|R| - |R|\epsilon - |R|\epsilon}{|X_1|} \\
&= r(1-2\epsilon).
\end{split}
\end{equation*}
The upper and lower bounds on $b'$ are analogous. 
\end{proof}

\section{Proof of Theorem \ref{thm:high_dimensional_data_is_hard}}\label{app:linear_explanations_is_hard}

\subsection{Definitions and Notation}

\begin{definition}
Let $\alpha = \frac{1}{3670016d^4}$ and $\beta = \frac{1}{3584d^2}$. 
\end{definition}

\begin{definition}
Let $S_1, S_2, S_3$ be three $(d-1)$-spheres centered at the origin with radii $(1-\alpha), 1,$ and $1+\beta$ respectively for $0 < \alpha, \beta$. 
\end{definition}

\begin{definition}\label{defn:mu_for_the_spheres_stuff}
Let $\mu$ denote the data distribution so that $x \sim \mu$ is selected by first selecting $i \in \{1, 2, 3\}$ at uniform, and then selecting $x$ from $S_i$ at uniform. 
\end{definition}

\begin{definition}
Let $f$ denote the classifier $\R^d \to \{\pm 1\}$ such that $$f(x) = \begin{cases} +1 & ||x||^2 \leq 1 - \frac{\alpha}{2} \\ -1 & 1- \frac{\alpha}{2} < ||x||^2 \leq 1 + \frac{\beta}{2} \\ +1 & ||x^2|| > 1 + \frac{\beta}{2}.\end{cases}$$ 
\end{definition}

\begin{definition}
Let $x^*$ be an arbitrary point chosen on $S_3$, and let $g$ be any linear classifier, and $B(a, r)$ be any $L_2$-ball that contains $x^*$. 
\end{definition}

\begin{lemma}\label{lem:intersections_are_caps}
There exists $x \in S_2$ and $0 \leq \theta_1, \theta_2, \theta_3 \leq \pi$ such that $$S_1 \cap B(a, r) = C(S_1, x(1-\alpha), \theta_1),$$ $$S_2 \cap B(a, r) = C(S_2, x, \theta_2),$$ $$S_3 \cap B(a, r) = C(S_3, x(1+\beta), \theta_3),$$ where $C(S, x, \theta)$ denotes the spherical cap of angle $\theta$ centered at $x$ on $(d-1)$-sphere $S$ (see Definition \ref{defn:spherical_cap}). 
\end{lemma}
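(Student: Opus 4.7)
The plan is to reduce the ball inclusion condition on each sphere to a condition purely on the inner product $\langle y, a\rangle$, which is precisely what defines a spherical cap. The starting observation is that for any point $y$ on a sphere of radius $\rho$ centered at the origin, the ball membership $y \in B(a,r)$ expands to $\|y\|^2 - 2\langle y, a\rangle + \|a\|^2 \leq r^2$, i.e.\ $\langle y, a\rangle \geq \tfrac{\rho^2 + \|a\|^2 - r^2}{2}$ since $\|y\|^2 = \rho^2$ is constant on the sphere. This is a half-space condition that is invariant under rotations fixing the axis through $a$, so the intersection is a spherical cap centered at the point of the sphere closest to $a$.

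First I would handle the degenerate case $a = 0$ separately. In that case $B(a,r)$ is a ball centered at the origin, so for each sphere $S_i$ of radius $\rho_i$, the intersection $S_i \cap B(a,r)$ is either empty (if $\rho_i > r$) or all of $S_i$ (if $\rho_i \leq r$). Since we are free to pick any $x \in S_2$ and set $\theta_i \in \{0, \pi\}$, the claim holds trivially.

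For the generic case $a \neq 0$, I would define $x = a / \|a\| \in S_2$. For each sphere $S_i$ of radius $\rho_i \in \{1-\alpha, 1, 1+\beta\}$, the ball condition rewrites as
\begin{equation*}
\langle y, x\rangle \;\geq\; \frac{\rho_i^2 + \|a\|^2 - r^2}{2\|a\|},
\end{equation*}
which upon dividing by $\rho_i$ becomes $\cos(\angle(y, x)) \geq c_i$ with $c_i = \tfrac{\rho_i^2 + \|a\|^2 - r^2}{2\rho_i\|a\|}$. Setting $\theta_i = \arccos(\max(-1, \min(1, c_i))) \in [0, \pi]$, this is exactly the spherical cap of angle $\theta_i$ on $S_i$ centered at the nearest point of $S_i$ to $a$, which is $\rho_i \cdot x$. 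Clamping $c_i$ to $[-1, 1]$ handles the edge cases where $B(a,r)$ either misses $S_i$ entirely ($\theta_i = 0$) or contains all of $S_i$ ($\theta_i = \pi$).

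There is essentially no main obstacle here; the argument is pure algebra, with the only care point being that the three spheres share a common axis of symmetry, namely the line through the origin and $a$, which forces all three caps to be centered on the points $(1-\alpha)x$, $x$, and $(1+\beta)x$ along that common axis. The only thing worth double-checking is that the convention for $C(S, x, \theta)$ in Definition \ref{defn:spherical_cap} really is a cap parameterized by half-angle from a center point on $S$, so that the identification above matches the notation used in the statement.
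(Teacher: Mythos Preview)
Your proposal is correct and matches the paper's approach exactly. The paper does not give a standalone proof of this lemma (it is stated as an elementary geometric fact), but the identical computation---rewriting $\|y-a\|\le r$ on a sphere of radius $\rho$ as $\langle y/\|y\|, a/\|a\|\rangle \ge \frac{\rho^2+\|a\|^2-r^2}{2\rho\|a\|}$ and reading off $\cos\theta_i$---appears verbatim in the proof of Lemma~\ref{lem:angles_differ_by_4d}, so your argument is precisely what the authors have in mind.
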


\subsection{Main Proof}

We begin by showing that the structure of the data distribution $\mu$ provides significant difficulty for linear classifiers. At a high level, the curvature of the spheres, $S_1, S_2, S_3$, make separating them linearly only possible for small portions of the sphere. We formalize this with the following lemma.

\begin{lemma}\label{lem:linear_classifier_fail_hard}
Let $\theta \geq \frac{\pi}{4}$. Let $x$ be an arbitrary point on $S_2$, and let $T_1(x, \theta), T_3(x,\theta)$ denote the sets $$T_1(x,\theta) = C(S_2, x, \theta) \cup C(S_1, x(1-\alpha), \theta),$$ $$T_3(x,\theta)= C(S_2, x, \theta) \cup C(S_3, x(1+\beta), \theta).$$ Let $g: \R^d \to \{\pm 1\}$ denote any linear classifier. Then $g$ exhibits a loss of at least $\frac{1}{3}$ over the conditional distribution of $\mu$ restricted to either $T_1$ or $T_3$. That is, $$\Pr_{x' \sim \mu}[g(x') \neq f(x') | x' \in T_1(x,\theta)], \Pr_{x' \sim \mu}[g(x') \neq f(x') | x' \in T_3(x,\theta)] \geq \frac{1}{3}.$$ 
\end{lemma}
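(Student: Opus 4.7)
The plan is to reduce the loss bound on $T_1$ (and symmetrically $T_3$) to a slab-measure estimate on the single cap $C_2 := C(S_2, x, \theta)$, exploiting a measure-preserving bijection between the paired caps. The radial scaling $\pi(y) := (1-\alpha)y$ is a bijection $C_2 \to C(S_1, (1-\alpha)x, \theta)$; since $\pi$ commutes with rotations about the shared axis $x$, it maps the uniform measure on the first cap to the uniform measure on the second. Because $\mu$ conditioned on $T_1$ places equal mass $1/2$ on each cap (both have angular extent $\theta$, and the ratio (cap area)/(sphere area) depends only on $\theta$ and $d$), we can rewrite
\begin{equation*}
\Pr_\mu[g \neq f \mid T_1] \;=\; \tfrac{1}{2}\,\Pr_{y \sim C_2}[g(y) > 0] \;+\; \tfrac{1}{2}\,\Pr_{y \sim C_2}[g((1-\alpha)y) \leq 0].
\end{equation*}

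Writing $g(y) = w\cdot y + b$, the identity $g((1-\alpha)y) = (1-\alpha)g(y) + \alpha b$ shows that a pair $(y, (1-\alpha)y)$ is jointly correctly classified precisely when $g(y) \in \bigl(-\tfrac{\alpha b}{1-\alpha},\, 0\bigr]$; this interval is non-empty only when $b > 0$. Denoting this ``good'' set by $G_1 \subseteq C_2$, inclusion--exclusion gives $\Pr_\mu[g \neq f \mid T_1] \geq \tfrac{1}{2}\bigl(1 - \Pr[y \in G_1]\bigr)$, so it suffices to show $\Pr_{y \sim C_2}[y \in G_1] \leq \tfrac{1}{3}$. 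If $b \leq 0$ the set is empty and we are done; otherwise, $G_1$ is the intersection of $C_2$ with the slab $\{w\cdot y \in [-\tfrac{b}{1-\alpha},\, -b]\}$, whose width in the direction $w$ is $\tfrac{\alpha b}{1-\alpha} \leq \tfrac{\alpha |w|}{1-\alpha}$. Normalising by $|w|$, this becomes a slab of width at most $\tfrac{\alpha}{1-\alpha}$ in the unit direction $\hat w := w/|w|$.

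The crux is a polynomial-in-$d$ bound on the density of $\hat w\cdot y$ under $y \sim \mathrm{Unif}(C_2)$, uniformly in $\hat w$. Given such a bound, the slab mass is at most $\mathrm{poly}(d)\cdot \tfrac{\alpha}{1-\alpha}$, which is far below $1/3$ for the choice $\alpha = 1/(3670016\,d^4)$. The symmetric argument for $T_3$ uses the bijection $y \mapsto (1+\beta)y$ and the identity $g((1+\beta)y) = (1+\beta)g(y) - \beta b$; the good set becomes $\bigl\{y \in C_2 : g(y) \in (\tfrac{\beta b}{1+\beta},\, 0]\bigr\}$, non-empty only for $b < 0$, and the corresponding slab has width at most $\tfrac{\beta}{1+\beta}$. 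Since the conditions $b > 0$ and $b < 0$ are mutually exclusive, for any fixed $g$ one of the two good sets is automatically empty and the loss on that side is at least $1/2$ for free; the genuine content of the lemma is the slab-mass bound in the non-empty case on the other side, where $\beta = 1/(3584\,d^2)$ is small enough relative to the polynomial density bound.

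The principal obstacle is proving the density bound on the cap. Parametrising $y = \cos\phi\,x + \sin\phi\,v$ with $\phi \in [0,\theta]$ (density $\propto \sin^{d-2}\phi$) and $v$ uniform on the $(d-2)$-sphere orthogonal to $x$, one writes $\hat w\cdot y = \cos\psi\cos\phi + \sin\psi\sin\phi\,(v \cdot u^*)$, where $\psi$ is the angle between $\hat w$ and $x$ and $u^*$ is the unit projection of $\hat w$ onto $x^\perp$. Integrating out $v$ via the $(1-s^2)^{(d-4)/2}$ density of $s = v\cdot u^*$ and then over $\phi$, one verifies that the resulting density of $\hat w\cdot y$ is bounded by $\mathrm{poly}(d)$ uniformly in $\hat w$; the extremal cases are $\hat w$ anti-parallel to $x$ (density peaking at the cap boundary, of order $d$) and $\hat w$ perpendicular to $x$ (density peaking at $0$, of order $\sqrt{d}$). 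Since $\theta \geq \pi/4$ keeps these peaks bounded and the constants in $\alpha,\beta$ are chosen to dominate them, the slab-mass bound $\leq 1/3$ follows and completes the proof.
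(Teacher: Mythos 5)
Your overall structure matches the paper's: you lower-bound the loss by $\tfrac12\bigl(1-\Pr[G_i]\bigr)$ via the radial bijection between the paired caps, and you bound $G_i$ by a thin slab of width $O(\alpha/|w|)$ (resp.\ $O(\beta/|w|)$) in $\R^d$ --- this is exactly the role of the paper's set $A^*$ and the interval constraint on $\langle w,z\rangle$. Your observation that for a fixed $g$ at most one of $G_1,G_3$ can be nonempty (sign of $b$) is a clean simplification the paper leaves implicit. The genuine divergence is in how the slab mass is controlled. The paper deliberately avoids any density estimate on the cap: it excises the polar regions $C(S_2,\pm x,\pi/8)$ (bounded by a $\Psi$-ratio to $\le 7/30$), and on the remainder conditions on the polar angle $\phi$, after which the constraint becomes a slab of width $\beta/\sin\phi\le\beta/\sin(\pi/8)$ intersected with the \emph{full} $(d-2)$-sphere $S^{d-2}$; this is handled by the easy full-sphere estimate (Lemma~\ref{lem:dot_product_probability_inside_interval}), giving $\le 1/10$. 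You instead invoke a uniform-in-$\hat w$ density bound for $\hat w\cdot y$ with $y$ uniform on the cap $C(S_2,x,\theta)$ itself.

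That cap density bound is where the gap is: ``one verifies that the density is bounded by $\mathrm{poly}(d)$'' is not a proof, and the verification is the analytic content of the whole lemma. Note that the cap measure $\Psi(\theta)$ is exponentially small in $d$ for $\theta=\pi/4$, so the naive bound (full-sphere density divided by $\Psi(\theta)$) is exponentially large and useless; the correct $O(d)$ bound exists, but it requires uniformly controlling the mixture $\int_0^\theta p(\phi)\,\tfrac{1}{\sin\psi\sin\phi}\,f_S\!\bigl(\tfrac{T-\cos\psi\cos\phi}{\sin\psi\sin\phi}\bigr)d\phi$ over all $\psi$, in particular handling the crossover near $\sin\psi\asymp 1/\sqrt d$ where the dominant source of spread in $\hat w\cdot y$ switches from the $\phi$-marginal (peaked to width $\Theta(1/d)$ near $\phi=\theta$) to the tangential $v$-marginal. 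The paper's polar-exclusion $+$ conditional-slicing argument is precisely a device for sidestepping this cap-conditional density lemma in favor of the much easier full-sphere one. So your route is cleaner in shape and very likely correct, but its crux is left unproved; to make it rigorous you would need to state and prove the $O(d)$ cap density bound as a standalone lemma, and that is at least as much work as the paper's two-step argument. (Two smaller points: the inequality $b\le|w|$ needed for your slab-width bound should be justified --- it holds because $b>|w|$ would force $g$ to misclassify every point of $S_2$, giving loss $\ge\tfrac12$ on $T_1$ immediately; and the sign convention for $b$ you use is the opposite of the paper's, which is fine but worth flagging.)
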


Next, we show that if the local explanation region $B(a, r)$,contains a sufficiently large probability mass, then it also must include a region that takes the form given by $T_1$ or $T_3$ from Lemma \ref{lem:linear_classifier_fail_hard}. 

\begin{lemma}\label{lem:big_explanation_regions_means_trouble}
Suppose that $\mu(B(a, r)) \geq 3^{1-d}$. Let $T_1$ and $T_3$ be as defined in Lemma \ref{lem:linear_classifier_fail_hard}. Then there exist $x \in S_2$ and $\theta \geq \frac{\pi}{4}$ such that at least one of the following hold:
\begin{itemize}
	\item $T_1(x, \theta) \subseteq B(a, r)$, and $\frac{\mu(T_1(x, \theta))}{\mu(B(a, r))} \geq \frac{1}{2}$.
	\item $T_3(x, \theta) \subseteq B(a, r)$, and $\frac{\mu(T_3(x, \theta))}{\mu(B(a, r))} \geq \frac{1}{2}$.
\end{itemize}
\end{lemma}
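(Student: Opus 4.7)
The plan is to apply Lemma \ref{lem:intersections_are_caps} and perform a spherical cap analysis, leveraging the near-equality of the three cap angles that follows from the smallness of $\alpha$ and $\beta$. First, by Lemma \ref{lem:intersections_are_caps}, write $B(a,r) \cap S_i = C(S_i, p_i, \theta_i)$ with $p_1 = x(1-\alpha)$, $p_2 = x$, $p_3 = x(1+\beta)$ for a common $x \in S_2$ (all three cap centers lie along the ray through $a$ from the origin). With this $x$, the containment $T_1(x, \theta) \subseteq B(a,r)$ is equivalent to $\theta \leq \min(\theta_1, \theta_2)$, and $T_3(x, \theta) \subseteq B(a,r)$ is equivalent to $\theta \leq \min(\theta_2, \theta_3)$. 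So it suffices to show that $\theta_2 \geq \pi/4$ together with $\max(\theta_1, \theta_3) \geq \pi/4$, and then take $\theta = \pi/4$.

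To relate the three angles, I would use the law of cosines: $\cos\theta_i = (\|a\|^2 + \rho_i^2 - r^2)/(2\|a\|\rho_i)$ for $\rho_i \in \{1-\alpha, 1, 1+\beta\}$. The differences $|\cos\theta_i - \cos\theta_j|$ are linear in $\alpha$ or $\beta$, so given the choices $\alpha = 1/(3670016 d^4)$ and $\beta = 1/(3584 d^2)$, the three $\theta_i$ agree up to an error that is negligible at the scale of $\pi/4$. Hence a lower bound of $\pi/4$ on any one $\theta_i$ will transfer to the other two up to a tiny error.

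For the lower bound itself, I would translate the mass hypothesis $\mu(B(a,r)) = \frac{1}{3}\sum_i F(\theta_i) \geq 3^{1-d}$, where $F(\theta)$ denotes the fractional cap area on the $(d-1)$-sphere, into an angle lower bound via standard estimates on $F$ (integrals of $\sin^{d-2}\phi$). Combined with the closeness from the law-of-cosines step, this yields $\theta_i \geq \pi/4$ for every $i$. Setting $\theta = \pi/4$ then fits both $T_1(x,\pi/4)$ and $T_3(x,\pi/4)$ inside $B(a,r)$. For the mass ratio, compute $\mu(T_1(x,\pi/4)) = 2F(\pi/4)/3$ and $\mu(B(a,r)) = \frac{1}{3}\sum_i F(\theta_i)$; because all three $\theta_i$ are close to $\pi/4$, the ratio is essentially $2/3 \geq 1/2$ for at least one of $T_1, T_3$ (and a pigeonhole between the two choices handles the residual slack).

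The main obstacle is the quantitative step of pinning down $\theta_i \geq \pi/4$ from the mass bound $3^{1-d}$. Generic cap-area estimates only force one $\theta_i$ to be somewhat smaller than $\pi/4$, so the argument must crucially use the angle-closeness step, with the specific numerical constants in $\alpha, \beta$ tuned so that the aggregate mass condition on $\sum_i F(\theta_i)$ forces each individual angle to exceed $\pi/4$. This balancing is where the exponents $d^4$ and $d^2$ in the definitions of $\alpha, \beta$ presumably enter.
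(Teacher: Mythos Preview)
Your plan has a real gap in the mass-ratio step. You fix $\theta=\pi/4$ and then argue that $\mu(T_j(x,\pi/4))/\mu(B(a,r))$ is ``essentially $2/3$'' because the three angles are ``close to $\pi/4$.'' But the hypothesis $\mu(B(a,r))\ge 3^{1-d}$ only gives a \emph{lower} bound on the $\theta_i$; nothing prevents them from being much larger, even close to $\pi$. In that regime $\mu(B(a,r))=\tfrac{1}{3}\sum_i\Psi(\theta_i)$ is of order $1$, while $\mu(T_j(x,\pi/4))=\tfrac{2}{3}\Psi(\pi/4)$ is exponentially small in $d$, so the ratio collapses. The fixed choice $\theta=\pi/4$ cannot work; $\theta$ must scale with the actual cap angles.

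A second, related issue: you assert that all three $\theta_i$ are close because the law-of-cosines differences are linear in $\alpha,\beta$. This is not true uniformly. Writing $h(s)=\frac{s^2+\|a\|^2-r^2}{2\|a\| s}$, one has $h'(s)=\frac{1}{2\|a\|}\bigl(1+\tfrac{r^2-\|a\|^2}{s^2}\bigr)$, which blows up as $\|a\|\to 0$; and even when $\|a\|$ is moderate, the step from $h(1)$ to $h(1+\beta)$ can be of order $1$ (e.g.\ when $\|a\|$ is small but bounded below by $\beta/2$, the bound on $|h'|$ is only $O(1/\beta)$, giving $|h(1)-h(1+\beta)|=O(1)$). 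What \emph{can} be controlled, after a case analysis on $\|a\|$, is $|\theta_2-\max(\theta_1,\theta_3)|\le \tfrac{1}{4d}$, and this is exactly where the disparity $\alpha\ll\beta$ (namely $\alpha=O(\beta^2)$) is used. The paper therefore sets $\theta=\min\bigl(\max(\theta_1,\theta_3),\theta_2\bigr)$, which is within $\tfrac{1}{4d}$ of $\theta^\ast=\max_i\theta_i$, and then bounds the ratio via $\Psi(\theta)/\Psi(\theta^\ast)\ge (1-\tfrac{1}{4d})^{d-1}\ge e^{-1/4}$, yielding $\tfrac{2}{3}e^{-1/4}>\tfrac{1}{2}$. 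The lower bound $\theta\ge\pi/4$ then follows from $\theta^\ast\ge\pi/3$ (which does come from the mass hypothesis, as you suggest) minus $\tfrac{1}{4d}$. Only one of $T_1,T_3$ is guaranteed to sit inside $B(a,r)$, namely the one indexed by whichever of $\theta_1,\theta_3$ is larger.
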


We are now prepared to prove Theorem \ref{thm:high_dimensional_data_is_hard}.
\begin{proof}
(Theorem \ref{thm:high_dimensional_data_is_hard}) Suppose $B(a, r) \geq 3^{1-d}$. Then by Lemma \ref{lem:big_explanation_regions_means_trouble}, there exists $\theta \geq \frac{\pi}{4}$ such that either $T_1(x,\theta)$ or $T_3(x, \theta)$ is a subset of $B(a, r)$ and satisfies the conditions outlined above. Suppose that $T_1(x, \theta) \subseteq B(a, r)$ (the other case is analogous). 

Let $g$ be any linear classifier. Then it follows from Lemmas \ref{lem:linear_classifier_fail_hard} and \ref{lem:big_explanation_regions_means_trouble} that the loss $g$ incurs over the conditional distribution of $\mu$ over $B(a, r)$ can be bounded as follows:
\begin{equation*}
\begin{split}
\Pr_{z \sim B(a, r)}[g(z) \neq f(z)] &\geq \Pr[z \in T_1(x, \theta)]\Pr[g(z) \neq f(z) | z \in T_1(x, \theta)] \\
&\geq \frac{1}{2}\frac{1}{3} = \frac{1}{6}, 
\end{split}
\end{equation*}
which concludes the proof. 
\end{proof}

\subsection{Proof of Lemma \ref{lem:linear_classifier_fail_hard}}

We will show that the claim holds for $T_3(x, \theta)$ as the proof for $T_1(x, \theta)$ is nearly identical (as $\alpha < \beta$). Let $w \in \R^d$ be a unit vector and $b \in \R$ be a scalar such that $$g(z) = \begin{cases} 1 & \langle w, z \rangle \geq b \\ -1 & \langle w, z \rangle < b \end{cases}.$$ Our main strategy will be to find a large set of points within $T_3(x, \theta)$ such that $g(z) = g(z(1+\beta))$ for all $z$ within this set. This will force $g$ to misclassify either $z$ or $z(1+\beta)$ which will lead to our desired error bound. To this end, define $$T^* = \left\{z \in C(S_2, x, \theta):  g(z) = -1, g(z(1+\beta)) = +1, |\langle x, z \rangle| \leq \cos \frac{\pi}{8} \right\}.$$

\begin{lemma}
$\frac{\mu(T^*)}{\mu(C(S_2, x, \theta))} \leq \frac{1}{10}$. 
\end{lemma}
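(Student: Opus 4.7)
The plan is to reduce $T^*$ to the intersection of the spherical cap $C(S_2,x,\theta)$ with a thin spherical slab determined by the linear classifier $g$, and then to bound the uniform measure of this slab relative to that of the cap. Writing $g(y) = \mathrm{sign}(\langle w, y\rangle - b)$ for some unit vector $w$ and scalar $b$, the conditions $g(z) = -1$ and $g((1+\beta)z) = +1$ are equivalent to $\langle w, z\rangle \in \bigl[\tfrac{b}{1+\beta},\, b\bigr)$. In particular the set is empty unless $b > 0$, and in that case the slab has width (in the coordinate $\langle w, z\rangle$) equal to $\tfrac{b\beta}{1+\beta} \le \beta$.

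Next I would bound the conditional density of $\langle w, z\rangle$ when $z$ is drawn from the restricted cap $C(S_2,x,\theta)\cap\{|\langle x, z\rangle|\le \cos(\pi/8)\}$. I would parameterize $z \in S_2$ in coordinates aligned with $x$, writing $z = \cos\phi \cdot x + \sin\phi \cdot u$ with $u$ a unit vector in $x^\perp$, where the two cap constraints force $\phi \in [\pi/8,\theta]$. Decomposing $w = \langle w,x\rangle x + w_\perp$, one obtains
\[
\langle w, z\rangle \;=\; \langle w,x\rangle \cos\phi \;+\; \|w_\perp\|\,\sin\phi\,\langle \hat w_\perp, u\rangle.
\]
For fixed $\phi$, $u$ is uniform on $S^{d-2}\cap x^\perp$, so the density of $\langle \hat w_\perp, u\rangle$ against Lebesgue measure is the standard spherical marginal, bounded by a constant $c_{d-1} = \Theta(\sqrt{d})$. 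Consequently the conditional density of $\langle w,z\rangle$ given $\phi$ is at most $c_{d-1}/(\|w_\perp\|\sin\phi)$, and integrating in $\phi \in [\pi/8,\theta]$ (where $\sin\phi \ge \sin(\pi/8)$) yields a bound of the form $O(\sqrt{d}/\|w_\perp\|)$ on the conditional density of $\langle w,z\rangle$ on the restricted cap. Multiplying by the slab width $\beta$ gives
\[
\frac{\mu(T^*)}{\mu(C(S_2,x,\theta))} \;\le\; O\!\left(\frac{\sqrt{d}\,\beta}{\|w_\perp\|}\right).
\]

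The main obstacle is the degenerate case $\|w_\perp\| \ll 1$, where $w$ is nearly aligned with $\pm x$ and the above bound degrades. In that regime, however, $\langle w, z\rangle \approx \langle w, x\rangle\cos\phi$ is, up to an $O(\|w_\perp\|)$ error, a monotone function of $\phi$ alone, so the slab pulls back to an interval of $\phi$-values of length $O(\beta/\sin\phi) = O(\beta)$; controlling the conditional density of $\cos\phi$ on $[\pi/8,\theta]$ (which is bounded using the angular lower bound $\pi/8$ and the $(\sin\phi)^{d-2}$ weight on the cap) gives a bound that does not blow up as $\|w_\perp\|\to 0$. A case split according to whether $\|w_\perp\| \ge 1/\sqrt{d}$ or not reconciles the two regimes, and in both cases the bound takes the form $O(d\beta)$.

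Finally, plugging in $\beta = \tfrac{1}{3584\,d^2}$ makes this $O(1/d)$, which for the relevant range of $d$ is comfortably below $\tfrac{1}{10}$; I would carry out the constants in the two regimes carefully to verify the $\tfrac{1}{10}$ threshold uniformly in $d \ge 2$, completing the proof.
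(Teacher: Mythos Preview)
Your approach is essentially the paper's: both reduce $T^*$ to the slab $\langle w, z\rangle \in [b/(1+\beta), b]$ of width at most $\beta$, parameterize $z = \cos\phi\cdot x + \sin\phi\cdot u$ with $u$ uniform on $S^{d-2}\subset x^\perp$, restrict to $\phi\in[\pi/8,\theta]$, and then bound, for each fixed $\phi$, the probability that the resulting one–dimensional projection lands in a short interval. The paper packages this last step as a separate lemma (for a unit vector $v$ and $u$ uniform on a sphere, $\Pr[\langle v,u\rangle\in[s,s+t]]\le 1/10$ once $t\le 1/(1370d^2)$), applies it with $t=\beta/\sin\phi\le \beta/\sin(\pi/8)$, and integrates over $\phi$; you instead bound the marginal density by $c_{d-1}/(\|w_\perp\|\sin\phi)$ and multiply by the slab width. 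These are the same computation in different clothing.

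One genuine difference: you explicitly carry the factor $\|w_\perp\|$ and propose a case split, whereas the paper's write-up applies its auxiliary lemma to $\langle w,u\rangle$ as though the relevant direction were already a unit vector in $x^\perp$. So your treatment is in fact more careful on this point. That said, the threshold $\|w_\perp\|=1/\sqrt{d}$ is not the right cut: in your ``degenerate'' branch you need $\phi\mapsto \langle w,x\rangle\cos\phi+\|w_\perp\|\sin\phi\,\langle\hat w_\perp,u\rangle$ to have derivative bounded away from zero on $[\pi/8,7\pi/8]$, which requires $|\langle w,x\rangle|\sin(\pi/8)>\|w_\perp\|$; for small $d$ the condition $\|w_\perp\|<1/\sqrt{d}$ does not ensure this. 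Splitting instead at a fixed constant, say $\|w_\perp\|<\tfrac{1}{2}\sin(\pi/8)$, makes both branches go through uniformly and still gives your $O(d\beta)=O(1/d)$ bound; the specific choice $\beta=1/(3584d^2)$ is exactly what drives this below $1/10$.
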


\begin{proof}
Let $z$ be selected at uniform from $$C(S_2, x, \theta) \setminus \left(C\left(S_2, x, \frac{\pi}{8}\right) \cup C\left(S_2, -x, \frac{\pi}{8}\right)\right).$$ Note that $z$ definitionally satisfies that $|\langle x, z \rangle| \leq \cos \frac{\pi}{8}$. It suffices to upper bound the probability that $g(z) \neq g(z(1+\beta))$. Let $C_\phi = \{z: \langle z, x \rangle = \cos \phi\}$. Our main idea is to condition on $z \in C_\phi$, and then integrate over all choices of $\phi$. That is, if we let $\phi$ denote the random variable representing the angle between $x$ and $z$, then $$\Pr_z[g(z) = -1, g(z(1+\beta)) = +1 ] = \mathbb{E}_{\phi} \Pr_{z|\phi}[g(z) = -1, g(z(1+\beta)) = +1 ].$$ We will now bound the latter quantity. Fix any $\phi$, and observe that the conditional distribution, $z|\phi$ can be written as $$z = x\cos\phi + u\sin\phi$$ where $u$ is a random vector in $R^{d-1}$ that is uniformly distributed over the unit sphere, $S^{d-2} \subseteq R^{d-1}.$ Rewriting the condition that $g(z) \neq g(z(1+\beta))$ in terms of $u$, observe that 
\begin{equation*}
\begin{split}
g(z) = -1, g(z(1+\beta)) = +1  &\implies \langle w, z \rangle \leq b \leq \langle w, z(1+\beta) \rangle \\
&\implies \frac{b}{1+\beta} \leq \langle w, z \rangle \leq b \\
&\implies \frac{b}{1+\beta} - \langle x \cos \phi, w \rangle \leq \langle w, u\sin \phi \rangle \leq b - \langle x \cos \phi, w \rangle \\
&\implies \langle w, u \rangle \in \left[s, s + \frac{\beta}{\sin \phi}\right], 
\end{split}
\end{equation*}
where $s$ is a constant that depends solely on $b, w, x,$ and $\phi$. Note that we are using the fact that $|b| \leq (1+\beta)$ as otherwise $g$ would trivially output the same label over all $z \sim \mu$). 

By applying Lemma \ref{lem:dot_product_probability_inside_interval} along with the fact that (by definition of $\phi$) $$\frac{\beta}{\sin \phi} \leq \frac{\beta}{\sin \frac{\pi}{8}} \leq \frac{1}{1370d^2},$$ we have that $$\Pr_{u} \left[u \in \left[s, s + \frac{\beta}{\sin \phi}\right]\right] \leq \frac{1}{10},$$ which implies the desired result. 
\end{proof}

\begin{lemma}
$\frac{\mu\left(C\left(S_2, x, \frac{\pi}{8}\right) \cup C\left(S_2, -x, \frac{\pi}{8}\right)\right)}{\mu\left(C(S_2, x, \theta)\right)} \leq \frac{7}{30}$.
\end{lemma}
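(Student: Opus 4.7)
The plan is to reduce the claim to a ratio of spherical-cap measures on the $(d-1)$-sphere $S_2$. Since $\mu$ restricted to $S_2$ is uniform (by Definition \ref{defn:mu_for_the_spheres_stuff}), and both the numerator and denominator are supported on $S_2$, the ratio equals the ratio of normalized surface areas. Writing $A(\phi)$ for the normalized measure of a cap of angular radius $\phi$ on the $(d-1)$-sphere, the standard formula gives
\[
A(\phi) = c_d \int_0^\phi \sin^{d-2}(t)\,dt
\]
for some normalizing constant $c_d$.

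First I would observe that the two caps $C(S_2, x, \pi/8)$ and $C(S_2, -x, \pi/8)$ are disjoint since $\pi/8 < \pi/2$, so the numerator equals $2 A(\pi/8)$. Monotonicity of $A$ together with $\theta \geq \pi/4$ gives $A(\theta) \geq A(\pi/4)$, so it suffices to prove the clean inequality $A(\pi/8) / A(\pi/4) \leq 7/60$.

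Next I would upper bound $A(\pi/8)$ using $\sin t \leq t$,
\[
A(\pi/8) \leq c_d \int_0^{\pi/8} t^{d-2}\,dt = c_d \frac{(\pi/8)^{d-1}}{d-1},
\]
and lower bound $A(\pi/4)$ by truncating the integral to $[\pi/8,\pi/4]$ and applying Jordan's inequality $\sin t \geq 2t/\pi$,
\[
A(\pi/4) \geq c_d \int_{\pi/8}^{\pi/4} (2t/\pi)^{d-2}\,dt = c_d \frac{\pi}{2} \cdot \frac{(1/2)^{d-1} - (1/4)^{d-1}}{d-1}.
\]
Taking the quotient, $c_d$ and $d-1$ cancel, and using $(1/4)^{d-1} \leq \tfrac{1}{2}(1/2)^{d-1}$ yields a bound of the form $C (\pi/4)^d$ for an absolute constant $C$. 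Since $\pi/4 < 1$, this decays exponentially in $d$, and a short calculation shows it falls below $7/60$ once $d$ is larger than a small absolute threshold.

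The main obstacle is not technical but contextual: the bound requires $d$ to be at least some absolute constant, which is implicit in the high-dimensional regime targeted by Theorem \ref{thm:high_dimensional_data_is_hard} (whose conclusion involves the vanishing quantity $3^{1-d}$). If a uniform-in-$d$ estimate is wanted, one can instead use tighter concentration properties of $\sin^{d-2}(t)$ near $t = \pi/4$ (Laplace's method), but the cruder Jordan/$\sin t \leq t$ sandwich above already suffices for all $d$ in the range where the surrounding argument is nontrivial.
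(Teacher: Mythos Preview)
Your argument is correct, but it takes a different and somewhat cruder route than the paper's. Both proofs first reduce to bounding $2\,\Psi(\pi/8)/\Psi(\pi/4)$ via symmetry and $\theta\ge\pi/4$. The paper then invokes its Lemma~\ref{lem:upper_bound_psi}, which (with $c=\tfrac12$, $\theta=\tfrac{\pi}{4}$) yields
\[
\frac{2\,\Psi(\pi/8)}{\Psi(\pi/4)} \;\le\; 2\cdot\frac{1}{2}\left(\frac{\sin(\pi/8)}{\sin(\pi/4)}\right)^{d-2}\!\approx\;(0.541)^{\,d-2},
\]
falling below $7/30$ already for $d\ge 5$, which is exactly the dimension threshold the paper assumes in Theorem~\ref{thm:high_dimensional_data_is_hard}. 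Your sandwich using $\sin t\le t$ above and Jordan's inequality below is more elementary (no auxiliary lemma needed) but looser: it produces a bound of order $(\pi/4)^{d-1}\approx(0.785)^{\,d-1}$, which drops below $7/60$ only around $d\ge 11$. So your approach buys independence from the paper's custom spherical-cap lemma at the cost of a larger dimension threshold; since the paper actually relies on $d\ge 5$ elsewhere in the proof of Theorem~\ref{thm:high_dimensional_data_is_hard}, you should either tighten your estimate (e.g.\ replace Jordan on $[\pi/8,\pi/4]$ by the sharper pointwise bound $\sin t \ge t\cdot\frac{\sin(\pi/4)}{\pi/4}$, which recovers essentially the paper's ratio) or make the higher dimension requirement explicit.
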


\begin{proof}
By symmetry, $\mu\left(C\left(S_2, x, \frac{\pi}{8}\right)\right) = \mu\left(C\left(S_2, -x, \frac{\pi}{8}\right)\right)$ so it suffices to bound one of them. Since $\theta \geq \frac{\pi}{4}$ by assumption, applying Lemma \ref{lem:upper_bound_psi}, we have
\begin{equation*}
\begin{split}
\frac{\mu\left(C\left(S_2, x, \frac{\pi}{8}\right) \cup C\left(S_2, -x, \frac{\pi}{8}\right)\right)}{\mu\left(C(S_2, x, \theta)\right)} &\leq \frac{2\mu\left(C\left(S_2, x, \frac{\pi}{8}\right)\right)}{\mu\left(C(S_2, x, \theta)\right)} \\
&\leq \frac{2\mu\left(C\left(S_2, x, \frac{\pi}{8}\right)\right)}{\mu\left(C\left(S_2, x, \frac{\pi}{4}\right)\right)} \\
&\leq 2\frac{1}{2}\left(\frac{\sin \frac{\pi}{8}}{\sin \frac{\pi}{4}}\right)^{d-2} \\
&\leq \frac{7}{30},
\end{split}
\end{equation*}
as $d \geq 5$ in the assumption of Theorem \ref{thm:high_dimensional_data_is_hard}. 
\end{proof}

We are now prepared to prove the main lemma.

\begin{proof}
(Lemma \ref{lem:linear_classifier_fail_hard}) Let $A^* \subseteq C(S_2, x, \theta)$ be defined as the set of all points for which $g$ classifies both the point and its image in $(1+\beta)S_3$ correctly. That is, $$A^* = \{z \in C(S_2, x, \theta): g(z) = -1, g((1+\beta)z) = +1\}.$$ By the previous two lemmas, we have
\begin{equation*}
\begin{split}
\frac{\mu(A^*)}{\mu(C(S_2, x, \theta))} &\leq \frac{\mu(T^* \cup C(S_2, x, \frac{\pi}{8}) \cup C(S_2, -x, \frac{\pi}{8}))}{\mu(C(S_2, x, \theta))} \\
&\leq \frac{1}{10} + \frac{7}{30} = \frac{1}{3}
\end{split}
\end{equation*}
Each $z \in A^*$ is a point for which both $z$ and $(1+\beta)z$ are correctly classified, and each $z \in C(S_2, x, \theta) \setminus A^*$ corresponds to either $z$ being misclassified, or $(1+\beta)z$ being misclassified. It follows that the overall accuracy of $g$ over $T_3(x, \theta)$ is at most 
\begin{equation*}
\begin{split}
\Pr_{z \sim T_3(x, \theta)}[g(z) = f(z)] &\leq \Pr_{z \sim C(S_2, x, \theta)}[z \in A^*] + \frac{1}{2}\Pr_{z \sim C(S_2, x, \theta)}[z \notin A^*] \\
&\leq \frac{1}{2}\left(1 + \Pr_{z \sim C(S_2, x, \theta)}[z \in A^*]\right) \\
&\leq \frac{2}{3}
\end{split}
\end{equation*}
Thus $g$ must incur loss at least $\frac{1}{3}$ over $T_3(x, \theta)$, as desired. 
\end{proof}

\subsection{Proof of Lemma \ref{lem:big_explanation_regions_means_trouble}}

Throughout this section, we assume that $\mu(B(a, r)) \geq 3^{1-d}$.

\begin{lemma}\label{lem:theta_is_big_ish}
$\max(\theta_1, \theta_2, \theta_3) \geq \frac{\pi}{3}$. 
\end{lemma}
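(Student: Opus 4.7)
The plan is to argue by contraposition. Suppose that $\theta_1, \theta_2, \theta_3 < \pi/3$. I will show this forces $\mu(B(a,r)) < 3^{1-d}$, contradicting the standing hypothesis of the section.

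By Definition \ref{defn:mu_for_the_spheres_stuff}, $\mu$ is the equal-weight mixture of the uniform measures $U_1, U_2, U_3$ on the spheres $S_1, S_2, S_3$, so
\[
\mu(B(a,r)) \;=\; \frac{1}{3}\sum_{i=1}^{3} U_i(S_i \cap B(a,r)).
\]
By Lemma \ref{lem:intersections_are_caps}, each intersection $S_i \cap B(a,r)$ is a spherical cap of angular radius $\theta_i$. The $U_i$-mass of such a cap depends only on $\theta_i$ and $d$ (not on the radius of $S_i$, since $U_i$ is invariant under rescaling), and equals the standard cap fraction
\[
U_i(S_i \cap B(a,r)) \;=\; \frac{\int_0^{\theta_i}\sin^{d-2}\phi\,d\phi}{\int_0^\pi \sin^{d-2}\phi\,d\phi}.
\]
So it suffices to bound this ratio by $3^{1-d}$ whenever $\theta_i < \pi/3$, then sum and divide by~$3$.

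For the numerator, I use that $\sin$ is increasing on $[0,\pi/2]$ to obtain $\int_0^{\theta_i}\sin^{d-2}\phi\,d\phi \leq \theta_i\,(\sin\theta_i)^{d-2} < (\pi/3)(\sqrt{3}/2)^{d-2}$. For the denominator, I exploit that $\sin^{d-2}\phi$ is sharply peaked near $\pi/2$ for large $d$: a Wallis / Laplace-method estimate lower-bounds the full integral by a quantity of order $d^{-1/2}$, which beats $(\sqrt{3}/2)^{d-2}$ exponentially. Averaging over the three spheres preserves the bound and yields the desired strict inequality $\mu(B(a,r)) < 3^{1-d}$, contradicting the hypothesis.

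The main obstacle will be pinning down the constants precisely enough at the boundary $\theta_i \to \pi/3$: a naive rectangle bound on the numerator only contributes an exponential decay of rate $(\sqrt{3}/2)^d$, which is not automatically smaller than $3^{1-d}$ for every~$d$. To close the gap I would refine the numerator estimate by concentrating the integrand near its endpoint $\theta_i$ (e.g., splitting $[0,\theta_i]$ into a small window near $\theta_i$ and a tail where $\sin\phi$ is significantly smaller, then summing a geometric series), thereby squeezing out an additional polynomial-in-$d$ factor sufficient to cross the $3^{1-d}$ threshold throughout the dimension regime $d \geq 5$ assumed in Theorem \ref{thm:high_dimensional_data_is_hard}.
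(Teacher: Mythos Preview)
Your contraposition and the reduction $\mu(B(a,r)) = \tfrac{1}{3}\sum_i \Psi(\theta_i) < \Psi(\pi/3)$ mirror the paper exactly; the paper then simply cites Lemma~\ref{lem:exponential_bound_on_behavior_of_psi} to conclude $\Psi(\pi/3) \le 3^{1-d}$, whereas you attempt to reach the same bound by hand via integral estimates.

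The gap is that your proposed repair cannot work, and not for the reason you diagnose. The shortfall between your numerator rate $(\sqrt{3}/2)^{d}$ and the target $3^{-d}$ is not a polynomial factor but an exponential one: their ratio is $(3\sqrt{3}/2)^d \approx 2.6^d$. Endpoint-splitting or geometric-series sharpening of $\int_0^{\theta_i}\sin^{d-2}\phi\,d\phi$ can only buy you polynomial factors in $d$, so you will never cross the $3^{1-d}$ threshold this way. In fact the inequality $\Psi(\pi/3) \le 3^{1-d}$ that both you and the paper aim for is \emph{false}: Lemma~\ref{lem:exponential_bound_on_behavior_of_psi} with $\theta=\pi$, $c=\tfrac{1}{3}$ gives the \emph{reverse} bound $\Psi(\pi/3) \ge 3^{1-d}$, and a direct computation at $d=5$ yields $\Psi(\pi/3)=5/32\approx 0.156$ versus $3^{-4}\approx 0.012$. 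So the paper's proof quotes its own lemma with the inequality sign flipped, and your attempt founders on the same (genuinely false) intermediate claim; neither argument closes as written.
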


\begin{proof}
Assume towards a contradiction that this does not hold. Let $x$ be as in Lemma \ref{lem:intersections_are_caps}. Then by the Definition of $\mu$ (Definition \ref{defn:mu_for_the_spheres_stuff}) and Lemma \ref{lem:intersections_are_caps}, it follows that
\begin{equation*}
\begin{split}
\mu(B(a, r)) &= \mu(C(S_1, x(1-\alpha), \theta_1)) + \mu(C(S_2, x, \theta_2)) + \mu(C(S_3, x(1+\beta), \theta_3)) \\
&= \frac{1}{3}\left(\Psi(\theta_1) + \Psi(\theta_2) + \Psi(\theta_3)\right) \\
&< \Psi\left(\frac{\pi}{3}\right),
\end{split}
\end{equation*}
where $\Psi$ is as defined in Section \ref{app:sec:spherical_caps}. However, Lemma \ref{lem:exponential_bound_on_behavior_of_psi} implies that $\Psi\left(\frac{pi}{3}\right) \leq 3^{1-d}\Psi(\pi) = 3^{1-d}$. This contradicts our assumption on $\mu(B(a, r))$ and implies the desired result. 
\end{proof}

\begin{lemma}\label{lem:r_is_big_ish}
$r \geq 1-\alpha$. 
\end{lemma}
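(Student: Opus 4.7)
The plan is to argue by contradiction: assume $r < 1-\alpha$ and derive a contradiction with the standing hypothesis $\mu(B(a,r)) \geq 3^{1-d}$. Since $x^*$ lies on $S_3$ and inside $B(a,r)$, the triangle inequality gives $c := \|a\| \geq (1+\beta) - r$; combined with $r < 1-\alpha$ this implies $c > \alpha+\beta > 0$, so $x := a/c$ is a well-defined unit vector. By Lemma \ref{lem:intersections_are_caps}, each intersection $S_i \cap B(a,r)$ is a spherical cap $C(S_i, \rho_i x, \theta_i)$, and the law of cosines applied to the triangle with vertices $0, a,$ and a boundary point of the cap yields
\[
\cos\theta_i \;=\; \frac{c^2 + \rho_i^2 - r^2}{2c\rho_i}, \qquad i=1,2,3,
\]
with $\rho_1 = 1-\alpha$, $\rho_2 = 1$, $\rho_3 = 1+\beta$.

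Next I would bound each $\theta_i$ from above. The condition $\theta_i \geq \pi/3$ rearranges to the quadratic $c^2 - c\rho_i + \rho_i^2 - r^2 \leq 0$, whose discriminant $4r^2 - 3\rho_i^2$ is strictly negative when $r < \rho_i\sqrt{3}/2$. Since $\rho_1$ is the smallest of the three radii, the assumption $r < (1-\alpha)\sqrt{3}/2$ forces $\theta_1, \theta_2, \theta_3 < \pi/3$ simultaneously; by Lemma \ref{lem:theta_is_big_ish} this already contradicts $\mu(B(a,r)) \geq 3^{1-d}$. Thus the only remaining possibility is $r \in [(1-\alpha)\sqrt{3}/2,\, 1-\alpha)$, where $\theta_1$ may legitimately exceed $\pi/3$.

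In this boundary strip I would exploit the constraint $c \geq 1+\beta - r$ coming from $x^* \in B(a,r)$, which binds whenever $r$ is close to $1-\alpha$ (since then $c$ is forced away from the unconstrained optimizer $c = (1-\alpha)/2$). Substituting the binding value $c = 1+\beta - r$ into the expression for $\cos\theta_1$ and enforcing $\cos\theta_1 \leq 1/2$ yields the quadratic inequality $(1+2\beta+\alpha-2r)^2 \leq 4r^2 - 3(1-\alpha)^2$; an analogous substitution shows that, at this same $c$, both $\cos\theta_2$ and $\cos\theta_3$ are pinned close to $1$, so $\theta_2,\theta_3$ remain well below $\pi/3$. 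Plugging the resulting upper bounds on $\sin\theta_i$ into the exponential decay estimate of Lemma \ref{lem:exponential_bound_on_behavior_of_psi} shows that the total mass $\mu(B(a,r)) = \tfrac{1}{3}\sum_i \Psi(\theta_i)$ stays strictly below $3^{1-d}$, completing the contradiction.

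The main obstacle is precisely this boundary regime $r \in [(1-\alpha)\sqrt{3}/2,\, 1-\alpha)$: here the trivial bound $\theta_i < \pi/3$ fails, and one must extract the required contradiction from a quantitatively delicate interplay between the law-of-cosines constraint on $c$, the prescribed smallness of $\alpha = \Theta(d^{-4})$ and $\beta = \Theta(d^{-2})$, and the exponential decay of $\Psi$. This is the step where the specific numerical choices of $\alpha$ and $\beta$ in the definitions are genuinely used.
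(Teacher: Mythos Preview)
The paper's proof is two lines: invoke Lemma~\ref{lem:theta_is_big_ish} to get some $\theta_i\ge\pi/3$, then assert by ``basic geometry'' that this forces $r\ge\min_i\rho_i=1-\alpha$. Your proposal is far more elaborate because you have, in effect, noticed that this second step is not immediate: from $\cos\theta_i\le\tfrac12$ the law of cosines only yields $r^2\ge c^2-c\rho_i+\rho_i^2\ge\tfrac34\rho_i^2$, hence merely $r\ge\rho_i\sqrt3/2$, which is precisely why you isolate the residual strip $r\in[(1-\alpha)\sqrt3/2,\,1-\alpha)$ for separate treatment.

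Your handling of that strip, however, has a genuine gap. You restrict attention to the binding value $c=1+\beta-r$ without arguing it is extremal, and then assert that at this $c$ the total mass $\mu(B(a,r))$ must fall below $3^{1-d}$. That assertion fails. Take $r=1-\beta<1-\alpha$ and $c=1+\beta-r=2\beta$ (the very binding value you study); direct substitution gives $\cos\theta_2=(2+3\beta)/4$ and $\cos\theta_1=\tfrac12+O(\alpha/\beta)$, so both $\theta_1$ and $\theta_2$ sit within $O(\beta)$ of $\pi/3$, while $\theta_3=0$. Hence $\mu(B(a,r))\approx\tfrac23\Psi(\pi/3)$. But $\Psi(\pi/3)$ decays only like $(\sqrt3/2)^{d}$, not like $3^{-d}$; already at $d=5$ one computes $\Psi(\pi/3)=5/32$, so $\tfrac23\Psi(\pi/3)\approx 0.104$, which vastly exceeds $3^{-4}\approx 0.012$. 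No appeal to the specific magnitudes of $\alpha$ and $\beta$ rescues this, since the computation already uses them. Thus the contradiction you aim for in the boundary regime does not materialize, and the ``quantitatively delicate interplay'' you describe cannot be made to work as stated.
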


\begin{proof}
Lemma \ref{lem:theta_is_big_ish} implies that $B(a, r)$ must intersect some sphere among $S_1, S_2, S_3$ in a spherical cap of an angle at least $\frac{\pi}{3}$. Basic geometry implies that $r \geq \min(rad(S_1), rad(S_2), rad(S_3))$ where $rad(S_i)$ denotes the radius of $S_i$. The desired result follows from the fact that $1-\alpha - rad(S_1) \leq rad(S_2), rad(S_3)$. 
\end{proof}

\begin{lemma}\label{lem:angles_differ_by_4d}
$|\theta_2 - \max(\theta_1, \theta_3)| \leq \frac{1}{4d}$. 
\end{lemma}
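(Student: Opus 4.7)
Starting from the formula $\cos\theta_i = (R_i^2+\rho^2-r^2)/(2R_i\rho)$ (with $R_1=1-\alpha$, $R_2=1$, $R_3=1+\beta$, and $\rho=\|a\|$), a direct substitution of $2\rho\cos\theta_2 = 1+\rho^2-r^2$ yields the compact identities
\[
\cos\theta_1-\cos\theta_2 = \frac{\alpha}{1-\alpha}\Bigl(\cos\theta_2-\frac{2-\alpha}{2\rho}\Bigr),\qquad \cos\theta_3-\cos\theta_2 = \frac{\beta}{1+\beta}\Bigl(\frac{2+\beta}{2\rho}-\cos\theta_2\Bigr).
\]
Their signs show that $\theta_1>\theta_2$ iff $\rho<\sqrt{r^2+1-\alpha}$ and $\theta_3>\theta_2$ iff $\rho>\sqrt{r^2+1+\beta}$, so these thresholds carve $\rho$ into three regimes that determine whether $\max(\theta_1,\theta_3)$ equals $\theta_1$, $\theta_3$, or $\theta_2$ itself.

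In each regime I first bound $|\cos\theta_{\max}-\cos\theta_2|$ using the identities and then convert to an angle difference. For the conversion I use $\cos\theta_a-\cos\theta_b=-2\sin\tfrac{\theta_a+\theta_b}{2}\sin\tfrac{\theta_a-\theta_b}{2}$ together with $|\sin x|\geq(2/\pi)|x|$ on $|x|\leq\pi/2$. By Lemma~\ref{lem:theta_is_big_ish} the larger of the two compared angles is always $\geq\pi/3$, so whenever the smaller is $\leq 2\pi/3$ the factor $\sin((\theta_a+\theta_b)/2)\geq 1/2$ yields the linear bound $|\theta_a-\theta_b|\leq\pi|\cos\theta_a-\cos\theta_b|$. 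In the opposite case, where both compared angles are $\geq 2\pi/3$ and hence near $\pi$, I substitute $\tilde\theta=\pi-\theta$ and apply $\sin t\geq(2/\pi)t$ inside $\cos\tilde\theta_a-\cos\tilde\theta_b=\int\sin t\,dt$ to obtain the square-root estimate $|\theta_a-\theta_b|^2\leq\pi|\cos\theta_a-\cos\theta_b|$.

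The factor $1/\rho$ in the cosine-difference identities is controlled by combining $r\geq 1-\alpha$ (Lemma~\ref{lem:r_is_big_ish}) with $x^*\in S_3\cap B(a,r)$, which gives $r+\rho\geq 1+\beta$ and hence $\rho\geq(\alpha+\beta)/2$ whenever $B(a,r)$ does not already contain $S_2$. If it does contain $S_2$ then $\theta_2=\pi$, and since $S_1$ has smaller radius it is contained too, so $\max(\theta_1,\theta_3)=\theta_1=\pi=\theta_2$ and the lemma is immediate; the subcase where only $S_1$ is contained reduces to bounding $\pi-\theta_2$ via $1+\cos\theta_2=O(\alpha/\rho)$ and the same square-root conversion. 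In the remaining non-containment configurations, plugging $\rho\geq(\alpha+\beta)/2\geq\beta/2$, $|\cos\theta_2|\leq 1$, and the explicit values $\alpha=1/(3670016\,d^4)$ and $\beta=1/(3584\,d^2)$ through the chain above yields $|\theta_{\max}-\theta_2|\leq 1/(4d)$ in every case.

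The delicate step, and the main obstacle, is the edge case $\rho\approx(\alpha+\beta)/2$. There $|\cos\theta_1-\cos\theta_2|$ is only of order $\alpha/\beta$, but because both $\theta_1$ and $\theta_2$ are forced close to $\pi$ the conversion must pass through the square-root estimate, returning $|\theta_1-\theta_2|=O(\sqrt{\alpha/\beta})$. The ratio $\alpha/\beta=1/(1024\,d^2)$ hard-wired into the definitions in Section~\ref{app:linear_explanations_is_hard} is exactly what makes $\sqrt{\alpha/\beta}=1/(32\,d)$ lie comfortably below $1/(4d)$; this is where the numerical constants $3670016$ and $3584$ earn their keep.
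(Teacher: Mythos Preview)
Your plan is correct and leads to the same bound, but the route differs from the paper's in several respects worth noting.

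\textbf{Cosine differences.} The paper introduces the single function $h(s)=(s^2+a^2-r^2)/(2as)$ with $\cos\theta_i=h(R_i)$ and bounds $|h'(s)|$ on $[1-\alpha,1+\beta]$; your explicit identities
\[
\cos\theta_1-\cos\theta_2=\frac{\alpha}{1-\alpha}\Bigl(\cos\theta_2-\tfrac{2-\alpha}{2\rho}\Bigr),\qquad
\cos\theta_3-\cos\theta_2=\frac{\beta}{1+\beta}\Bigl(\tfrac{2+\beta}{2\rho}-\cos\theta_2\Bigr)
\]
are an elegant replacement for that mean-value step and make the sign analysis transparent.

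\textbf{Case split.} The paper splits on the size of $\rho=\|a\|$ (namely $\rho\le\beta/2$, $\beta/2<\rho\le 1-\alpha$, $\rho>1-\alpha$) and in each regime bounds $|h'|$ by $4/\beta$ or $14$; you split instead by which of $\theta_1,\theta_3$ dominates and by whether $B(a,r)$ already swallows $S_1$ or $S_2$. Both decompositions cover the same ground. One small slip: your bound $\rho\ge(\alpha+\beta)/2$ follows from ``$B(a,r)$ does not contain $S_1$'' (combine $r<1-\alpha+\rho$ with $\rho\ge 1+\beta-r$), not from ``does not contain $S_2$'', which only yields $\rho>\beta/2$. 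Since you immediately weaken to $\rho\ge\beta/2$ anyway, this does not break the argument, but the stated implication is misattributed.

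\textbf{Angle conversion.} The paper uses one universal inequality (its Lemma~\ref{lem:cosine_inequality}): $|\phi_1-\phi_2|\le 4\sqrt{|\cos\phi_1-\cos\phi_2|}$, applied uniformly in every case. Your two-tier conversion (linear when $(\theta_a+\theta_b)/2\in[\pi/6,5\pi/6]$, square-root near $\pi$) is sharper but costs an extra case distinction; the paper's approach trades a looser constant for uniformity. Both land exactly on $1/(4d)$ because the constants $3670016$ and $3584$ were chosen so that $8\sqrt{\alpha/\beta}=1/(4d)$ and $4\sqrt{14\beta}=1/(4d)$, and your slightly better conversion constants leave slack.

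In short: same skeleton (bound $|\cos\theta_{\max}-\cos\theta_2|$ via the law of cosines, then convert), but you replace the derivative estimate by closed-form identities and the one-size-fits-all square-root lemma by a regime-dependent conversion. The paper's version is shorter to write; yours isolates more clearly where each numerical constant is spent.
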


\begin{proof}
We first compute $\theta_1, \theta_2, \theta_3$ in terms of $a, r, \alpha,$ and $\beta$. We begin with $\theta_2$, and note that the expressions for $\theta_1$ and $\theta_3$ can be similarly derived. To this end, we have
\begin{equation*}
\begin{split}
S_2 \cap B(a, r) &= \left\{x: ||x|| = 1, ||x-a||\leq r\right\} \\
&= \left\{x: ||x|| = 1, \langle x, x \rangle - 2\langle x, a \rangle + \langle a, a \rangle \leq r^2\right\} \\
&=\left\{x: ||x|| = 1, \left\langle \frac{x}{||x||}, \frac{a}{||a||}\right\rangle \geq \frac{1 + a^2 - r^2}{2a}\right\},
\end{split}
\end{equation*}
where we use $a$ to denote $||a||$ in a slight abuse of notation. 

It follows from Lemma \ref{lem:intersections_are_caps} that $$\cos \theta_2 = \frac{1 + a^2 - r^2}{2a}.$$ We can similarly show that $$\cos \theta_1 = \frac{(1-\alpha)^2 + a^2 - r^2}{2(1-\alpha)a}, \cos \theta_3 = \frac{(1 + \beta)^2 + a^2 - r^2}{2(1+\beta)a}.$$ Let $h: \R \to \R$ be the function defined as $h(s) = \frac{s^2 + a^2 - r^2}{2sa}$. Thus, $$\cos \theta_1 = h(1-\alpha), \cos \theta_2 = h(1), \cos \theta_3 = h(1+\beta).$$ Note that in cases where $h$ is outside of the interval $[-1, 1]$ (meaning $\theta_i$ would not be defined), we simply set $\theta_i$ equal to $\pi$ and $0$ respectively, as these quantities still accurately describe the intersection between $B(a, r)$ and the corresponding sphere, $S_i$. 

\paragraph{Case 1: $0 \leq a \leq \frac{\beta}{2}$}\emph{ }\\
By definition, $B(a, r)$ contains $x^*$ and therefore intersects $S_3$. It follows from the triangle inequality that $r \geq 1 + \frac{\beta}{2}$. However, this implies that $B(a, r)$ must contain the entirety of $S_2$ and $S_1$, which implies that $\theta_1= \theta_2 = \max(\theta_1, \theta_3) = \pi$, thus implying the lemma statement. 

\paragraph{Case 2: $\frac{\beta}{2} <  a \leq 1-\alpha$}\emph{ }\\
If $r > 1 + 2\beta$, then $B(a, r)$ will contain $S_1, S_2$ and $S_3$, which implies $\theta_1 = \theta_2 = \theta_3 = \pi$ (implying the lemma statement). Thus, assume $r \leq 1+2\beta$.  

Differentiating $h$ w.r.t. $s$ gives $$h'(s) = \frac{1}{2a}\left(1 + \frac{r^2-a^2}{s^2}\right).$$ By Lemma \ref{lem:r_is_big_ish}, $r^2 \geq a^2$, which implies that $h'(s)$ is nonnegative for $s \in [1-\alpha, 1+\beta]$. Furthermore, we have that over the interval, $[1-\alpha, 1+\beta]$, 
\begin{equation*}
\begin{split}
h'(s) &= \frac{1}{2a}\left(1 + \frac{r^2-a^2}{s^2}\right) \\
&\leq \frac{1}{\beta}\left(1 + \frac{(1+2\beta)^2 - \left(\frac{\beta}{2}\right)^2}{(1-\alpha)^2}\right) \\
&= \frac{1}{\beta}\left(1 + \frac{1 + 4\beta + 3.75\beta^2}{(1-\alpha)^2}\right) \\
&\leq \frac{1}{\beta}\left(1 + \frac{1 + 4(0.25) + 3.75(0.25)^2}{0.875^2}\right) \\
&\leq \frac{4}{\beta}.
\end{split}
\end{equation*}
This is obtained by substituting appropriate upper and lower bounds for $r, a, s, \alpha,$ and $\beta$. Because $h'(s)$ is nonnegative over the interval, we must have that $h(1-\alpha) \leq h(1) \leq h(1+\beta)$ which implies $\theta_1 \geq \theta_2 \geq \theta_3$ (as $\cos$ is a decreasing function). It follows from our upper bound on $h'(s)$ that  
\begin{equation*}
\begin{split}
|\cos \theta_2 - \cos \left(\max(\theta_1, \theta_3)\right)| &= \cos(\theta_2) - \cos(\theta_1) \\
&= h(1) - h(1-\alpha) \\
&= \int_{1-\alpha}^1 h'(s)ds \\
&\leq \int_{1-\alpha}^1 \frac{4}{\beta}ds\\
&= \frac{4\alpha}{\beta}.
\end{split}
\end{equation*}
Applying Lemma \ref{lem:cosine_inequality} implies that $|\theta_2 - \max(\theta_1, \theta_3)| \leq 8\sqrt{\frac{\alpha}{\beta}} = \frac{1}{4d}$, which implies the lemma statement.

\paragraph{Case 3: $a > 1-\alpha$}\emph{ }\\
First suppose that $|a- r| > 3$. If $r > a+3$, then the triangle inequality implies that $S_1, S_2, S_3 \subseteq B(a, r)$ which implies the desired result. On the other hand, if $r < a -3$, then we must have $a > 3$, and that $B(a, r)$ is disjoint from $S_1, S_2, S_3$ which again implies the desired result. Thus, we assume that $|a- r| \leq 3$.

We now use a similar strategy to the previous case, and bound the derivative, $h'(s)$. By substituting that $|a-r| \leq 3$, we have, for $s \in [1-\alpha, 1+\beta]$,
\begin{equation*}
\begin{split}
|h'(s)| &= \left|\frac{1}{2a}\left(1 + \frac{r^2-a^2}{s^2}\right)\right| \\
&= \left|\frac{1}{2a}\left(1 + \frac{(2a + 3)(3)}{s^2}\right)\right| \\
&= \left|\frac{1}{2a}\left(1 + \frac{(2a + 3)(3)}{(1-\alpha)^2}\right)\right| \\
&= \left|\frac{1}{2a}\left(1 + \frac{(2a + 3)(3)}{(0.875)^2}\right)\right| \\
&\leq \left|\frac{1}{2a}\left(1 + 4(2a+3)\right)\right| \\
&\leq \left|\frac{1}{2a}\left(13 + 8a\right)\right| \\
&\leq 4 + 10 = 14.
\end{split}
\end{equation*}
Here we are exploiting the fact that $1 - \alpha \geq \frac{\sqrt{3}}{2}, 0.65$. It follows by the same argument given in Case 2 that $|\cos \theta_2 - \cos(\max(\theta_1, \theta_3))| \leq 14\beta$. Applying Lemma \ref{lem:cosine_inequality} implies $|\theta_2 - \max(\theta_1, \theta_3)| \leq 4\sqrt{14\beta} = \frac{1}{4d}$, as desired.

\end{proof}

Now we are ready to prove the lemma.

\begin{proof}
(Lemma \ref{lem:big_explanation_regions_means_trouble}) Let $x$ be as in Lemma \ref{lem:intersections_are_caps}, and let $\theta^* = \max(\theta_1, \theta_2, \theta_3)$. Then by applying Lemma \ref{lem:intersections_are_caps} to the Definition of $\mu$ (Definition \ref{defn:mu_for_the_spheres_stuff}) gives us
\begin{equation*}
\begin{split}
\mu\left(B(a, r)\right) &= \mu\left(C(S_1, x(1-\alpha), \theta_1)\right) + \mu\left(C(S_2, x, \theta_2)\right) + \mu\left(C(S_3, x(1+\beta), \theta_3)\right) \\
&= \frac{1}{3}\Psi(\theta_1) + \frac{1}{3}\Psi(\theta_2) + \frac{1}{3}\Psi(\theta_3) \\
&\leq \Psi(\theta^*).
\end{split}
\end{equation*}

Here $\Psi$ denotes the function defined in Section \ref{app:sec:spherical_caps}.

Next, let $\theta = \min(\max(\theta_1, \theta_3), \theta_2)$. Let $T_1(x, \theta)$ and $T_3(x, \theta)$ be as defined in Lemma \ref{lem:linear_classifier_fail_hard}. Observe that if $\theta_1 \geq \theta_3$, then $$T_1(x, \theta) \subseteq C(S_1, x(1-\alpha), \theta_1) \cup C(S_2, x, \theta_2) \subseteq B(a, r),$$ and otherwise, $$T_3(x, \theta) \subseteq C(S_3, x(1+\beta), \theta_3) \cup C(S_2, x, \theta_2) \subseteq B(a, r).$$ Thus, at least one of these sets is part of $B(a, r)$. We now show that these sets have the desired mass. By the definition of $\theta^*$, we have $$\frac{\mu(T_1(x, \theta))}{\mu(B(a, r))}, \frac{\mu(T_3(x, \theta))}{\mu(B(a, r))} \geq \frac{2\mu(C(S_2, x, \theta))}{3\mu(C(S_2, x, \theta^*))}.$$


Next, Lemma \ref{lem:theta_is_big_ish} implies that $\theta^* \geq \frac{\pi}{3}$, and Lemma \ref{lem:angles_differ_by_4d} implies that $\theta^* - \theta \leq \frac{1}{4d}$. It follows that $$\theta \geq \theta^* - \frac{1}{4d} \geq \theta^*\left(1 - \frac{1}{4d}\right).$$ Substituting this, we find that
\begin{equation*}
\begin{split}
\frac{2\mu(C(S_2, x, \theta))}{3\mu(C(S_2, x, \theta^*))} &= \frac{\frac{2}{3}\Psi(\theta)}{\Psi(\theta^*)} \\
&\geq \frac{2}{3}\frac{\Psi\left(\theta^*\left(1 - \frac{1}{4d}\right)\right)}{\Psi(\theta^*)} \\
&\geq \frac{2}{3}\left(1 - \frac{1}{4d}\right)^{d-1} \\
&\geq \frac{2}{3}\left(\frac{1}{e}\right)^{1/4} \\
&\geq \frac{1}{2},
\end{split}
\end{equation*}
where the last steps follow from Lemmas \ref{lem:exponential_bound_on_behavior_of_psi} and \ref{lem:lower_bound_on_e_limit_thing}. This completes the proof. 
\end{proof}
\subsection{Technical Lemmas}

\begin{lemma}\label{lem:cosine_inequality}
Suppose $\phi_1, \phi_2 \in [0, \pi]$ such that $|\cos(\phi_1) - \cos(\phi_2)| \leq c$. Then $|\phi_1 - \phi_2| \leq 4\sqrt{c}$. 
\end{lemma}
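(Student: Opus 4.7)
The plan is to reduce the claim to a single trigonometric inequality via a sum-to-product identity, which conveniently packages all of the ``flatness of $\cos$ near $0$ and $\pi$'' issues into one term.

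First, without loss of generality assume $\phi_1 \leq \phi_2$ and set $\delta = \phi_2 - \phi_1 \in [0, \pi]$ and $\mu = (\phi_1 + \phi_2)/2$. Since $\cos$ is monotonically decreasing on $[0, \pi]$, we have $\cos\phi_1 - \cos\phi_2 \geq 0$, and the hypothesis gives
\[
c \;\geq\; \cos\phi_1 - \cos\phi_2 \;=\; 2 \sin(\mu)\,\sin(\delta/2),
\]
by the standard sum-to-product identity.

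The next step is to observe that $\mu \in [\delta/2,\, \pi - \delta/2]$: the lower bound is equivalent to $\phi_1 \geq 0$ and the upper bound is equivalent to $\phi_2 \leq \pi$, both of which hold by assumption. Since $\sin$ is symmetric about $\pi/2$ on $[0,\pi]$ and increasing on $[0,\pi/2]$, this implies $\sin(\mu) \geq \sin(\delta/2)$. Combining with the display above yields
\[
c \;\geq\; 2 \sin^2(\delta/2).
\]
Because $\delta/2 \in [0, \pi/2]$, Jordan's inequality gives $\sin(\delta/2) \geq (2/\pi)(\delta/2) = \delta/\pi$, so $c \geq 2\delta^2/\pi^2$. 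Rearranging, $\delta \leq \pi\sqrt{c/2} < 4\sqrt{c}$, as required.

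I do not expect any real obstacle here; the only subtle point is ensuring that $\mu$ really lies in $[\delta/2, \pi - \delta/2]$ so that $\sin(\mu) \geq \sin(\delta/2)$ is safe, but this is immediate from $\phi_1, \phi_2 \in [0,\pi]$. The sum-to-product rewrite is what makes the proof robust to $\phi_1, \phi_2$ being near the endpoints $0$ or $\pi$ (where $\cos'$ vanishes and one might otherwise be tempted to use the mean value theorem, which would fail to give a uniform bound).
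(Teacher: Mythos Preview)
Your proof is correct and follows essentially the same route as the paper: sum-to-product to get $c \geq 2\sin(\mu)\sin(\delta/2)$, then the observation $\mu \in [\delta/2,\pi-\delta/2]$ to reduce to $c \geq 2\sin^2(\delta/2)$. The only difference is that the paper finishes with a case split on $\delta \lessgtr \pi/2$ and the bound $\sin t \geq t/2$ on $[0,\pi/2]$, whereas you apply Jordan's inequality directly, which is slightly cleaner and even yields a marginally better constant.
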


\begin{proof}
WLOG, suppose $\phi_1 \leq \phi_2$. Let $x = \phi_2-\phi_1$. 

Using the sum to product rules, it follows that
\begin{equation*}
\begin{split}
\alpha &\geq |\cos \phi_1 - \cos \phi_2| \\
&= \left|-2\sin\frac{\phi_1 - \phi_2}{2}\sin\frac{\phi_1 + \phi_2}{2}\right| \\
&\geq \left|2\sin\frac{x}{2}\sin\frac{\phi_1+ \phi_2}{2}\right|.
\end{split}
\end{equation*}
However, observe that $\pi - \frac{\phi_1 + \phi_2}{2} \geq \phi_2 - \frac{\phi_1 + \phi_2}{2} = \frac{x}{2}$ and that $\frac{\phi_1+\phi_2}{2} \geq \frac{0 + 0 +x}{2} = \frac{x}{2}$. It follows that $\frac{\phi_1+\phi_2}{2} \in [\frac{x}{2}, \pi - \frac{x}{2}]$, which implies that $\sin \frac{\phi_1 + \phi_2}{2} \geq \sin \frac{x}{2}$. Substituting this, we have 
\begin{equation*}
\begin{split}
c &\geq \left|2\sin\frac{x}{2}\sin\frac{\phi_1+ \phi_2}{2}\right| \\
&\geq 2\sin^2\frac{x}{2}
\end{split}
\end{equation*}
We now do casework based on $x$. First suppose that $x \geq \frac{\pi}{2}$. Then $c \geq 2\sin^2\frac{\pi}{4} = 1$. By definition, $x \leq \pi$, so it follows that $x \leq 4\sqrt{\alpha}$, implying the desired result.

Otherwise, if $x \leq \frac{\pi}{2}$, then $\sin \frac{x}{2} \geq \frac{x}{4}$, as the function $t \mapsto \sin(t) - \frac{t}{2}$ is nonnegative on the interval $[0, \frac{\pi}{2}]$. Substituting this, we see that $c \geq \frac{x^2}{8}$. Thus $x \leq \sqrt{8c} < 4\sqrt{c}$, as desired. 
\end{proof}

\begin{lemma}\label{lem:bounding_sin_constant_times_theta}
For $0 \leq c \leq 1$ and $0 \leq \theta \leq \pi$, $\sin(c\theta) \geq c\sin(\theta)$. 
\end{lemma}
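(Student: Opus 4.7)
The plan is to reduce the claim to a one-variable concavity argument. Fix $\theta \in [0, \pi]$ and consider the function $h: [0,1] \to \R$ defined by
\[
h(c) \;=\; \sin(c\theta) - c\sin(\theta).
\]
We need to show $h(c) \geq 0$ on $[0,1]$. The key observations are that $h$ vanishes at both endpoints: $h(0) = \sin 0 - 0 = 0$ and $h(1) = \sin \theta - \sin \theta = 0$. So if $h$ is concave on $[0,1]$, then by standard properties of concave functions (any concave function is pointwise at least as large as the linear interpolant of its values at the endpoints) we get $h(c) \geq 0$ for every $c \in [0,1]$, which is exactly the desired inequality.

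Concavity will follow from a direct second-derivative computation: $h''(c) = -\theta^2 \sin(c\theta)$. Since $c \in [0,1]$ and $\theta \in [0, \pi]$, the argument satisfies $c\theta \in [0, \pi]$, on which $\sin$ is non-negative. Therefore $h''(c) \leq 0$ on $[0,1]$, establishing concavity. The boundary cases $\theta = 0$ or $c = 0$ (where both sides equal $0$) and $c = 1$ (where both sides equal $\sin\theta$) can be checked by inspection and cause no complications.

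The proof has essentially no obstacle: the only computation is the second derivative $h''(c) = -\theta^2 \sin(c\theta)$, and the only non-trivial fact used is that a concave function on an interval that vanishes at both endpoints is non-negative throughout. An equivalent alternative would be to argue via monotonicity of $\phi \mapsto \sin(\phi)/\phi$ on $(0, \pi]$: since $c\theta \leq \theta$ and this ratio is decreasing, $\sin(c\theta)/(c\theta) \geq \sin(\theta)/\theta$ and multiplying by $c\theta \geq 0$ gives the result. Either route is short, so the concavity version is preferable as it avoids separately proving monotonicity of the sinc function.
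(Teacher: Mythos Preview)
Your proof is correct, but it proceeds along a different axis than the paper's. You fix $\theta$ and study $h(c) = \sin(c\theta) - c\sin\theta$ as a function of $c$, observing that it vanishes at both endpoints $c=0,1$ and is concave because $h''(c) = -\theta^2\sin(c\theta) \leq 0$ on $[0,1]$. The paper instead fixes $c$ and studies $f(\theta) = \sin(c\theta) - c\sin\theta$ as a function of $\theta$: since $f(0)=0$ and $f'(\theta) = c(\cos(c\theta) - \cos\theta) \geq 0$ (using that $\cos$ is decreasing on $[0,\pi]$ and $c\theta \leq \theta$), $f$ is nondecreasing from $0$. The paper's route is marginally more direct in that it uses only a first derivative and a single boundary value, whereas yours requires a second derivative and the concavity-plus-two-endpoints principle; on the other hand, your argument makes the symmetry in $c$ explicit and packages the conclusion cleanly. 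Your sinc-monotonicity alternative is also valid and essentially equivalent to the paper's first-derivative argument after a change of variables.
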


\begin{proof}
Let $f(\theta) = \sin(c\theta) - c\sin(\theta)$. Observe that $f(0) = 0$. Furthermore, for $\theta \in [0, \pi]$, we have $f'(\theta) = c\cos(c\theta) - c\cos(\theta) = c\left(\cos(c\theta) - \cos(\theta)\right).$ Since $\cos$ is a decreasing function on the interval $[0, \pi]$, it follows that $\cos(c\theta) \geq \cos(\theta)$, which implies $f'(\theta) \geq 0$. Thus $f$ is non-decreasing on the interval, and the desired inequality holds. 
\end{proof}

\begin{lemma}\label{lem:lower_bound_on_e_limit_thing}
For all $x > 1$, $\left(1 - \frac{1}{x}\right)^{x-1} \geq \frac{1}{e}$. 
\end{lemma}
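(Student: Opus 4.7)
The plan is to reduce the claim to the classical inequality $(1+1/y)^y \le e$ via the substitution $y = x - 1$. Since $x > 1$, we have $y > 0$, and a direct computation shows $1 - 1/x = y/(y+1)$, so
\[
\left(1 - \frac{1}{x}\right)^{x-1} = \left(\frac{y}{y+1}\right)^{y} = \frac{1}{(1 + 1/y)^{y}}.
\]
Hence the claim $\left(1 - \frac{1}{x}\right)^{x-1} \ge \frac{1}{e}$ is equivalent to $(1 + 1/y)^{y} \le e$ for all $y > 0$.

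Next, I would reduce this to a one-variable calculus inequality by taking logarithms: it suffices to prove that $y \ln(1 + 1/y) \le 1$, or equivalently $\ln(1 + 1/y) \le 1/y$. Setting $u = 1/y \in (0, \infty)$, the goal becomes the standard inequality $\ln(1 + u) \le u$ for all $u > 0$.

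This last inequality is a routine exercise: the function $h(u) = u - \ln(1 + u)$ satisfies $h(0) = 0$ and $h'(u) = 1 - 1/(1+u) = u/(1+u) > 0$ for $u > 0$, so $h$ is strictly increasing on $(0, \infty)$ and therefore $h(u) > 0$ there. Unwinding the substitutions yields the desired bound. There is no real obstacle here; the only thing to be slightly careful about is that the substitution $y = x-1$ is valid exactly on $x > 1$ (matching the hypothesis of the lemma), and that the inequality in fact holds strictly for every finite $x > 1$, with equality only in the limit $x \to \infty$, which is consistent with the $\ge$ as stated.
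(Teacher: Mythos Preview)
Your proof is correct. The substitution $y=x-1$ cleanly reduces the claim to $(1+1/y)^y\le e$, and then the further substitution $u=1/y$ reduces it to the elementary inequality $\ln(1+u)\le u$, which you verify by a one-line derivative argument.

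The paper takes a different route: it works directly with $f(x)=(1-1/x)^{x-1}$, notes that $\lim_{x\to\infty}f(x)=1/e$, and then shows $f$ is non-increasing on $(1,\infty)$ by computing $(\ln f)'(x)=\tfrac{1}{x}-\int_{x-1}^{x}\tfrac{1}{t}\,dt\le 0$. Your approach is shorter and relies on a more familiar inequality, while the paper's approach yields the additional structural fact that $f$ is monotone (not merely bounded below by $1/e$), which is a little more information though not needed elsewhere in the paper.
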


\begin{proof}
Let $f(x) = \left(1 - \frac{1}{x}\right)^{x-1}$. It is well known that $\lim_{x \to \infty} f(x) = \frac{1}{e}$ and $\lim_{x \to 1^+}f(x) = 1$. Thus it suffices to show that $f(x)$ is a non-increasing function. To do so, we will show that $\ln f(x)$ is non-increasing by taking its derivative. We have
\begin{equation*}
\begin{split}
\frac{d\left(\ln f(x)\right)}{dx} &= \frac{d}{dx}\left((x-1)\ln \frac{x-1}{x}\right) \\
&= \frac{d}{dx}\left((x-1)\ln (x-1) - (x-1)\ln x \right) \\
&= \left(\ln(x-1) + \frac{x-1}{x-1}\right) - \left(\ln(x) + \frac{x-1}{x}\right)\\
&= \frac{1}{x} - \left(\ln(x) - \ln(x-1)\right) \\
&= \frac{1}{x} - \int_{x-1}^x \frac{1}{t}dt \\
&\leq \frac{1}{x} - \int_{x-1}^x \frac{1}{x}dt \\
&= \frac{1}{x} - \frac{1}{x} = 0. 
\end{split}
\end{equation*}
\end{proof}

\begin{lemma}\label{lem:dot_product_probability_inside_interval}
Let $z$ be a point chosen at uniform over $S_2$, and let $w$ be a fixed unit vector. Then if $t \leq \frac{1}{1370d^2}$, then for any $s \in \R$, $$\Pr_z[\langle w, z \rangle \in [s, s+t]] \leq \frac{1}{10}.$$ 
\end{lemma}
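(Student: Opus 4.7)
The plan is to exploit the rotational invariance of the uniform measure on $S_2$ and reduce the problem to a one-dimensional integral. Without loss of generality assume $w = e_1$ and parametrize $z$ by the angle $\phi = \arccos\langle w, z\rangle \in [0,\pi]$. By the standard Jacobian in spherical coordinates (integrating out the remaining $d-2$ angular variables), $\phi$ has density proportional to $\sin^{d-2}\phi$ on $[0,\pi]$. Setting $\phi_1 = \arccos\min(1, s+t)$ and $\phi_2 = \arccos\max(-1, s)$, this yields
\begin{equation*}
\Pr_z\bigl[\langle w,z\rangle \in [s, s+t]\bigr] \;=\; \frac{\int_{\phi_1}^{\phi_2} \sin^{d-2}\phi\,d\phi}{\int_0^{\pi}\sin^{d-2}\phi\,d\phi},
\end{equation*}
with the probability trivially zero if $[s,s+t]$ misses $[-1,1]$.

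Next I would bound numerator and denominator separately. For the numerator, since $|\cos\phi_1 - \cos\phi_2| \leq t$ in all cases, Lemma \ref{lem:cosine_inequality} gives $\phi_2 - \phi_1 \leq 4\sqrt{t}$, and $\sin \leq 1$ then yields numerator $\leq 4\sqrt{t}$. For the denominator, I would restrict the integral to the window $[\pi/2 - 1/\sqrt{d-1},\,\pi/2 + 1/\sqrt{d-1}]$ of length $2/\sqrt{d-1}$, on which $\sin\phi \geq \cos(1/\sqrt{d-1}) \geq 1 - \tfrac{1}{2(d-1)}$. Applying Lemma \ref{lem:lower_bound_on_e_limit_thing} with $x = 2(d-1)$ gives $(1 - \tfrac{1}{2(d-1)})^{2d-3} \geq 1/e$, and raising to the power $(d-2)/(2d-3) < 1/2$ yields $(1 - \tfrac{1}{2(d-1)})^{d-2} \geq e^{-1/2}$. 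Thus the denominator is at least $2e^{-1/2}/\sqrt{d-1}$.

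Combining these two estimates,
\begin{equation*}
\Pr_z\bigl[\langle w,z\rangle \in [s, s+t]\bigr] \leq \frac{4\sqrt{t}}{2e^{-1/2}/\sqrt{d-1}} = 2e^{1/2}\sqrt{(d-1)t} \leq 2\sqrt{e\,d\,t}.
\end{equation*}
Plugging in the hypothesis $t \leq 1/(1370d^2)$ gives $2\sqrt{e/(1370 d)} \leq 2\sqrt{e/1370} < 1/10$, since $400e < 1370$. The main obstacle is purely arithmetic bookkeeping: the constant $1370$ in the hypothesis is chosen essentially tightly against the factors $4$ from Lemma \ref{lem:cosine_inequality} and $2e^{-1/2}$ from the denominator estimate, so the denominator window bound must be at least this sharp. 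A cleaner but less elementary alternative would be to use the explicit projection density $f(u) = c_d(1-u^2)^{(d-3)/2}$, whose maximum $c_d$ at $u=0$ satisfies $c_d \leq \sqrt{d/(2\pi)}$ by Gautschi's inequality, giving $\Pr \leq c_d t \leq 1/(1370\sqrt{2\pi}\,d^{3/2})$ directly.
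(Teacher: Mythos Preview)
Your argument is correct. Both you and the paper begin the same way: pass to the angle $\phi$ with density proportional to $\sin^{d-2}\phi$, and invoke Lemma \ref{lem:cosine_inequality} to confine $\phi$ to an interval of length at most $4\sqrt{t}$. From there the two proofs diverge. The paper centers that interval at $\pi/2$ and rewrites the resulting ratio as $1 - \Psi(\pi/2 - 2\sqrt{t})/\Psi(\pi/2)$, then feeds this into the spherical-cap machinery of Lemma \ref{lem:exponential_bound_on_behavior_of_psi} to obtain $1 - (1 - 4\sqrt{t}/\pi)^{d-1}$, which is bounded via Lemma \ref{lem:lower_bound_on_e_limit_thing} by $1 - e^{-1/29} < 1/10$. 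You instead bound numerator and denominator of the integral ratio separately: the numerator by $4\sqrt{t}$ trivially, and the denominator by restricting to a window of width $2/\sqrt{d-1}$ about $\pi/2$ together with $\cos x \geq 1 - x^2/2$ and Lemma \ref{lem:lower_bound_on_e_limit_thing}.

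Your route is the more self-contained one: it avoids the $\Psi$-ratio lemma entirely and yields the quantitatively sharper estimate $2\sqrt{e\,d\,t} = O(d^{-1/2})$, whereas the paper's bound $1 - (1 - c/d)^{d-1}$ stabilizes near a positive constant as $d$ grows. The paper's version has the compensating virtue of reusing the $\Psi$ framework already built for the surrounding lemmas. Your closing remark about the projection density $c_d(1-u^2)^{(d-3)/2}$ and Gautschi's inequality is also valid and would give an even cleaner $O(d^{-3/2})$ bound, at the cost of importing a Gamma-function inequality the paper does not otherwise use.
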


\begin{proof}
Let $\theta$ denote the random variable that represents the angle between $w$ and $z$. Applying Lemma \ref{lem:cosine_inequality}, it follows that for some choice of $s' \in \R$ that $$\Pr_z[\langle w, z \rangle \in [s, s+t] \leq \Pr_\theta[\theta \in [s', s'+4\sqrt{t}].$$ We now bound this quantity by utilizing the quantity $\Psi$ (defined in Section \ref{app:sec:spherical_caps}). We have,
\begin{equation*}
\begin{split}
\Pr_\theta[\theta \in [s', s'+4\sqrt{t}] &= \frac{\int_{s'}^{s' + 4\sqrt{t}} \sin^{(d-2)}\phi d\phi}{\int_{0}^{\pi} \sin^{(d-2)}\phi d\phi} \\ 
&\leq \frac{2\int_{\frac{\pi}{2} - 2\sqrt{t}}^{\frac{\pi}{2}}\sin^{(d-2)}\phi d\phi}{2\int_{0}^{\frac{\pi}{2}} \sin^{(d-2)}\phi d\phi} \\
&= 1 - \frac{\Psi\left(\frac{\pi}{2} - 2\sqrt{t}\right)}{\Psi\left(\frac{\pi}{2}\right)}.
\end{split}
\end{equation*}
Here we have simply chosen the interval of length $4\sqrt{t}$ that maximizes the corresponding the integral. Next, we continue by applying Lemmas \ref{lem:exponential_bound_on_behavior_of_psi} and \ref{lem:lower_bound_on_e_limit_thing} to get
\begin{equation*}
\begin{split}
\Pr_\theta[\theta \in [s', s'+4\sqrt{t}] &\leq  1 - \frac{\Psi\left(\frac{\pi}{2} - 2\sqrt{t}\right)}{\Psi\left(\frac{\pi}{2}\right)} \\
&\leq 1 - \left(1 - \frac{4\sqrt{t}}{\pi}\right)^{d-1} \\
&\leq 1 - \left(1 - \frac{1}{29d}\right)^{d-1} \\
&= 1 - \left(\left(1 - \frac{1}{29d}\right)^{29(d-1)}\right)^{1/29} \\
&\leq 1 - \left(\left(1 - \frac{1}{29d}\right)^{29d - 1}\right)^{1/29} \\
&\leq 1 - \left(\frac{1}{e}\right)^{1/29} \\
&\leq \frac{1}{10},
\end{split}
\end{equation*}
as desired. 
\end{proof}

\subsection{Spherical Caps}\label{app:sec:spherical_caps}

\begin{definition}\label{defn:spherical_cap}
Let $S$ be a $(d-1)$ sphere centered at the origin, let $0 \leq \theta \leq \pi$ be an angle, and let $x \in S$ be a point. We let $C(S, x, \theta)$ denote the \textbf{spherical cap} with angle $\theta$ centered at $x$, and it consists of all points, $x' \in S^{d-1}$, such that $\frac{\langle x, x' \rangle}{||x||||x'||} \geq \cos \phi.$ 
\end{definition}

Here we take the convention of associating $C(S_i, x_i, 0)$ with both the empty set and with $\{x_i\}$. While these are distinct sets, they both have measure $0$ under $\pi$. We also associate $C(S_i, x_i, \pi)$ with the entirety of $S_i$. 

We let $\Psi(\theta)$ denote the ratio of the  $(d-1)$-surface area of the region, $C(S, x, \theta)$, to the $(d-1)$-surface area of the entire sphere. Thus, $\Psi(\theta)$ denotes the fraction of the sphere covered by a spherical cap of angle $\theta$. By standard integration over spherical coordinates, we have $$\Psi(\theta) = \frac{\int_0^\theta \sin^{(d-2)}\phi d\phi}{\int_0^\pi \sin^{(d-2)}\phi d\phi}.$$ 

Next, we bound $\Psi(\theta)$ with the following inequality.

\begin{lemma}\label{lem:exponential_bound_on_behavior_of_psi}
Let $0 \leq \theta \leq \pi$ and let $0 \leq c \leq 1$. Then $$\frac{\Psi(c\theta)}{\Psi(\theta)} \geq c^{d-1}.$$
\end{lemma}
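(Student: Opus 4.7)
The plan is to write $\frac{\Psi(c\theta)}{\Psi(\theta)}$ explicitly using the integral definition, then change variables in the numerator to bring the integration range back to $[0,\theta]$, and finally invoke Lemma \ref{lem:bounding_sin_constant_times_theta} to get the claimed power of $c$.

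Concretely, I would start by noting
\[
\frac{\Psi(c\theta)}{\Psi(\theta)} \;=\; \frac{\int_0^{c\theta}\sin^{d-2}\phi\,d\phi}{\int_0^\theta \sin^{d-2}\phi\,d\phi},
\]
since the normalizing factor $\int_0^\pi\sin^{d-2}\phi\,d\phi$ cancels. In the numerator, I would substitute $\phi = c\psi$, so that $d\phi = c\,d\psi$ and the range becomes $\psi\in[0,\theta]$:
\[
\int_0^{c\theta}\sin^{d-2}\phi\,d\phi \;=\; c\int_0^\theta \sin^{d-2}(c\psi)\,d\psi.
\]

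The key step is then to bound the integrand pointwise. Since $0\leq c\leq 1$ and $\psi\in[0,\theta]\subseteq[0,\pi]$, Lemma \ref{lem:bounding_sin_constant_times_theta} gives $\sin(c\psi)\geq c\sin(\psi)$, so $\sin^{d-2}(c\psi)\geq c^{d-2}\sin^{d-2}(\psi)$ (valid because both sides are nonnegative on $[0,\pi]$). Substituting and integrating,
\[
c\int_0^\theta \sin^{d-2}(c\psi)\,d\psi \;\geq\; c\cdot c^{d-2}\int_0^\theta \sin^{d-2}(\psi)\,d\psi \;=\; c^{d-1}\int_0^\theta \sin^{d-2}(\psi)\,d\psi.
\]
Dividing through by $\int_0^\theta\sin^{d-2}\phi\,d\phi$ yields $\frac{\Psi(c\theta)}{\Psi(\theta)}\geq c^{d-1}$, as required.

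There is no real obstacle here; the argument is entirely mechanical once one has Lemma \ref{lem:bounding_sin_constant_times_theta}. The only subtlety worth double-checking is that the substitution $\phi=c\psi$ is valid even when $c=0$ (in which case both sides of the final inequality are zero, so the bound holds trivially) and that $\sin^{d-2}$ is nonnegative on $[0,\pi]$, so raising both sides of $\sin(c\psi)\geq c\sin(\psi)$ to the $(d-2)$-th power preserves the inequality.
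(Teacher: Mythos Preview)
Your proof is correct and follows essentially the same approach as the paper: write the ratio of integrals, substitute $\phi = c\psi$ in the numerator, and then apply Lemma~\ref{lem:bounding_sin_constant_times_theta} pointwise to extract the factor $c^{d-1}$. The paper's version is terser but identical in substance.
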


\begin{proof}
By applying Lemma \ref{lem:bounding_sin_constant_times_theta} to the definition of $\Psi$, we have the following manipulations. 
\begin{equation*}
\begin{split}
\frac{\Psi(c \phi)}{\Psi(\phi)} &= \frac{\int_0^{c\phi} \sin^{d-2}\theta d\theta}{\int_0^{\phi} \sin^{d-2}\theta d\theta} \\
&= \frac{\int_0^\phi \sin^{d-2}(c u)(cdu)}{\int_0^{\phi} \sin^{d-2}\theta d\theta} \\
&\geq \frac{\int_0^\phi \left(c\sin ( u)\right)^{d-2}(cdu)}{\int_0^{\phi} \sin^{d-2}\theta d\theta} \\
&= c^{d-1}.
\end{split}
\end{equation*}
\end{proof}

We similarly have an upper bound on this ratio.

\begin{lemma}\label{lem:upper_bound_psi}
Let $0 \leq \theta \leq \frac{\pi}{2}$ and $0 \leq c \leq 1$. Then $$\frac{\Psi(c\theta)}{\Psi(\theta)} \leq c\left(\frac{\sin c\phi}{\sin \phi}\right)^{d-2}.$$
\end{lemma}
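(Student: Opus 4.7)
The plan is to mirror the style of the proof of Lemma \ref{lem:exponential_bound_on_behavior_of_psi}, but replace the pointwise comparison $\sin(cu) \geq c\sin(u)$ (which gave the lower bound) with the reverse-direction pointwise comparison $\sin(cu)/\sin(u) \leq \sin(c\theta)/\sin(\theta)$ that is available when $u \leq \theta \leq \pi/2$. Writing out the definition of $\Psi$ and making the substitution $\phi = cu$ in the numerator integral, I get
\begin{equation*}
\frac{\Psi(c\theta)}{\Psi(\theta)} \;=\; \frac{\int_0^{c\theta}\sin^{d-2}\phi\,d\phi}{\int_0^\theta\sin^{d-2}\phi\,d\phi} \;=\; \frac{c\int_0^\theta\sin^{d-2}(cu)\,du}{\int_0^\theta\sin^{d-2}(u)\,du}.
\end{equation*}
Thus it suffices to show that for every $u \in (0,\theta]$ we have $\sin^{d-2}(cu) \leq \bigl(\sin(c\theta)/\sin\theta\bigr)^{d-2}\sin^{d-2}(u)$, which then lets me pull the constant $\bigl(\sin(c\theta)/\sin\theta\bigr)^{d-2}$ out of the numerator integral and cancel the denominator.

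The key technical step is therefore to prove that the function $g(u) = \sin(cu)/\sin(u)$ is non-decreasing on $(0,\pi/2]$ when $c \in [0,1]$. Differentiating yields $g'(u) = \sin^{-2}(u)\bigl(c\cos(cu)\sin(u) - \sin(cu)\cos(u)\bigr)$, so non-negativity of $g'$ is equivalent to $\tan(cu) \leq c\tan(u)$. This last inequality follows immediately from the convexity of $\tan$ on $[0,\pi/2)$: writing $cu = c\cdot u + (1-c)\cdot 0$, convexity gives $\tan(cu) \leq c\tan(u) + (1-c)\tan(0) = c\tan(u)$, as desired. (Alternatively one can verify this by directly checking that $h(u) = c\tan(u) - \tan(cu)$ satisfies $h(0)=0$ and $h'(u) \geq 0$ on $[0,\pi/2)$.)

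With monotonicity of $g$ in hand, for any $u \in (0,\theta]$ we have $g(u) \leq g(\theta)$, i.e.\ $\sin(cu)/\sin(u) \leq \sin(c\theta)/\sin(\theta)$, and raising to the $(d-2)$th power (valid because both sides are non-negative) and multiplying by $\sin^{d-2}(u)$ gives the required pointwise bound on the integrand. Integrating over $[0,\theta]$ and combining with the earlier substitution yields
\begin{equation*}
\frac{\Psi(c\theta)}{\Psi(\theta)} \;\leq\; c\left(\frac{\sin c\theta}{\sin\theta}\right)^{d-2},
\end{equation*}
which is the desired inequality.

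I expect the only non-routine step to be justifying the monotonicity of $\sin(cu)/\sin(u)$; everything else is bookkeeping. The hypothesis $\theta \leq \pi/2$ is used precisely to ensure both that $\tan$ is convex on the relevant interval and that $\sin(u) > 0$ throughout, so that divisions and power-raisings are valid.
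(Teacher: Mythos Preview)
Your proof is correct and follows essentially the same route as the paper's: both make the substitution $\phi = cu$ in the numerator integral and then invoke the fact that $u \mapsto \sin(cu)/\sin(u)$ is non-decreasing to bound $\sin(cu)$ by $(\sin c\theta/\sin\theta)\sin u$ pointwise. The only difference is that the paper simply asserts the monotonicity of $\sin(cu)/\sin(u)$ (in fact on all of $(0,\pi)$) without justification, whereas you supply a clean argument via convexity of $\tan$ on $[0,\pi/2)$, which suffices under the stated hypothesis $\theta \leq \pi/2$.
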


\begin{proof}
We similarly have,  
\begin{equation*}
\begin{split}
\frac{\Psi(c \phi)}{\Psi(\phi)} &= \frac{\int_0^{c\phi} \sin^{d-2}\theta d\theta}{\int_0^{\phi} \sin^{d-2}\theta d\theta} \\
&= \frac{\int_0^\phi \sin^{d-2}(c u)(cdu)}{\int_0^{\phi} \sin^{d-2}\theta d\theta} \\
&\leq \frac{\int_0^\phi \left(\sin(u)\frac{\sin c\phi}{\sin \phi}\right)^{d-2}(cdu)}{\int_0^{\phi} \sin^{d-2}\theta d\theta} \\
&= c\left(\frac{\sin c\phi}{\sin \phi}\right)^{d-2}.
\end{split}
\end{equation*}
Here we are using the fact that $t \mapsto \frac{\sin ct}{\sin t}$ is a non-decreasing function for $t \in [0, \pi]$. 
\end{proof}

\newpage
\section*{NeurIPS Paper Checklist}

\begin{enumerate}

\item {\bf Claims}
    \item[] Question: Do the main claims made in the abstract and introduction accurately reflect the paper's contributions and scope?
    \item[] Answer: \answerYes{} 
    \item[] Justification: we mention all main results in both the abstract and introduction. Furthermore everything within these sections is revisited in the body.
    \item[] Guidelines:
    \begin{itemize}
        \item The answer NA means that the abstract and introduction do not include the claims made in the paper.
        \item The abstract and/or introduction should clearly state the claims made, including the contributions made in the paper and important assumptions and limitations. A No or NA answer to this question will not be perceived well by the reviewers. 
        \item The claims made should match theoretical and experimental results, and reflect how much the results can be expected to generalize to other settings. 
        \item It is fine to include aspirational goals as motivation as long as it is clear that these goals are not attained by the paper. 
    \end{itemize}

\item {\bf Limitations}
    \item[] Question: Does the paper discuss the limitations of the work performed by the authors?
    \item[] Answer: \answerYes{} 
    \item[] Justification: all our assumptions are clearly stated, and we use the limitations of our methods as avenues for future work. We also stress that our framework is by no means comprehensive for evaluating local explanations.
    \item[] Guidelines:
    \begin{itemize}
        \item The answer NA means that the paper has no limitation while the answer No means that the paper has limitations, but those are not discussed in the paper. 
        \item The authors are encouraged to create a separate "Limitations" section in their paper.
        \item The paper should point out any strong assumptions and how robust the results are to violations of these assumptions (e.g., independence assumptions, noiseless settings, model well-specification, asymptotic approximations only holding locally). The authors should reflect on how these assumptions might be violated in practice and what the implications would be.
        \item The authors should reflect on the scope of the claims made, e.g., if the approach was only tested on a few datasets or with a few runs. In general, empirical results often depend on implicit assumptions, which should be articulated.
        \item The authors should reflect on the factors that influence the performance of the approach. For example, a facial recognition algorithm may perform poorly when image resolution is low or images are taken in low lighting. Or a speech-to-text system might not be used reliably to provide closed captions for online lectures because it fails to handle technical jargon.
        \item The authors should discuss the computational efficiency of the proposed algorithms and how they scale with dataset size.
        \item If applicable, the authors should discuss possible limitations of their approach to address problems of privacy and fairness.
        \item While the authors might fear that complete honesty about limitations might be used by reviewers as grounds for rejection, a worse outcome might be that reviewers discover limitations that aren't acknowledged in the paper. The authors should use their best judgment and recognize that individual actions in favor of transparency play an important role in developing norms that preserve the integrity of the community. Reviewers will be specifically instructed to not penalize honesty concerning limitations.
    \end{itemize}

\item {\bf Theory Assumptions and Proofs}
    \item[] Question: For each theoretical result, does the paper provide the full set of assumptions and a complete (and correct) proof?
    \item[] Answer: \answerYes{} 
    \item[] Justification: this is a theory paper, and doing so is its main focus. our proofs are all included in the appendix. 
    \item[] Guidelines:
    \begin{itemize}
        \item The answer NA means that the paper does not include theoretical results. 
        \item All the theorems, formulas, and proofs in the paper should be numbered and cross-referenced.
        \item All assumptions should be clearly stated or referenced in the statement of any theorems.
        \item The proofs can either appear in the main paper or the supplemental material, but if they appear in the supplemental material, the authors are encouraged to provide a short proof sketch to provide intuition. 
        \item Inversely, any informal proof provided in the core of the paper should be complemented by formal proofs provided in appendix or supplemental material.
        \item Theorems and Lemmas that the proof relies upon should be properly referenced. 
    \end{itemize}

    \item {\bf Experimental Result Reproducibility}
    \item[] Question: Does the paper fully disclose all the information needed to reproduce the main experimental results of the paper to the extent that it affects the main claims and/or conclusions of the paper (regardless of whether the code and data are provided or not)?
    \item[] Answer: \answerNA{} 
    \item[] Justification: This is a theory paper.
    \item[] Guidelines:
    \begin{itemize}
        \item The answer NA means that the paper does not include experiments.
        \item If the paper includes experiments, a No answer to this question will not be perceived well by the reviewers: Making the paper reproducible is important, regardless of whether the code and data are provided or not.
        \item If the contribution is a dataset and/or model, the authors should describe the steps taken to make their results reproducible or verifiable. 
        \item Depending on the contribution, reproducibility can be accomplished in various ways. For example, if the contribution is a novel architecture, describing the architecture fully might suffice, or if the contribution is a specific model and empirical evaluation, it may be necessary to either make it possible for others to replicate the model with the same dataset, or provide access to the model. In general. releasing code and data is often one good way to accomplish this, but reproducibility can also be provided via detailed instructions for how to replicate the results, access to a hosted model (e.g., in the case of a large language model), releasing of a model checkpoint, or other means that are appropriate to the research performed.
        \item While NeurIPS does not require releasing code, the conference does require all submissions to provide some reasonable avenue for reproducibility, which may depend on the nature of the contribution. For example
        \begin{enumerate}
            \item If the contribution is primarily a new algorithm, the paper should make it clear how to reproduce that algorithm.
            \item If the contribution is primarily a new model architecture, the paper should describe the architecture clearly and fully.
            \item If the contribution is a new model (e.g., a large language model), then there should either be a way to access this model for reproducing the results or a way to reproduce the model (e.g., with an open-source dataset or instructions for how to construct the dataset).
            \item We recognize that reproducibility may be tricky in some cases, in which case authors are welcome to describe the particular way they provide for reproducibility. In the case of closed-source models, it may be that access to the model is limited in some way (e.g., to registered users), but it should be possible for other researchers to have some path to reproducing or verifying the results.
        \end{enumerate}
    \end{itemize}

\item {\bf Open access to data and code}
    \item[] Question: Does the paper provide open access to the data and code, with sufficient instructions to faithfully reproduce the main experimental results, as described in supplemental material?
    \item[] Answer: \answerNA{} 
    \item[] Justification: This is a theory paper.
    \item[] Guidelines:
    \begin{itemize}
        \item The answer NA means that paper does not include experiments requiring code.
        \item Please see the NeurIPS code and data submission guidelines (\url{https://nips.cc/public/guides/CodeSubmissionPolicy}) for more details.
        \item While we encourage the release of code and data, we understand that this might not be possible, so “No” is an acceptable answer. Papers cannot be rejected simply for not including code, unless this is central to the contribution (e.g., for a new open-source benchmark).
        \item The instructions should contain the exact command and environment needed to run to reproduce the results. See the NeurIPS code and data submission guidelines (\url{https://nips.cc/public/guides/CodeSubmissionPolicy}) for more details.
        \item The authors should provide instructions on data access and preparation, including how to access the raw data, preprocessed data, intermediate data, and generated data, etc.
        \item The authors should provide scripts to reproduce all experimental results for the new proposed method and baselines. If only a subset of experiments are reproducible, they should state which ones are omitted from the script and why.
        \item At submission time, to preserve anonymity, the authors should release anonymized versions (if applicable).
        \item Providing as much information as possible in supplemental material (appended to the paper) is recommended, but including URLs to data and code is permitted.
    \end{itemize}

\item {\bf Experimental Setting/Details}
    \item[] Question: Does the paper specify all the training and test details (e.g., data splits, hyperparameters, how they were chosen, type of optimizer, etc.) necessary to understand the results?
    \item[] Answer: \answerNA{} 
    \item[] Justification: this is a theory paper
    \item[] Guidelines:
    \begin{itemize}
        \item The answer NA means that the paper does not include experiments.
        \item The experimental setting should be presented in the core of the paper to a level of detail that is necessary to appreciate the results and make sense of them.
        \item The full details can be provided either with the code, in appendix, or as supplemental material.
    \end{itemize}

\item {\bf Experiment Statistical Significance}
    \item[] Question: Does the paper report error bars suitably and correctly defined or other appropriate information about the statistical significance of the experiments?
    \item[] Answer: \answerNA{} 
    \item[] Justification: this is a theory paper
    \item[] Guidelines:
    \begin{itemize}
        \item The answer NA means that the paper does not include experiments.
        \item The authors should answer "Yes" if the results are accompanied by error bars, confidence intervals, or statistical significance tests, at least for the experiments that support the main claims of the paper.
        \item The factors of variability that the error bars are capturing should be clearly stated (for example, train/test split, initialization, random drawing of some parameter, or overall run with given experimental conditions).
        \item The method for calculating the error bars should be explained (closed form formula, call to a library function, bootstrap, etc.)
        \item The assumptions made should be given (e.g., Normally distributed errors).
        \item It should be clear whether the error bar is the standard deviation or the standard error of the mean.
        \item It is OK to report 1-sigma error bars, but one should state it. The authors should preferably report a 2-sigma error bar than state that they have a 96\% CI, if the hypothesis of Normality of errors is not verified.
        \item For asymmetric distributions, the authors should be careful not to show in tables or figures symmetric error bars that would yield results that are out of range (e.g. negative error rates).
        \item If error bars are reported in tables or plots, The authors should explain in the text how they were calculated and reference the corresponding figures or tables in the text.
    \end{itemize}

\item {\bf Experiments Compute Resources}
    \item[] Question: For each experiment, does the paper provide sufficient information on the computer resources (type of compute workers, memory, time of execution) needed to reproduce the experiments?
    \item[] Answer: \answerNA{} 
    \item[] Justification: This is a theory paper
    \item[] Guidelines:
    \begin{itemize}
        \item The answer NA means that the paper does not include experiments.
        \item The paper should indicate the type of compute workers CPU or GPU, internal cluster, or cloud provider, including relevant memory and storage.
        \item The paper should provide the amount of compute required for each of the individual experimental runs as well as estimate the total compute. 
        \item The paper should disclose whether the full research project required more compute than the experiments reported in the paper (e.g., preliminary or failed experiments that didn't make it into the paper). 
    \end{itemize}
    
\item {\bf Code Of Ethics}
    \item[] Question: Does the research conducted in the paper conform, in every respect, with the NeurIPS Code of Ethics \url{https://neurips.cc/public/EthicsGuidelines}?
    \item[] Answer: \answerYes{} 
    \item[] Justification: this paper does not utilize any datasets or human subjects. It also does not contribute towards any sort of harm.
    \item[] Guidelines:
    \begin{itemize}
        \item The answer NA means that the authors have not reviewed the NeurIPS Code of Ethics.
        \item If the authors answer No, they should explain the special circumstances that require a deviation from the Code of Ethics.
        \item The authors should make sure to preserve anonymity (e.g., if there is a special consideration due to laws or regulations in their jurisdiction).
    \end{itemize}

\item {\bf Broader Impacts}
    \item[] Question: Does the paper discuss both potential positive societal impacts and negative societal impacts of the work performed?
    \item[] Answer: \answerYes{} 
    \item[] Justification: we discuss consequences of our results. Because they are theoretical, we don't believe our results can be used in a directly harmful manner. 
    \item[] Guidelines:
    \begin{itemize}
        \item The answer NA means that there is no societal impact of the work performed.
        \item If the authors answer NA or No, they should explain why their work has no societal impact or why the paper does not address societal impact.
        \item Examples of negative societal impacts include potential malicious or unintended uses (e.g., disinformation, generating fake profiles, surveillance), fairness considerations (e.g., deployment of technologies that could make decisions that unfairly impact specific groups), privacy considerations, and security considerations.
        \item The conference expects that many papers will be foundational research and not tied to particular applications, let alone deployments. However, if there is a direct path to any negative applications, the authors should point it out. For example, it is legitimate to point out that an improvement in the quality of generative models could be used to generate deepfakes for disinformation. On the other hand, it is not needed to point out that a generic algorithm for optimizing neural networks could enable people to train models that generate Deepfakes faster.
        \item The authors should consider possible harms that could arise when the technology is being used as intended and functioning correctly, harms that could arise when the technology is being used as intended but gives incorrect results, and harms following from (intentional or unintentional) misuse of the technology.
        \item If there are negative societal impacts, the authors could also discuss possible mitigation strategies (e.g., gated release of models, providing defenses in addition to attacks, mechanisms for monitoring misuse, mechanisms to monitor how a system learns from feedback over time, improving the efficiency and accessibility of ML).
    \end{itemize}
    
\item {\bf Safeguards}
    \item[] Question: Does the paper describe safeguards that have been put in place for responsible release of data or models that have a high risk for misuse (e.g., pretrained language models, image generators, or scraped datasets)?
    \item[] Answer: \answerNA{} 
    \item[] Justification: no models or data are in this paper. 
    \item[] Guidelines:
    \begin{itemize}
        \item The answer NA means that the paper poses no such risks.
        \item Released models that have a high risk for misuse or dual-use should be released with necessary safeguards to allow for controlled use of the model, for example by requiring that users adhere to usage guidelines or restrictions to access the model or implementing safety filters. 
        \item Datasets that have been scraped from the Internet could pose safety risks. The authors should describe how they avoided releasing unsafe images.
        \item We recognize that providing effective safeguards is challenging, and many papers do not require this, but we encourage authors to take this into account and make a best faith effort.
    \end{itemize}

\item {\bf Licenses for existing assets}
    \item[] Question: Are the creators or original owners of assets (e.g., code, data, models), used in the paper, properly credited and are the license and terms of use explicitly mentioned and properly respected?
    \item[] Answer: \answerNA{} 
    \item[] Justification: no code or models are used. 
    \item[] Guidelines:
    \begin{itemize}
        \item The answer NA means that the paper does not use existing assets.
        \item The authors should cite the original paper that produced the code package or dataset.
        \item The authors should state which version of the asset is used and, if possible, include a URL.
        \item The name of the license (e.g., CC-BY 4.0) should be included for each asset.
        \item For scraped data from a particular source (e.g., website), the copyright and terms of service of that source should be provided.
        \item If assets are released, the license, copyright information, and terms of use in the package should be provided. For popular datasets, \url{paperswithcode.com/datasets} has curated licenses for some datasets. Their licensing guide can help determine the license of a dataset.
        \item For existing datasets that are re-packaged, both the original license and the license of the derived asset (if it has changed) should be provided.
        \item If this information is not available online, the authors are encouraged to reach out to the asset's creators.
    \end{itemize}

\item {\bf New Assets}
    \item[] Question: Are new assets introduced in the paper well documented and is the documentation provided alongside the assets?
    \item[] Answer: \answerNA{} 
    \item[] Justification: No released assets
    \item[] Guidelines:
    \begin{itemize}
        \item The answer NA means that the paper does not release new assets.
        \item Researchers should communicate the details of the dataset/code/model as part of their submissions via structured templates. This includes details about training, license, limitations, etc. 
        \item The paper should discuss whether and how consent was obtained from people whose asset is used.
        \item At submission time, remember to anonymize your assets (if applicable). You can either create an anonymized URL or include an anonymized zip file.
    \end{itemize}

\item {\bf Crowdsourcing and Research with Human Subjects}
    \item[] Question: For crowdsourcing experiments and research with human subjects, does the paper include the full text of instructions given to participants and screenshots, if applicable, as well as details about compensation (if any)? 
    \item[] Answer: \answerNA{} 
    \item[] Justification: no human subjects
    \item[] Guidelines:
    \begin{itemize}
        \item The answer NA means that the paper does not involve crowdsourcing nor research with human subjects.
        \item Including this information in the supplemental material is fine, but if the main contribution of the paper involves human subjects, then as much detail as possible should be included in the main paper. 
        \item According to the NeurIPS Code of Ethics, workers involved in data collection, curation, or other labor should be paid at least the minimum wage in the country of the data collector. 
    \end{itemize}

\item {\bf Institutional Review Board (IRB) Approvals or Equivalent for Research with Human Subjects}
    \item[] Question: Does the paper describe potential risks incurred by study participants, whether such risks were disclosed to the subjects, and whether Institutional Review Board (IRB) approvals (or an equivalent approval/review based on the requirements of your country or institution) were obtained?
    \item[] Answer: \answerNA{}
    \item[] Justification: No human subjects
    \item[] Guidelines:
    \begin{itemize}
        \item The answer NA means that the paper does not involve crowdsourcing nor research with human subjects.
        \item Depending on the country in which research is conducted, IRB approval (or equivalent) may be required for any human subjects research. If you obtained IRB approval, you should clearly state this in the paper. 
        \item We recognize that the procedures for this may vary significantly between institutions and locations, and we expect authors to adhere to the NeurIPS Code of Ethics and the guidelines for their institution. 
        \item For initial submissions, do not include any information that would break anonymity (if applicable), such as the institution conducting the review.
    \end{itemize}

\end{enumerate}

\end{document}